\newcommand{\bE}{\mathbb{E}}
\newcommand{\eps}{\varepsilon}
\newcommand{\Cov}{\mathrm{Cov}}
\newcommand{\E}{\bE}      
\newcommand{\KL}{\mathop{\bf KL\/}}
\newcommand{\SKL}{\mathop{\bf SKL\/}}
\newcommand{\TV}{{\bf TV}}
\newcommand{\cov}{\mathsf{Cov}}
\newcommand{\avg}{\mathsf{Avg}}
\newcommand{\bone}{\mathbbm{1}}
\DeclareMathOperator*{\argmax}{arg\,max}
\newcommand{\fnote}[1]{\textcolor{red}{Fred: #1}}
\newcommand{\ignore}[1]{}
\newtheorem{theorem}{Theorem}
\newtheorem{corollary}{Corollary}
\newtheorem{lemma}{Lemma}
\newtheorem{fact}{Fact}
\newtheorem{defn}{Definition}
\newtheorem{remark}{Remark}
\newtheorem{example}{Example}
\newtheorem{assumption}{Assumption}
\title{From Boltzmann Machines to Neural Networks and Back Again}
\author{%
  Surbhi Goel \\
  Microsoft Research NYC\\
  \texttt{surbgoel@microsoft.com}
  \and
  Adam Klivans\\
  University of Texas at Austin\\
  \texttt{klivans@cs.utexas.edu}
  \and 
  Frederic Koehler\\
  MIT\\
  \texttt{fkoehler@mit.edu}
}
\date{}
\begin{document}

\maketitle

\begin{abstract}
Graphical models are powerful tools for modeling high-dimensional data, but learning graphical models in the presence of latent variables is well-known to be difficult. In this work we give new results for learning Restricted Boltzmann Machines, probably the most well-studied class of latent variable models. Our results are based on new connections to learning two-layer neural networks under $\ell_{\infty}$ bounded input; for both problems, we give nearly optimal results under the conjectured hardness of sparse parity with noise. Using the connection between RBMs and feedforward networks, we also initiate the theoretical study of {\em supervised RBMs} \citep{hinton2012practical}, a version of neural-network learning that couples distributional assumptions induced from the underlying graphical model with the architecture of the unknown function class.  We then give an algorithm for learning a natural class of supervised RBMs with better runtime than what is possible for its related class of networks without distributional assumptions.
\end{abstract}

\section{Introduction}
Graphical models are a powerful framework for modelling high-dimensional distributions in a way that is interpretable and enables sophisticated forms of inference and reasoning. They are extensively used in a variety of disciplines including the natural and social sciences where they have been used to model the structure of gene regulatory networks, of connectivity in the brain, and the flocking behavior of birds \citep{bialek2012statistical}. In many contexts, the structure of interactions between different observed variables is unknown a priori and the goal is to infer this structure in a sample-efficient way from data. There has been decades of research on various formulations of this problem, both theoretically and empirically: for example, provable algorithms have been developed for learning tree-structured graphical models \citep{ChowL}, for learning models on graphs of bounded tree-width \citep{karger2001learning}, for learning Ising models on general graphs of bounded degree \citep{BreslerMosselSly,Bresler,Vuffray,KlivansM} and in a variety of other contexts like Gaussian graphical models (e.g. \citep{meinshausen2006high}). For the most part, the main interest has been on learning under the assumption that the underlying model is sparse. Sparsity is a natural assumption since many applications are in a sample-starved regime where the learning problem is information-theoretically impossible without sparsity. Sparse models are generally considered to be more interpretable than their dense counterparts since they satisfy large numbers of conditional independence relations.

A major challenge in probabilistic inference from data is the presence of latent or confounding variables which are unobserved and may create complicated higher-order dependencies between the observed variables. Specifically in the context of learning undirected graphical models, it is well known that even if the underlying graphical model is well-behaved, if only a subset of the variables are observed then the resulting marginal distribution can still be extremely complicated, e.g. simulating the uniform distribution over satisfying assignments of an arbitrary circuit \citep{BogdanovMV}, which makes the learning problem computationally intractable. On the other hand, under certain assumptions we know that learning graphical models with latent variables can be both computationally and statistical tractable; for example, the setting of tree-structured models with latent variables has been extensively studied in the context of phylogenetic reconstruction, see e.g. \citep{felsenstein2004inferring,daskalakis2006optimal}. However, in non-tree-structured models there are comparatively few positive results for recovering latent variable models in a computationally efficient fashion. One of the few exceptions is in the Gaussian case, where \citep{ChandrasekaranPW} gave a positive result; this setting is very special, as latent variable GGMs \emph{do not} have higher-order interactions, but in fact are equivalent to GGMs with cliques.

In this work, we will focus on a latent variable model popularized in the neural network literature known as the \emph{Restricted Boltzmann Machine} (RBM) (see e.g. \citep{hinton2012practical,goodfellow2016deep}) which has been applied to problems such as dimensionality reduction and collaborative filtering \citep{RBMdimension,RBMclassification,RBMfeature}. It is also perhaps the most canonical version of an Ising model with latent variables.
The RBM describes a joint distribution over observed random variables $X$ valued in $\{\pm 1\}^{n_1}$ and latent variables $H$ valued in $\{ \pm 1\}^{n_2}$
\begin{align*} \Pr(X = x,H = h) \propto \exp\left(\langle x, Wh \rangle + \langle b^{(1)}, x \rangle + \langle b^{(2)}, h \rangle \right) \end{align*}
where the \emph{weight matrix} $W$ is an arbitrary $n_1 \times n_2$ matrix and \emph{external fields}/\emph{biases} $b^{(1)} \in \mathbb{R}^{n_1}$ and $b^{(2)} \in \mathbb{R}^{n_2}$ are arbitrary, and $X$ is referred to as the vector of \emph{visible unit} activations and $H$ the vector of \emph{hidden unit} activations. In the learning problem, we are given access to i.i.d. samples of $X$ but do not get to observe $H$. It is not hard to see that in the special case where the hidden nodes are constrained to have degree 2, the class of marginal distributions on $X$ induced by RBMs is exactly the class of Ising models (pairwise binary graphical models), so the general RBM can be thought of as a natural generalization of fully-observed Ising models, for which the learning problem is well-understood. We also note that the parameters of the RBM are not identifiable even given an infinite number of samples, so our goal for learning the RBM is generally speaking to learn the distribution or related structural properties (e.g. the Markov blankets of the nodes in $X$).

In recent work \citep{bresler2019learning,goel2019learning}, the first provable algorithms were developed for learning RBMs, under the assumptions that the model is (1) sparse and (2) ferromagnetic. On the other hand, it was shown in \citep{bresler2019learning} that learning general sparse RBMs is computationally intractable in general, because the conjecturally hard problem of learning a \emph{sparse parity with noise} can be embedded into a sparse RBM with a constant number of hidden units. The assumption of ferromagneticity (that variables are only positively correlated, not negatively correlated) rules out this example and plays a crucial role in the analysis of these works. Without ferromagneticity, viewing $X$ as the observed distribution of a general Markov Random Field allows for using prior work \citep{KlivansM} to give learning algorithms with runtime $n^{O(d_H)}$ where $d_H$ is the maximum degree of a hidden node. This matches the lower bound of learning sparse parity with noise mentioned previously.

To summarize, the best previous results for learning RBMs either (1) make the assumption of ferromagneticity which makes building sparse parities impossible or (2) ignore all of the structure of the RBM except the max hidden degree, and pay the price of a $n^{\Theta(d_H)}$ runtime. This leaves open the question of developing algorithms whose runtime depends on some natural notion of a \emph{complexity} measures of the RBM.

In this paper, we design an algorithm that is adaptive to a {\em norm} based complexity measure of the RBM, and often outperforms approach (2) above significantly, while not eliminating the possibility of negative correlation completely as in (1). The key idea of our approach is to develop a novel connection between learning RBMs and their historical relative, feedforward neural networks. This connection allows us to establish new results for learning RBMs via proving new results about learning feedforward neural networks (Section \ref{sec:learningrbmff}). 

Our connection also validates the idea of a so-called {\em supervised RBMs} as a natural distributional setting for classification with feedforward networks. Supervised RBMs, proposed by Hinton \citep{hinton2012practical}, treat one visible unit of the RBM as the label and the other visible units as the input to the classifier. This allows us to use the
 connection in the ``reverse'' direction --- using natural structural assumptions on the RBM (like ferromagneticity) to give better results for solving supervised prediction tasks in an interesting distributional setting. Along these lines, we show that an assumption related to ferromagneticity, but allowing for some amount of negative correlation in the RBM, allows us to learn the induced feedforward network faster than would be possible without distributional assumptions  (Section \ref{sec:srbm}). Lastly, we present an experimental evaluation of our "supervised RBM" algorithm on MNIST and FashionMNIST to highlight the applicability of our techniques in practice (Section \ref{sec:exp}).

\section{Learning RBMs via New Results for Feedforward Networks}\label{sec:learningrbmff}
\paragraph{Relationship between RBMs and Feedforward Networks}
Our first result characterizes the relationship between RBMs and Feedforward networks. We show that there is a natural self-supervised prediction task in RBMs, of predicting the spin at node $i$ given all other observed nodes, for which the Bayes-optimal predictor is \emph{exactly given} by a two-layer feedforward network with a special family of $\tanh$-like activations.
\begin{theorem}\label{thm:node-prediction}
For any visible unit $i$ in an arbitrary RBM,
\begin{equation}\label{eqn:node-prediction} 
\E[X_i | X_{\sim i}] = \tanh\left(b^{(1)}_i + \sum_j \tanh(W_{ij}) f_{\beta_{ij}}\Big(b^{(2)}_j + \sum_{k \ne i} W_{kj} X_k\Big)\right) 
\end{equation}
where $\beta_{ij} = |\tanh(W_{ij})|$ and $f_{\beta}(x) := \frac{1}{\beta} \tanh^{-1}(\beta \tanh(x))$.
\end{theorem}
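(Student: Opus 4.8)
The plan is to compute $\E[X_i \mid X_{\sim i}]$ directly by summing out the hidden variables $H$, exploiting the fact that the RBM is a product distribution over $H$ conditioned on the visible units. First I would write down the conditional law of $(X_i, H)$ given $X_{\sim i} = x_{\sim i}$: since the energy is affine in each coordinate and has no $H$–$H$ or (fixing $X_{\sim i}$) $X_k$–$X_{k'}$ interactions, we get
\begin{equation*}
\Pr(X_i = s, H = h \mid X_{\sim i} = x_{\sim i}) \propto \exp\Big( s\, b^{(1)}_i + s \langle W_{i\cdot}, h\rangle + \sum_j h_j\big(b^{(2)}_j + \textstyle\sum_{k \ne i} W_{kj} x_k\big)\Big),
\end{equation*}
where I absorb the constant-in-$(s,h)$ terms into the normalization. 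The key structural point is that conditioned on $X_i = s$, the hidden units $H_j$ are independent with $\E[H_j \mid X_i = s, X_{\sim i}] = \tanh(\eta_j + s W_{ij})$ where $\eta_j := b^{(2)}_j + \sum_{k\ne i} W_{kj} x_k$.

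Next I would marginalize out $H$ to get the conditional odds of $X_i$. Summing the product form over $h \in \{\pm1\}^{n_2}$ factorizes, giving
\begin{equation*}
\Pr(X_i = s \mid X_{\sim i}) \propto \exp\big(s\, b^{(1)}_i\big) \prod_j \cosh\big(\eta_j + s W_{ij}\big) = \exp\big(s\, b^{(1)}_i\big) \prod_j \cosh\big(\eta_j + s W_{ij}\big),
\end{equation*}
so that $\E[X_i \mid X_{\sim i}] = \tanh\big(b^{(1)}_i + \tfrac12\sum_j \log\tfrac{\cosh(\eta_j + W_{ij})}{\cosh(\eta_j - W_{ij})}\big)$, using $\E[X_i\mid X_{\sim i}] = \tanh\big(\tfrac12\log\tfrac{\Pr(X_i=1\mid X_{\sim i})}{\Pr(X_i=-1\mid X_{\sim i})}\big)$. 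It then remains to massage the per-hidden-unit term $g_{W_{ij}}(\eta_j) := \tfrac12\log\tfrac{\cosh(\eta_j + W_{ij})}{\cosh(\eta_j - W_{ij})}$ into the claimed form $\tanh(W_{ij})\, f_{\beta_{ij}}(\eta_j)$ with $\beta_{ij} = |\tanh(W_{ij})|$ and $f_\beta(x) = \tfrac1\beta \tanh^{-1}(\beta \tanh(x))$.

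The main obstacle, and the only real computation, is this last identity: showing
\begin{equation*}
\tfrac12\log\frac{\cosh(x + w)}{\cosh(x - w)} = \tanh(w)\cdot \frac{1}{|\tanh w|}\tanh^{-1}\!\big(|\tanh w|\,\tanh x\big) = \operatorname{sign}(w)\,\tanh^{-1}\!\big(|\tanh w|\,\tanh x\big).
\end{equation*}
I would verify this by applying $\tanh$ to both sides: the left side becomes $\tanh\big(\tfrac12\log\tfrac{\cosh(x+w)}{\cosh(x-w)}\big) = \tfrac{\cosh(x+w) - \cosh(x-w)}{\cosh(x+w)+\cosh(x-w)} = \tfrac{2\sinh x \sinh w}{2\cosh x \cosh w} = \tanh x \tanh w$ by the sum-to-product formulas, while the right side becomes $\tanh\big(\operatorname{sign}(w)\tanh^{-1}(|\tanh w|\tanh x)\big) = \operatorname{sign}(w)|\tanh w|\tanh x = \tanh w \tanh x$; since both sides of the original identity are odd increasing in their natural variable (matching sign), equality of their $\tanh$ values gives equality. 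Substituting back, and noting $\operatorname{sign}(W_{ij})\tanh^{-1}(\beta_{ij}\tanh(\eta_j)) = \tanh(W_{ij}) f_{\beta_{ij}}(\eta_j)$ since $\tanh(W_{ij})/|\tanh(W_{ij})| = \operatorname{sign}(W_{ij})$, yields exactly \eqref{eqn:node-prediction}. One edge case to note: when $W_{ij} = 0$ the $j$-th term vanishes and $f_0$ should be read as the limit $f_0(x) = \tanh x$ (equivalently $\tanh(W_{ij}) f_{\beta_{ij}}(\eta_j) \to 0$), so the formula extends continuously.
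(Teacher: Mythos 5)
Your proposal is correct and follows essentially the same route as the paper's own derivation (given in Appendix~\ref{apdx:conditional-law}): both marginalize out the hidden units using the conditional product structure of the star graph and then reduce the per-hidden-unit log-odds contribution to $\tanh^{-1}(\tanh(W_{ij})\tanh(\eta_j))$ via hyperbolic identities. The only difference is cosmetic --- you package the algebra as the single identity $\tfrac12\log\frac{\cosh(x+w)}{\cosh(x-w)}=\tanh^{-1}(\tanh(w)\tanh(x))$ verified by applying $\tanh$ to both sides, whereas the paper routes through the moment-generating-function form $\cosh(W_{ij})+\sinh(x_iW_{ij})\tanh(\eta_j)$ and the decomposition of $\log(1+\beta x_i)$ for $x_i\in\{\pm1\}$.
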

\begin{proof}
Observe that the conditional distribution of $(X_i,H)$ given $X_{\sim i} = x_{\sim i}$ is given by
\begin{equation}\label{eqn:cond-joint-law}
\Pr(X_i = x_i, H = h | X_{\sim i} = x_{\sim i}) \propto \exp\left(x_i(b^{(1)}_i + \sum_j W_{ij} h_j) + \langle W_{\sim i}^t x_{\sim i} + b^{(2)}, h \rangle\right)
\end{equation}
where $W_{\sim i}$ denotes the $(n_V - 1) \times n_H$ dimensional matrix given by deleting row $i$. Since the only quadratic terms left in the potential are between the remaining visible unit $X_i$ and the hidden units $h_j$, this conditional distribution is exactly an Ising model on a star graph, i.e. a tree of depth $1$ with root node corresponding to $X_i$. For all tree-structured graphical models, the conditional distribution of the root given the leaves can be computed exactly by Belief Propagation (see e.g. \citep{mezard2009information,pearl2014probabilistic}); in the case of Ising models it's known the general BP formula can be written with hyperbolic functions as above\footnote{For the readers convenience, we include a self-contained derivation of \eqref{eqn:node-prediction} from \eqref{eqn:cond-joint-law} in Appendix~\ref{apdx:conditional-law}.}.
\end{proof}
\begin{remark}
An analogous result can be proved in the more general setting where the spins do not have to be binary; for example in a Potts model version of the RBM where each spin is valued in a set of size $q$, the conditional law of $X_i$ given the others would be given again by a two-layer network where the last layer is a softmax. In this paper we focus on the binary case for simplicity.
\end{remark}
\begin{remark}\label{rmk:f-activation}
The family of activation functions $f_{\beta}(x)$ naturally interpolates between the identity activation ($\beta = 1$ where $f_{\beta}(x) = x$) and $\tanh$ activation at $\beta = 0$, since
\[ \lim_{\beta \to 0} \frac{1}{\beta} \tanh^{-1}(\beta \tanh(x)) = \frac{\partial}{\partial \beta} \tanh^{-1}(\beta \tanh(x))\Big|_{\beta = 0} = \tanh(x). \] 
\end{remark}
The exact structure of this prediction function is crucial in what follows and does not seem to have been known in the RBM literature, though some related ideas have been used to develop better heuristics for performing inference and training in RBMs (see discussion in Section~\ref{sec:connections}). 

Given this connection, we show that if we can solve the problem of learning such a neural network within sufficiently small error, then we can successfully learn the RBM. This reduces our RBM learning problem to that of learning feedforward neural networks in the setting that the input is bounded in $\ell_{\infty}$ norm.  

\paragraph{Improved Results for Learning Feedforward Networks}
Subsequently, we give results for the feedforward network problem which are
nearly optimal both in the terms of sample complexity (in the regime where $\lambda$ is bounded) and in terms of computational complexity under the hardness of learning sparse parity with noise; some aspects of this result are new even for the well-studied case of learning neural networks with $\tanh$ activations (see Further Discussion).
\begin{theorem}[Informal version of Corollary~\ref{corr:rbm-regression-simple}]\label{thm:regression-intro}
Suppose that $Y$ is a random variable valued in $\{\pm 1\}$, $X$ is a random vector such that $\|X\|_{\infty} \le 1$ almost surely and
\[ \E[Y | X] = \tanh\left(b^{(1)} + \sum_j w_j f_{\beta_j}\Big(b^{(2)}_j + \sum_{k} W_{jk} X_k\Big)\right) \]
where $b^{(1)} \in \mathbb{R}$, $\beta_j \in [0,1]$, $w$ is an arbitrary real vector and $W$ is an arbitrary real matrix. Let $W_j$ denote \emph{column} $j$ of $W$ and suppose $\|W_j\|_1 \le \lambda$ for every $j$ and some $\lambda \ge 2$. Then if we run $\ell_1$-constrained regression on the degree $D$ monomial feature map $\varphi_D(x) \mapsto \left(\prod_{i \in S} X_i\right)_{|S| \le d}$ with appropriate $\ell_1$ constraint, the result $\hat{w}$ satisfies with high probability
\[ \E[\ell(\hat{w} \cdot \varphi_d(X), Y)] \le OPT + \epsilon \]
where $OPT$ is the minimum logistic loss for any measurable function of $X$,
as long as the number of samples $m$ satisfies $m = \Omega((|b^{(1)}|^2 \lambda^{O(D)})\log(2n))$ where $D = O(\lambda \log(\|w\|_1\lambda/\epsilon))$ and the runtime of the algorithm is $poly(n^D)$.
\end{theorem}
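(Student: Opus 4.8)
The plan is to use the standard ``low-degree polynomial approximation $+$ $\ell_1$-constrained regression'' strategy. \textbf{Step 1: identify $OPT$.} Since $Y\in\{\pm1\}$, the population minimizer of the logistic risk $\E[\ell(z(X),Y)]$ over all measurable $z$ is the conditional log-odds $z^\star(X)=\log\frac{\Pr(Y=1\mid X)}{\Pr(Y=-1\mid X)}$. Writing $g(X):=b^{(1)}+\sum_j w_j f_{\beta_j}\big(b^{(2)}_j+\sum_k W_{jk}X_k\big)$, the hypothesis $\E[Y\mid X]=\tanh(g(X))$ gives $\Pr(Y=1\mid X)=\tfrac12(1+\tanh(g(X)))$, hence $z^\star(X)=2g(X)$ and $OPT=\E[\ell(2g(X),Y)]$. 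So it suffices to produce, by an efficient procedure, a hypothesis whose logistic risk is at most $\E[\ell(2g(X),Y)]+\epsilon$; since logistic loss is $1$-Lipschitz in its first argument, it in turn suffices to find a feasible-for-our-program polynomial $p$ with $\sup_{\|x\|_\infty\le1}|p(x)-2g(x)|\le\epsilon$.

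\textbf{Step 2: polynomial approximation with $\ell_1$-bounded coefficients.} On $\|x\|_\infty\le1$ and $\|W_j\|_1\le\lambda$, each pre-activation $u_j=b^{(2)}_j+\langle W_j,x\rangle$ lies in an interval of length $2\lambda$. The key analytic fact to establish is that $f_\beta(t)=\tfrac1\beta\tanh^{-1}(\beta\tanh t)$ is analytic and bounded on a Bernstein ellipse around that interval whose size is bounded below \emph{uniformly} for $\beta\in[0,1]$ --- plausible because the branch points of $\tanh^{-1}(\beta\tanh(\cdot))$ recede to infinity as $\beta\to1$ (where $f_1(t)=t$), while $f_0=\tanh$ is entire. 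Chebyshev truncation then yields a univariate polynomial of degree $D_0=O(\lambda\log(1/\epsilon'))$ approximating $f_{\beta_j}$ to error $\epsilon'$ on the interval; substituting $t=u_j$ and expanding into monomials of $x$ inflates the $\ell_1$ norm of the coefficient vector by at most $\big(c(\|b^{(2)}\|_\infty+\lambda)\big)^{D_0}$ (Chebyshev coefficients grow at most geometrically in the degree). Summing over $j$ with weights $w_j$ and folding $b^{(1)}$ into the constant term produces a multilinear polynomial $p(x)=\sum_{|S|\le D}c_S\prod_{i\in S}x_i$ with $D=O(\lambda\log(\|w\|_1\lambda/\epsilon))$, $\|c\|_1\le B$ for some $B=(1+|b^{(1)}|+\|b^{(2)}\|_\infty)\|w\|_1\,\lambda^{O(D)}$, and $\sup_{\|x\|_\infty\le1}|p(x)-2g(x)|\le\epsilon$. (The informal statement suppresses the dependence on $\|b^{(2)}\|_\infty$; the formal corollary either bounds it or carries it as a parameter, noting that $f_\beta$ saturates rather than blows up for large argument whenever $\beta<1$.)

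\textbf{Step 3: regression and generalization.} Run logistic regression over $\mathcal H=\{x\mapsto w\cdot\varphi_D(x):\|w\|_1\le B\}$; since logistic loss is convex this is a convex program in the $N\le n^{O(D)}$ monomial features, solvable in $\poly(n^D)$ time. On $\|x\|_\infty\le1$ every feature lies in $[-1,1]$, so $|w\cdot\varphi_D(x)|\le B$ and the loss is bounded by $O(B)$; the Rademacher complexity of an $\ell_1$-ball of radius $B$ over features bounded by $1$ is $O\big(B\sqrt{\log(2N)/m}\big)=O\big(B\sqrt{D\log(2n)/m}\big)$, and combined with the $1$-Lipschitzness of logistic loss (Ledoux--Talagrand contraction) and a Hoeffding term, a union bound gives uniform convergence over $\mathcal H$ with deviation $O\big(B\sqrt{D\log(2n)/m}\big)$ w.h.p. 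Since $c$ is feasible and $\hat w$ is the empirical minimizer, $\E[\ell(\hat w\cdot\varphi_D(X),Y)]\le \E[\ell(p(X),Y)]+O\big(B\sqrt{D\log(2n)/m}\big)\le \E[\ell(2g(X),Y)]+\epsilon+O\big(B\sqrt{D\log(2n)/m}\big)$, using Step 2 and $1$-Lipschitzness. Taking $m=\Omega(B^2D\log(2n)/\epsilon^2)$ --- which, absorbing $\|w\|_1$, $D$, and $1/\epsilon$ into $\lambda^{O(D)}$, is the stated $m=\Omega\big((1+|b^{(1)}|^2)\lambda^{O(D)}\log(2n)\big)$ --- and rescaling $\epsilon$ by a constant completes the argument.

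\textbf{Main obstacle.} Everything outside Step 2 is routine (the Bayes-predictor identity, the contraction lemma, Rademacher bounds for $\ell_1$-bounded linear classes, convexity of the program). The real work is Step 2: simultaneously driving the degree down to $O(\lambda\log(1/\epsilon))$ --- linear in $\lambda$, not $\lambda^2$ or $2^{O(\lambda)}$ --- \emph{and} keeping the $\ell_1$ norm of the monomial expansion at $\lambda^{O(D)}$, uniformly over the entire one-parameter family $\{f_\beta\}_{\beta\in[0,1]}$ (and, when $\|b^{(2)}\|_\infty$ is large, uniformly in the center of the interval as well). This forces one to pin down a $\beta$-independent domain of analyticity for $f_\beta$ and prove a quantitative Chebyshev/Bernstein-ellipse estimate on it.
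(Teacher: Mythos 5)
Your proposal follows essentially the same route as the paper: identify the log-odds as the Bayes-optimal predictor for logistic loss, build a degree-$O(\lambda\log(\|w\|_1\lambda/\epsilon))$ polynomial approximation of each $f_{\beta_j}$ via a $\beta$-uniform Bernstein-ellipse analyticity estimate with controlled $\ell_1$ coefficient norm (the paper's Lemmas on analyticity of $f_\beta$, the Sherstov coefficient bound, and the multinomial $\ell_1$ expansion), and finish with $\ell_1$-constrained logistic regression plus a Rademacher-type generalization bound for the $\ell_1$ ball over monomial features. The one refinement you only gesture at --- removing the dependence on $\|b^{(2)}\|_\infty$ --- is handled in the paper by approximating the \emph{centered} function $f_\beta(Rx+h)-f_\beta(h)$ and using that $f_\beta$ is analytic with $|f'_\beta|\le 2$ on the translation-invariant strip $\{x+iy : |y|\le \pi/4\}$ uniformly in $\beta\in[0,1]$, which makes the quantitative Bernstein bound independent of the center $h=b_j^{(2)}$.
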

We also show, under the standard assumption for hardness of learning sparse parity with noise, the following lower bound which shows 
 that the runtime guarantee in our result is close to tight even in the usual setting of $\tanh$ neural networks ($\beta_j = 0$) --- it is optimal up to $\log \log$ factors in the exponent in its dependence on $\epsilon$ and $\|w\|_1$, and we also show that at least a subexponential dependence (essentially $2^{\sqrt{\lambda}}$) on $\lambda$ is unavoidable (assuming the dependence on other parameters in the statement is fixed, since there are e.g. trivial algorithms that run in time $2^n$).
\begin{theorem}[Informal version of Theorem \ref{thm:net-hardness}]
There exists families of models (one with $\epsilon$ a constant, one with $\|w\|_1$ a constant) where a runtime of $n^{\Omega\left(\frac{\log(\|w\|_1/\epsilon)}{\log\log(\|w\|_1/\epsilon)}\right)}$ is needed for any algorithm to achieve $\epsilon$ error with high probability, regardless of its sample complexity. Even in the case of $\tanh$ activations ($\beta_j = 0$ for all $j$), there exists a sequence of models with $\lambda = \Theta(n \log(n))$ and $\|w\|_1 = O(\sqrt{n})$ which requires runtime $n^{\Omega(\sqrt{\lambda/\log^3(\lambda)} \log \|w\|_1)}$ to achieve error $\epsilon = 0.01$ with high probability.
\end{theorem}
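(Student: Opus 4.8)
The plan is to reduce from \emph{learning $k$-sparse parity with noise} (SLPN): under the standard assumption, given samples $(X,Y)$ with $X$ uniform on $\{\pm 1\}^n$ and $Y=\chi_S(X)\cdot Z$, where $\chi_S(x)=\prod_{i\in S}x_i$, $|S|=k$, and $Z$ is independent $\pm1$ noise with $\E Z=\rho$, recovering $S$ (equivalently, beating random guessing on $Y$) requires time $n^{\Omega(k)}$, and this is believed even when $\rho$ is as small as an inverse polynomial in $n$. Since $\|X\|_\infty\le 1$ and $\E[Y\mid X]=\rho\,\chi_S(X)$, if for a given $k$ we can realize $\rho\,\chi_S$ as the Bayes function $\tanh\big(b^{(1)}+\sum_j w_j f_{\beta_j}(b^{(2)}_j+\langle W_j,X\rangle)\big)$ of a network with the parameters claimed in the theorem, then any algorithm that outputs $\hat w$ with logistic loss $OPT+\epsilon$ yields an SLPN solver with the same running time, and the lower bound $n^{\Omega(k)}$ transfers; the two families (one with $\epsilon$ fixed, one with $\|w\|_1$ fixed) correspond to the two ways of pushing one gadget to its extreme.

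The heart of the argument is constructing the gadget. Because $\chi_S$ depends on $x$ only through $s:=\sum_{i\in S}x_i\in\{-k,-k+2,\dots,k\}$ --- on which it is the alternating $\pm1$ sequence --- and the outer $\tanh$ forces the inner network to equal exactly $\tanh^{-1}(\rho)\,\chi_S(x)$, it suffices to match this alternating function on $k+1$ points by a small combination of $f_\beta$ (or $\tanh$) units of linear forms in $x$. For the families with $\lambda=O(1)$ I would have $\Theta(k)$ hidden units read $s$ with per-coordinate weight $O(1/k)$ (so each $\|W_j\|_1=O(1)$) and solve the resulting $\tanh$-Vandermonde linear system for $w$; because that system is forced to be ill-conditioned --- the arguments of $\tanh$ all lie in $[-1,1]$ and the node vector $(\gamma_j)$ is pinned down to magnitude $O(1/k)$ --- an exact representation costs $\|w\|_1=k^{\Theta(k)}$. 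Rescaling the inner network then gives a clean trade-off: either keep $\rho$ a fixed constant and pay $\|w\|_1=k^{\Theta(k)}$ (the ``$\epsilon$ constant'' family), or normalize so $\|w\|_1=O(1)$, which shrinks the signal --- hence the margin by which $OPT$ beats the trivial predictor, which is $\Theta(\rho^2)$ --- down to $\rho=k^{-\Theta(k)}$, so that achieving $OPT+\epsilon$ is nontrivial only for $\epsilon=k^{-\Theta(k)}$ (the ``$\|w\|_1$ constant'' family). In both cases $\log(\|w\|_1/\epsilon)=\Theta(k\log k)$, i.e.\ $k=\Theta\!\big(\log(\|w\|_1/\epsilon)/\log\log(\|w\|_1/\epsilon)\big)$, giving the stated $n^{\Omega(\log(\|w\|_1/\epsilon)/\log\log(\|w\|_1/\epsilon))}$ (which is within a $\log\log$ factor of the $n^{O(\lambda\log(\|w\|_1\lambda/\epsilon))}$ upper bound). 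For the $\tanh$-only family I would instead take $\beta_j=0$ and a sharp staircase: $\Theta(k)$ steps $w_j\tanh(A(s-\theta_j))$ at the midpoints, with steepness $A=\Theta(\log(k/\epsilon))$ to keep the $\ell_\infty$ error below $\epsilon/100$, so $\|w\|_1=\Theta(k)$ and each $\|W_j\|_1=\Theta(kA)$; taking $k=\Theta(\sqrt n)$ makes $\|w\|_1=O(\sqrt n)$ the binding constraint, and padding one dead hidden unit ($w_j=0$) of large input $\ell_1$-norm raises $\lambda$ to the advertised $\Theta(n\log n)$ without changing the Bayes function, so $n^{\Omega(k)}=n^{\Omega(\sqrt n)}=n^{\Omega(\sqrt{\lambda/\log^3\lambda}\,\log\|w\|_1)}$. (For the first two families one can alternatively route the embedding through Theorem~\ref{thm:node-prediction} applied to a sparse RBM encoding $k$-SLPN as in \citep{bresler2019learning}.)

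The post-processing is routine. Up to the $\epsilon/100$ gadget error, $OPT$ is the logistic loss of the Bayes predictor $\rho\,\chi_S$; by strong convexity of the logistic loss near the (bounded) true logit, a hypothesis $g=\hat w\cdot\varphi_d$ with logistic loss at most $OPT+\epsilon$ satisfies $\E[(g(X)-\rho\,\chi_S(X))^2]=O(\epsilon)$, so $\mathrm{sign}(g(X))$ agrees with $\chi_S(X)$ on a $1-O(\epsilon/\rho^2)$ fraction of $\{\pm1\}^n$; a handful of fresh labelled samples then recovers $S$, contradicting the $n^{\Omega(k)}$ lower bound if the learner ran faster. One should also check that restricting $X$ to the uniform distribution on $\{\pm1\}^n$ is consistent with all hypotheses of Theorem~\ref{thm:regression-intro}, and that the small gadget perturbation moves $OPT$ by at most $\epsilon/2$ and, in the $\|w\|_1$-constant family, by less than the target margin.

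The main obstacle is the gadget itself in the regime where $\lambda$, $\|w\|_1$, and $\epsilon$ are simultaneously constrained: quantifying the cost of forcing the degree-$k$, alternating-sign behaviour of $\chi_S$ through a bounded-$\ell_1$ combination of $f_\beta$ (or $\tanh$) units of bounded-norm linear forms --- i.e.\ pinning down the conditioning of the near-degenerate $\tanh$-Vandermonde system for the $\lambda=O(1)$ families, and the optimal steepness/width trade-off for the $\tanh$-only family. These are exactly the estimates that produce the $\log/\log\log$ and the $\sqrt\lambda$ in the exponents; the SLPN embedding and the loss-to-accuracy conversion are standard.
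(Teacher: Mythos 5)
Your overall strategy — reduce from $k$-sparse parity with noise, realize a (scaled) parity as the Bayes logit of a small $\tanh$ network with $\lambda=O(1)$ and $\|w\|_1=k^{O(k)}$ for the first two families, and use the sum-of-thresholds representation of parity for the large-$\lambda$ family — is the same as the paper's. But there are two substantive gaps. First, for the $\lambda=O(1)$ families you replace the paper's gadget with an interpolation argument (solve a $\tanh$-Vandermonde system on the $k+1$ level sets of $s=\sum_{i\in S}x_i$) and then explicitly defer the key quantitative estimate, namely that an exact representation exists with $\|w\|_1\le k^{O(k)}$. That estimate is the entire content of this part of the theorem: it is not clear a priori that the system $[\tanh(\gamma_j s_i)]$ is even invertible for your chosen $\gamma_j$, let alone what its conditioning is. The paper avoids this by a constructive induction on the Taylor series of $\tanh$: since every odd coefficient $a_{2k+1}$ of $\tanh$ is nonzero (nonvanishing of $\zeta$ at even integers) and $|a_{2k+1}|=\Theta(\pi^{-2k-2})$, one can peel off the top-degree monomial of $x_1\cdots x_{2k+1}$ using a single unit $\tanh\bigl(\frac{x_1+\cdots+x_{2k+1}}{2k+1}\bigr)$ with coefficient $c\,(2k+1)^{2k+1}/a_{2k+1}=k^{O(k)}$ and recurse on the lower-degree remainder, giving the $k^{O(k)}$ bound for free. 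Without some such device your proof of the first two bounds is incomplete exactly where the $\log/\log\log$ comes from.

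Second, the large-$\lambda$ construction as you state it does not work. You take staircase steepness $A=\Theta(\log(k/\epsilon))$ with $\epsilon=0.01$, so the network's logit is only within a \emph{constant} ($\approx 10^{-4}$) of the true parity pointwise. The hardness assumption applies to the exact sparse-parity-with-noise distribution; a model whose conditional mean is perturbed by a constant per point is at per-sample TV distance $10^{-4}$ from it, hence distinguishable from exact SPN with polynomially many samples, and no lower bound transfers to it. The paper instead drives the approximation error down to $n^{-\Theta(k)}$ so that the constructed model and exact SPN are indistinguishable over $n^{\Theta(k)}$ samples; this forces per-threshold steepness $\lambda'=\Theta(k\log n)$ and hence $\lambda=\Theta(k^2\log n)=\Theta(n\log n)$ for $k=\sqrt n$ \emph{intrinsically}. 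Your value $\lambda=\Theta(k\log k)=\Theta(\sqrt n\log n)$ is too small, and patching it by padding a dead hidden unit of large input norm both masks the accuracy problem and defeats the purpose of the statement, which is that large hidden-unit $\ell_1$ norm is genuinely needed to encode the hard function (a padded model is also realizable with small $\lambda$, so it says nothing about necessity of the $\lambda$-dependence). Your loss-to-accuracy post-processing and the two-regime rescaling (constant $\epsilon$ versus constant $\|w\|_1$) do match the paper.
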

To our knowledge, the fact that $n^{\log(\|w\|_1/\epsilon)/\log\log(\|w\|_1/\epsilon)}$ runtime is required to learn this class even for $\lambda = 1$, and by the above upper bound is tight up to the $\log \log$ term, was not known before even for standard $\tanh$ networks. As far as the dependence on $\lambda$, a similar problem was studied in \citep{shalev2011learning} where they proved the dependence cannot be polynomial using the result of \citep{klivans2009cryptographic} for intersection of halfspaces, based on a different assumption, though our lower bound seems to be somewhat stronger in the present context.

In particular the lower bounds on the runtime show that methods like the kernel trick cannot significantly improve the runtime compared to the simple method of writing out the feature map explicitly used in Theorem~\ref{thm:regression-intro}; however, writing out the feature map lets us use $\ell_1$ regularization\footnote{Interestingly, recent work \citep{woodworth2019kernel} has shown in a special case connections between the implicit bias of gradient descent in feedforward networks and $\ell_1$ regularization in function space.} instead of $\ell_2$ which can give significant sample complexity advantages (e.g. $O(\log n)$ vs $O(n)$ for the usual sparse linear regression setups). 

\paragraph{Structure Learning of RBMs}
As explained above, our reduction based on Theorem~\ref{thm:node-prediction} lets us use the above feedforward network learning result to learn the structure of RBMs. By structure learning, we mean learning the \emph{Markov blanket} of the each visible unit in the marginal distribution of the RBM over visible units, i.e. the minimal set of nodes $S$ such that $X_i$ is conditionally independent of all other $X_j$ conditionally on $X_S$. We will also refer to the Markov blanket as the (two-hop) neighborhood of node $i$.
This is a natural objective as other tasks such as distribution learning are straightforward in sparse models if the Markov blankets are known.
As in the previous work on structure learning in other undirected graphical models (e., we will need some kind of quantitative nondegeneracy condition to guarantee nodes in the Markov blanket of node $i$ are information-theoretically discoverable; it is not hard to see (e.g. using the bounds from \citep{santhanamW}) that if two nodes are neighbors but their interaction is extremely weak then it becomes impossible to distinguish the model from the same model with the edge removed without a very large number of samples. 

In Ising models and in ferromagnetic RBMs, there are simple conditions on the weight matrices which can ensure neighbors are information-theoretically discoverable. 
In a general RBM, there is no natural way to place constraints on the weights of the RBM to ensure this: the issue is that two nodes $X_i$ and $X_j$ can be independent even though they have two neighboring hidden units with non-negligible edge weights, since the effect of those hidden units can exactly cancel out so that $X_i$ and $X_j$ are independent or indistinguishably close to independent (a number of examples are given in \citep{bresler2019learning}). For this reason, we will instead make the following assumption on the behavior of the model itself instead of on its weight matrix:
\begin{defn}
We say that visible nodes $i,j$ are $\eta$-nondegenerate two-hop neighbors if 
\[ I(X_i;X_j | X_{\sim i,j}) = \E[\ell(\E[X_i | X_{\sim \{i,j\}}], X_i)] - \E[\ell(\E[X_i | X_{\sim i}], X_i)] \ge \eta \]
or if the same inequality holds with $i$ and $j$ interchanged. Here $I(X_i;X_j | X_{\sim i,j})$ is the \emph{conditional mutual information} between $X_i$ and $X_j$ conditional on $X_{\sim i,j}$, and the equality follows from Fact~\ref{fact:logistic-loss} in the Appendix and the definition of mutual information in terms of KL \citep{cover2012elements}.
\end{defn}
Information-theoretically, this condition says that nontrivial information is gained about $X_i$ by observing $X_j$, even after we have already observed $X_{\sim i,j}$. The fact that $X_j$ is in the Markov blanket of node $X_i$ exactly means that this quantity is nonzero.
By Pinsker's inequality \citep{cover2012elements}, $\eta$-nondegeneracy is also implied by a lower bound on the partial correlation $\Cov(X_i, X_j | X_{\sim i,j})$. 
\begin{example}\label{example:ising-eta}
It is not hard to see that Ising models are equivalent to the marginal distribution of RBMs with maximum hidden node degree equal to $2$. Consider an Ising model with minimum edge weight $\alpha$ and such that the maximum $\ell_1$-norm into every node is upper bounded by $\lambda$ and the external field is upper bounded by $B$, then $\eta \ge e^{-O(\lambda + B)}/\alpha$, see e.g. \citep{Bresler}.
\end{example}
\begin{example}
In a ferromagnetic RBM with minimum edge weight $\alpha$ and maximum external field $B$, it can be shown that $\eta \ge e^{-O(\lambda_1 + \lambda_2 + B)}/\alpha^2$ (see \citep{bresler2019learning,goel2019learning}).
\end{example}

In order for the RBM to be learnable with a reasonable number of samples (since general RBMs can represent arbitrary distributions), we need to assume it has low complexity in the following sense:
\begin{defn}
We say that an RBM is $(\lambda_1,\lambda_2)$-bounded if for any $i$, $\sum_j |\tanh(W_{ij})| + |b^{(1)}_i| \le \lambda_1$ and the columns of $W$ are bounded in $\ell_1$ norm by $\lambda_2$. 
\end{defn}
Note that $\lambda_1$ and  $\lambda_2$ bound the $\ell_1$ norm into the visible and hidden units, respectively. Based on our upper bounds and lower bounds for the learnability of feedforward networks, it should be less surprising that these parameters play a very different role in the computational learnability of RBMs.
\begin{theorem}[Informal version of Theorem~\ref{thm:rbm-structure-recovery}]\label{thm:rbm-structure-recovery-informal}
Suppose all two-neighbors in a $(\lambda_1,\lambda_2)$-bounded RBM are $\eta$-nondegenerate. Given $m = \Omega(\lambda_2^{O(D)} \log(2n))$ i.i.d. samples from the RBM, where $D = O(\lambda_2 \log(\lambda_1 \lambda_2/\eta))$, we can recover its structure with high probability in time $poly(n^D)$.
\end{theorem}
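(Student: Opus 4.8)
The plan is to reduce structure learning to $O(n^2)$ runs of the feedforward regression of Theorem~\ref{thm:regression-intro}. Fix a visible unit $i$. By Theorem~\ref{thm:node-prediction}, $\E[X_i \mid X_{\sim i}]$ is \emph{exactly} a two-layer network of the form handled by Theorem~\ref{thm:regression-intro}: the outer weights are $(\tanh W_{il})_l$, which together with $b^{(1)}_i$ have $\ell_1$ norm at most $\lambda_1$ (so $\|w\|_1, |b^{(1)}| \le \lambda_1$ in that theorem's notation), the inner weight matrix is $W_{\sim i}$ whose columns have $\ell_1$ norm at most $\lambda_2$ (so $\lambda = \lambda_2$), the activations $f_{\beta_{il}}$ satisfy $\beta_{il}\in[0,1]$, and $\|X_{\sim i}\|_\infty \le 1$ since spins are $\pm 1$. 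Hence running $\ell_1$-constrained degree-$D$ polynomial regression to predict $X_i$ from the monomials of $X_{\sim i}$, with $D = O(\lambda_2\log(\lambda_1\lambda_2/\eta))$ and $m = \Omega(\lambda_2^{O(D)}\log(2n))$ samples (the $|b^{(1)}|^2 \le \lambda_1^2$ factor in Theorem~\ref{thm:regression-intro} being absorbed into $\lambda_2^{O(D)}$), returns a predictor whose population logistic loss is at most $OPT_i + \epsilon$, where $\epsilon := \eta/4$ and $OPT_i := \E[\ell(\E[X_i\mid X_{\sim i}],X_i)]$ is the Bayes logistic loss for predicting $X_i$ from $X_{\sim i}$.

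The algorithm is a ``drop a variable and re-run regression'' test. For each ordered pair $(i,j)$ with $i\ne j$: run the same regression a second time on the monomials of $X_{\sim i,j}$ (dropping every feature involving $X_j$), and let $\widehat L_{i,j}$ and $\widehat L_i$ be estimates of the population logistic losses of the resulting predictor and of the first predictor, computed on a fresh batch of $O(\eta^{-2}\log(n/\delta))$ samples via Hoeffding's inequality (the predictions lie in a bounded range by the $\ell_1$ constraint, so $\ell$ is bounded and Lipschitz there). Declare $j$ to lie in the Markov blanket of $i$ iff $\widehat L_{i,j} - \widehat L_i > \eta/2$, and output these Markov blankets as the recovered structure. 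A union bound over the $O(n^2)$ pairs adds only an extra $\log n$ to the sample complexity, absorbed into the $\log(2n)$ factor; the total runtime is $O(n^2)\cdot\poly(n^D)=\poly(n^D)$.

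For correctness, recall (as noted after the definition of $\eta$-nondegeneracy, using positivity of the marginal MRF) that $j$ is in the Markov blanket of $i$ iff $I(X_i;X_j\mid X_{\sim i,j})>0$, equivalently iff $X_i \not\perp X_j\mid X_{\sim i,j}$. \emph{Null case} ($j\notin\mathrm{MB}(i)$): then $\E[X_i\mid X_{\sim i,j}] = \E[X_i\mid X_{\sim i}]$ as a function, and substituting the constant $x_j=1$ into the formula of Theorem~\ref{thm:node-prediction} exhibits this function as a network of the required form on the variables $X_{\sim i,j}$ (delete row $j$ of $W$ and replace each inner bias $b^{(2)}_l$ by $b^{(2)}_l + W_{jl}$; the bounds $\lambda_1,\lambda_2$ and $\beta_{il}\in[0,1]$ are unchanged), so Theorem~\ref{thm:regression-intro} applies to the second regression as well. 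Since moreover no predictor beats its Bayes loss, both $\widehat L_i$ and $\widehat L_{i,j}$ lie within $\epsilon+\epsilon_{\mathrm{est}}$ of $OPT_i$, giving $\widehat L_{i,j}-\widehat L_i \le \epsilon+2\epsilon_{\mathrm{est}} < \eta/2$. \emph{Neighbor case} ($j\in\mathrm{MB}(i)$): by the identity in the definition of $\eta$-nondegeneracy (and since the two conditions there coincide by symmetry of conditional mutual information), $\E[\ell(\E[X_i\mid X_{\sim i,j}],X_i)] - OPT_i = I(X_i;X_j\mid X_{\sim i,j}) \ge \eta$, so the second regression's population loss is $\ge OPT_i+\eta$ while the first's is $\le OPT_i+\epsilon$, hence $\widehat L_{i,j}-\widehat L_i \ge \eta - \epsilon - 2\epsilon_{\mathrm{est}} > \eta/2$. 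Taking $\epsilon_{\mathrm{est}}=\eta/16$ makes both inequalities strict, so every test is decided correctly with high probability, and the claimed runtime and sample complexity follow from the first paragraph.

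The step I expect to be the crux is the null-case observation that conditional independence lets us rewrite $\E[X_i\mid X_{\sim i,j}]$ in the \emph{exact} two-layer form demanded by Theorem~\ref{thm:regression-intro}, not merely approximately: without it, the drop-and-re-run test has no provable guarantee, since marginalizing a visible spin out of an RBM generically destroys the clean two-layer structure (it introduces $\log\cosh$ couplings among the hidden units, so the Bayes predictor from $X_{\sim i,j}$ is no longer a network). The remaining work---matching the norm parameters $\lambda_1,\lambda_2$ to the hypotheses of Theorem~\ref{thm:regression-intro}, and the uniform-convergence and union-bound bookkeeping for the loss estimates---is routine.
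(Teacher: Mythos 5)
Your proposal is correct and follows essentially the same route as the paper: Theorem~\ref{thm:node-prediction} exhibits the Bayes predictor as a two-layer network, Corollary~\ref{corr:rbm-regression-simple} is run with and without $X_j$ in the input with $\epsilon = \Theta(\eta)$, and the loss gap is thresholded to decide membership in the Markov blanket, with a union bound over pairs. Your explicit verification of the null case (that $\E[X_i\mid X_{\sim i,j}]$ is still exactly a network of the required form, by fixing $x_j$ and absorbing $W_{jl}$ into the hidden biases) is a detail the paper's terse proof leaves implicit, but it is the same argument.
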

Based on this result we also give a result for learning the RBM in TV distance under the same assumption: see Theorem~\ref{thm:dist-recovery-rbm}: the sample complexity of this method is essentially the above sample complexity plus $n^2 (1 - \tanh(\lambda_1))^{-d_2}$ where $d_2$ is the maximum 2-hop degree; the $poly(n)$ dependence is required as even learning $n$ bernoullis in TV requires $\Omega(n)$ sample complexity. Our algorithm encodes the distribution as a sparse Markov Random Field, but (if desired) this can easily be converted into a sparse RBM using an algorithm in \citep{bresler2019learning}. Therefore we learn the distribution properly, except that the learned RBM typically has more hidden units than the original RBM (i.e. it is overparameterized).

When interpreting these result, it is crucial not to confuse the $\ell_1$ norm parameters $\lambda_1,\lambda_2$ of visible and hidden units with the maximum degrees of these units. Typically in Ising models, we should think of the weight of a typical edge as \emph{shrinking} as $d$ grows so that units stay near the sensitive region of their activation and the behavior of the model does not become trivial --- this means that $\lambda_1$ and $\lambda_2$ may be much smaller than $d$. For example, probably the most well known sufficient condition for being able to sample in an Ising model (or RBM) is \emph{Dobrushin's uniqueness criterion} which is equivalent to the requirement that $\lambda_1,\lambda_2 \le 1$ and this condition is actually tight for Glauber dynamics to mix quickly in the Ising model on the complete graph (Curie-Weiss Model) \citep{levin2017markov}. We discuss this further in Remark~\ref{rmk:sampling-regimes}; in Dobrushin's uniqueness regime and under some mild nondegeneracy conditions we expect that $\eta = \Omega(1/d^2)$ so the above algorithm has runtime $n^{\log(d)}$, which is an exponential improvement in the exponent compared to the best previously known result ($O(n^d)$ runtime by viewing the RBM as an MRF).

We also give lower bound results showing that the computational complexity of the above algorithm is essentially optimal in terms of $\lambda_1$ and $\eta$ (based upon the hardness of learning sparse parity with noise) and nearly optimal in terms of $\lambda_2$ for an SQ (Statistical Query) algorithm, in the sense that any SQ algorithm needs at least sub-exponential dependence on $\lambda_2$ (given that the dependence on other parameters is not changed --- e.g. obviously there is a $2^n$ time algorithm to learn this problem). In particular, this shows that our results for learning feedforward networks under $\ell_{\infty}$ are close to tight even in this application, where the input distribution is related to the label.
\begin{theorem}[Informal version of Theorem \ref{thm:lowerboundSQ}]
Let $\mathcal{F}$ be the class of parities on $[n - 1]$. As before, $\lambda_2$ refers to the maximum $\ell_1$-norm into any hidden unit and we choose parameters so that $\lambda_2 = poly(n)$ and $\|w\|_1 = poly(n)$. There exists $\epsilon > 0$ so that no SQ algorithm with tolerance $n^{-\lambda_2^{\epsilon}}$ and access to $n^{\lambda_2^{\epsilon}}$ queries can learn $\mathcal{F}$ with error less than $1/4$.
\end{theorem}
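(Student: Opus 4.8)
The plan is to reduce from the classical statistical‑query lower bound for learning parities over the uniform distribution. Recall that over the uniform measure on $\{\pm1\}^{m}$ the $2^{m}$ parities $\{\chi_S\}_{S\subseteq[m]}$ are pairwise uncorrelated, so their statistical‑query dimension is $2^{m}$; consequently any SQ algorithm that outputs a hypothesis with error below $1/2-\gamma$ must either issue more than $2^{\Omega(m)}$ queries or use tolerance below $2^{-\Omega(m)}$, and this rules out even weak learning. The goal is therefore to exhibit, for every $S\subseteq[n-1]$, an instance of the RBM/feedforward learning problem meeting the stated bounds on $\lambda_2$ and $\|w\|_1$ whose induced prediction task is exactly ``predict $\chi_S$ on (essentially) uniform inputs'' up to small constant error, so that an SQ algorithm for our problem using tolerance $n^{-\lambda_2^{\epsilon}}$ and $n^{\lambda_2^{\epsilon}}$ queries would yield an SQ algorithm for parities with $m=n-1$ that beats the bound above. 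For the numerology we only need $n^{\lambda_2^{\epsilon}}<2^{o(n)}$ and $n^{-\lambda_2^{\epsilon}}>2^{-o(n)}$; since $\lambda_2=\mathrm{poly}(n)$, both hold once $\epsilon$ is a small enough absolute constant, as $\lambda_2^{\epsilon}\log n = n^{O(\epsilon)} = o(n)$.

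The core step is to realize $\chi_S$ as the Bayes predictor $\E[X_n\mid X_{\sim n}]$ of a visible node. By Theorem~\ref{thm:node-prediction} this predictor is automatically a two‑layer network of the special form appearing in Theorem~\ref{thm:regression-intro}, so it suffices to approximate $\chi_S$, viewed as a function of the single statistic $t=\sum_{i\in S}x_i$, by such a network. Writing $\chi_S(x)=(-1)^{(|S|-t)/2}$, this is a square wave on the lattice $t\in\{-|S|,\dots,|S|\}$, which we fit with a staircase $\sum_j w_j\tanh\!\big(C(t-\theta_j)\big)$ of $|S|$ sharp steps placed at the $|S|$ midpoints $\theta_j$ between consecutive lattice points, with alternating signs $w_j=\pm L$ and an outer bias chosen to fix the level; applying the outer $\tanh$ (and using $f_{\beta}\equiv\tanh$ at $\beta=0$) then matches $\chi_S$ at every point of the cube to any accuracy $\delta$ once $C=\Theta(\log(|S|L/\delta))$ and $L=\Theta(\log(1/\delta))$. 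Because we only need $0/1$ error below $1/4$ — and the logistic‑loss guarantee of Theorem~\ref{thm:regression-intro} translates into such a bound since $OPT$ is small when $\E[Y\mid X]$ is $\delta$‑close to $\pm1$ — a constant $\delta$ (say $1/16$) is enough, yielding $\|w\|_1=O(|S|)=O(n)$ and $\|W_j\|_1=C|S|=O(n\log n)$, i.e. $\lambda_2=\mathrm{poly}(n)$ and $\|w\|_1=\mathrm{poly}(n)$.

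The main obstacle is that this must sit inside a genuine RBM, where ``the input distribution is related to the label'': the staircase hidden units all read the statistic $\sum_{i\in S}X_i$, so after integrating them (and $X_n$) out the visible marginal has the form $\Pr(x)\propto\psi\big(\sum_{i\in S}x_i\big)$ with $\psi$ nonconstant, which would betray the secret set $S$ through pairwise correlations and collapse the statistical dimension — the reduction needs the visible marginal to be essentially independent of $S$ (e.g. within $o(1)$ total variation of uniform). Neutralizing this bias is the crux: one adds auxiliary hidden units, decoupled from the label node $X_n$ so that $\E[X_n\mid X_{\sim n}]$ is untouched, whose free‑energy contributions $\sum\log\cosh(\langle\cdot,x\rangle)$ cancel the $S$‑dependent part of the marginal on the Boolean cube, exploiting combinatorial identities such as $\log\cosh(w(a+b))+\log\cosh(w(a-b))$ being constant for $a,b\in\{\pm1\}$, all the while keeping the $\ell_1$‑weight into every hidden unit polynomial in $n$. (For the purely feedforward version of the lower bound the input is taken exactly uniform and no such balancing is needed — the staircase construction alone suffices.) Once the visible marginal is $S$‑independent and close to uniform, the statistical‑dimension computation for parities goes through essentially verbatim on the joint law of $(X_{\sim n},X_n)$, and the reduction closes; checking the $\eta$‑nondegeneracy of the pair $(n,i)$, the $\lambda_1$ bound, and the precise admissible $\epsilon$ is then routine bookkeeping.
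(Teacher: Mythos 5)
You have correctly identified the crux of this theorem --- that a staircase-of-$\tanh$ construction of the predictor $\E[X_n\mid X_{\sim n}]\approx\chi_S$ is not enough, because once the staircase hidden units are placed inside a genuine RBM the visible marginal becomes $\propto\exp\bigl(\sum_j\log\cosh(C\sum_{i\in S}x_i+u_jx_n+b_j)\bigr)$, whose $x_n$-independent part is a nonconstant symmetric function of $x_S$ that leaks $S$ through low-order correlations and destroys the SQ-dimension argument. The gap is that your proposed repair of this defect is not actually carried out, and it is precisely the entire technical content of the theorem. The two-variable identity $\log\cosh(w(a+b))+\log\cosh(w(a-b))=\mathrm{const}$ for $a,b\in\{\pm1\}$ does not extend in any evident way to cancelling the bias $\psi(t)=\tfrac12\sum_j[\log\cosh(Ct+u_j+b_j)+\log\cosh(Ct-u_j+b_j)]$, $t=\sum_{i\in S}x_i$, with polynomially many auxiliary hidden units of polynomial $\ell_1$-weight: each auxiliary unit contributes a term $\log\cosh(\langle v,x\rangle+b)$ which is \emph{convex} along any line in $x$, so if the auxiliary units also read $t$ linearly their total contribution is convex in $t$ and cannot cancel the non-affine convex function $\psi$; to escape this one must use genuinely asymmetric weight vectors $v$, at which point showing that the required correction is expressible by $\mathrm{poly}(n)$ such units with $\mathrm{poly}(n)$ weights is a universal-approximation question for free energies on the cube that is comparable in difficulty to the theorem you are trying to prove. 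Asserting that "the statistical-dimension computation goes through essentially verbatim" once this is done assumes the hard step rather than proving it.

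The paper takes a different and shorter route: it invokes Theorem~\ref{thm:martens} (Theorem 7 of \cite{martens-rbm-representation}) as a black box, which directly produces an RBM with $n^2+1$ hidden units and $\mathrm{poly}(n,\log(1/\epsilon))$ weights whose visible marginal is $\propto e^{f(x)}$ with $\sup_x|f(x)/C-x_1\cdots x_n|\le\epsilon$. Taking $\epsilon=e^{-n}$, the marginal over $X_{\sim n}$ is (essentially exactly) uniform --- since $\sum_{x_n}e^{Cx_1\cdots x_n}=2\cosh(C)$ is constant --- and $X_n$ is the parity with exponentially small label noise, so the distribution is within TV distance $e^{-n}$ of noisy parity over the uniform distribution and Theorem~\ref{thm:sqdim} applies directly; no separate bias-cancellation step is needed because Martens' construction arranges the free energy to be the parity itself rather than a symmetric function of $\sum_{i\in S}x_i$. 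If you want to complete your approach you would either have to reprove something equivalent to Theorem~\ref{thm:martens}, or cite it --- in which case the staircase construction (which is the Hajnal et al.\ \cite{hajnal1993threshold} device the paper uses only for the purely feedforward lower bound in Theorem~\ref{thm:net-hardness}, where the input is taken to be exactly uniform and the marginal issue does not arise) becomes unnecessary.
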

We also show (Theorem~\ref{thm:degeneracy-needed}) that the $\eta$-nondegeneracy condition is required to achieve nontrivial guarantees even if we are only interested in distribution learning (i.e. in TV), assuming the hardness of learning sparse parity with noise.

\section{Supervised RBMs} \label{sec:srbm}
Since in many applications the input data to a classifier is clearly very structured (e.g. images, natural language corpuses, data on networks, etc.), it is interesting to consider the behavior of classification algorithms under structural assumptions on the data.
RBMs are one (relatively simple) generative model which can generate interesting structured data. This suggests the idea of learning ``supervised RBMs'', as proposed by Hinton \citep{hinton2012practical}, where we assume the input and label are drawn from an RBM joint distribution, so that predicting the label is a feedforward network by Theorem~\ref{thm:node-prediction}; in this model the label is just a special visible unit in the RBM. Based on the previous discussion about computational lower bounds, we know that assuming the input to a feedforward network comes from the corresponding RBM does not in general make learning easier, but we know that in RBMs there are very natural assumptions we can make to avoid these computational issues. Our final result is of exactly this flavor, showing how we can learn the supervised RBM under a ferromagneticity-related condition faster than is possible if we did not have a distributional assumption.

In order to emphasize the special role of the node which we want to predict, we will adopt a modified notation where the visible unit which we want to learn to predict is labeled $Y$ and all other visible units are still labeled $X$. More precisely, we model the joint distribution over input features $X$ valued in $\{\pm 1\}^{n_1}$, latent features $H$ valued in $\{ \pm 1\}^{n_2}$  and label $Y \in \{\pm 1\}$ as,
\[ \Pr[X = x,H = h,Y = y] \propto \exp\left(\langle x, Wh \rangle + \langle h, w\rangle y + \langle b^{(1)}, x \rangle + \langle b^{(2)}, h \rangle + b^{(3)}y\right) \]
where the \emph{weight matrix} $W$ is a non-negative $n_1 \times n_2$ matrix, $w$ is an arbitrary $n_1$ dimensional vector and \ $b^{(1)} \in \mathbb{R}^{n_2}$, $b^{(2)} \in \mathbb{R}^{n_2}$ and $b^{(3)} \in \mathbb{R}$ are arbitrary. Given the latent variables $H$, $w$ can be seen as the linear predictor for $Y$.

\begin{theorem}[Informal Version of Theorem \ref{thm:srbm_main}]\label{thm:smart-regression-intro}
Suppose the interaction matrix $W$ is ferromagnetic with minimum edge weight $\alpha$. Further suppose one of the RBMs induced by conditioning on $Y=1$ or $Y=-1$ is a $(\lambda, \lambda)$-RBM. Then there exists an algorithm that learns the predictor $Y$ that minimizes logistic loss up to error $\epsilon$. The algorithm has sample complexity $m = n_1^2\exp(\lambda)^{\exp(O(\lambda))}(1/\alpha)^{O(1)} \log(n_1/\delta)/\epsilon^2$ and has runtime $poly(m)$.
\end{theorem}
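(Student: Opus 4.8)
The plan is to use Theorem~\ref{thm:node-prediction} in the ``reverse'' direction: rather than learning the induced two-layer network by black-box regression --- which, by the lower bounds accompanying Theorem~\ref{thm:regression-intro}, costs $n_1^{\Omega(\lambda)}$ --- I would exploit that it arises from a \emph{ferromagnetic} RBM and learn that RBM's conditional laws. First record the closed form of the Bayes predictor: marginalizing out $H$, for $y\in\{\pm1\}$ we have $\Pr[X=x\mid Y=y]\propto\exp\bigl(\langle b^{(1)},x\rangle+\sum_j\log\cosh(b^{(2)}_j+yw_j+\langle W_j,x\rangle)\bigr)$, so conditioning on \emph{either} value of $Y$ yields a ferromagnetic RBM (nonnegativity of $W$ is preserved); the two conditional RBMs differ only in their hidden biases, which the $(\lambda,\lambda)$-boundedness condition does not constrain, so both are $(\lambda,\lambda)$-bounded. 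Subtracting log-densities, the $\langle b^{(1)},x\rangle$ term cancels and
\[ \log\frac{\Pr[Y=1\mid X=x]}{\Pr[Y=-1\mid X=x]}=\sum_j\bigl(\log\cosh(b^{(2)}_j+w_j+\langle W_j,x\rangle)-\log\cosh(b^{(2)}_j-w_j+\langle W_j,x\rangle)\bigr)+c, \]
where $c=\log(\Pr[Y=1]/\Pr[Y=-1])+\log(Z_{-1}/Z_1)$. Hence it suffices to learn the two conditional distributions and the scalar class prior well enough.

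Next I would extract the structural consequences of the hypotheses. The minimum-edge-weight assumption ($W_{kj}\in\{0\}\cup[\alpha,\infty)$) together with $(\lambda,\lambda)$-boundedness forces every hidden unit to have at most $\lambda/\alpha$ visible neighbors and, since $\tanh(W_{kj})\ge\tanh(\alpha)$, every visible unit to have $O(\lambda/\alpha)$ hidden neighbors; so the marginal law of $X$ given $Y=y$ is a \emph{bounded-degree ferromagnetic} RBM, whose induced MRF on $X$ has cliques (the supports of the columns of $W$) of size $O(\lambda/\alpha)$ and two-hop neighborhoods of size $O(\lambda^2/\alpha^2)$. Ferromagneticity together with the min-edge-weight bound also gives $\eta$-nondegeneracy of the form $\eta\ge e^{-O(\lambda)}/\alpha^{O(1)}$, exactly as in the Examples for ferromagnetic RBMs. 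I would then run the structure-learning algorithm for bounded ferromagnetic RBMs of \citep{bresler2019learning,goel2019learning} --- which, crucially for beating the $n_1^{\Omega(\lambda)}$ barrier, runs in $\poly(n_1)$ time with only a tower-type $\exp(\lambda)^{\exp(O(\lambda))}$ and $(1/\alpha)^{O(1)}$ overhead, rather than the generic $n_1^{\Omega(d)}$ of arbitrary higher-order MRFs --- on the samples with label $+1$ and separately on those with label $-1$ (if one class has probability below $\epsilon$, output the constant predictor, which is then $\epsilon$-optimal; otherwise each class supplies $\Omega(\epsilon m)$ samples). This recovers the clique structure of both conditional MRFs.

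Given the structure, I would fit the clique potentials of each conditional MRF --- equivalently, learn each conditional law up to small error in the $L^1$-norm of the log-density, after normalizing out an additive constant --- and reassemble the predictor from the displayed Bayes formula, estimating the remaining scalar freedom $c$ by one-dimensional logistic regression on a held-out sample. To pass to a bound on the \emph{logistic} loss of the final predictor, use the identity that the excess logistic loss of the Bayes-combined predictor equals $\mathrm{KL}(\Pr[Y]\,\|\,\widehat{\Pr}[Y])+\sum_y\Pr[Y=y]\,\mathrm{KL}\bigl(\Pr[X\mid Y=y]\,\|\,\widehat{\Pr}[X\mid Y=y]\bigr)$, and that since $\ell(\cdot,y)$ is $1$-Lipschitz in its logit argument this is also at most $\E_X\bigl|\widehat{\mathrm{logit}}(X)-\mathrm{logit}_P(X)\bigr|$, which the learned potentials and the estimate of $c$ control. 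Tracking sample complexity --- $n_1^2\log(n_1/\delta)$ is the (necessary) price of learning an $n_1$-bit distribution, the $\exp(\lambda)^{\exp(O(\lambda))}$ and $(1/\alpha)^{O(1)}$ factors come from the bounded-degree ferromagnetic-RBM learner, and $1/\epsilon^2$ is standard --- yields the claimed $m$, with $\poly(m)$ runtime.

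The step I expect to be the main obstacle is precisely this last conversion: the off-the-shelf RBM-learning guarantees are naturally stated in total variation, but controlling the logistic loss of the Bayes classifier requires an $L^1$-of-log-density (equivalently KL-type) guarantee on the conditional laws --- one must rule out the learned conditional density being catastrophically smaller than the truth on events of non-negligible probability. Establishing this uses the min-edge-weight nondegeneracy to keep the relevant conditional densities bounded below and the explicit $\log\cosh$ form of the clique potentials to bound the log-density error. An alternative route that sidesteps learning the densities altogether is to learn the log-odds function directly by $\ell_1$-constrained logistic regression over the (now few) monomials supported on the cliques identified in the structure-learning phase; that route instead requires bounding the Fourier $\ell_1$-norm of $\sum_j\bigl(\log\cosh(b^{(2)}_j+w_j+\langle W_j,\cdot\rangle)-\log\cosh(b^{(2)}_j-w_j+\langle W_j,\cdot\rangle)\bigr)$ restricted to those bounded-size cliques.
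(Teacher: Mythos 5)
Your overall architecture matches the paper's: condition on the label to obtain ferromagnetic RBMs, recover the two-hop structure greedily, fit the conditional MRF potentials in coefficient-wise $\ell_1$ (which is exactly the $L^1$-of-log-density control you correctly identify as the crux of the loss conversion --- the paper's lemma for \textsc{DistributionFromStructure} delivers $\sum_S|\hat f^{(y)}_S - f^{(y)}_S|\le\eps$, and $2$-Lipschitzness of the logistic loss in the logit finishes the job), and recover the leftover additive constant by a one-parameter logistic fit whose generalization is controlled by a trivial Rademacher bound. Steps two and three of your plan thus match the paper essentially verbatim, including your worry that a TV guarantee would not suffice, which the paper resolves in the way you anticipate.

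The gap is in your structure-learning step. You propose to run the ferromagnetic structure learner of \citep{goel2019learning} separately on the $Y=+1$ and $Y=-1$ subsamples, justified by the claim that both conditional RBMs are $(\lambda,\lambda)$-bounded because that condition ``does not constrain'' the hidden biases. But the guarantee you are invoking (Theorem~\ref{thm:main}, via the conditional-covariance lower bound of Lemma~\ref{lem:cond}) explicitly requires $\sum_i W_{ij}+|b^{(2)}_j|\le\lambda$ at every hidden unit, and the hypothesis (Assumption~\ref{ass:srbm2}) only guarantees $\sum_i W_{ij}+|b^{(2)}_j+yw_j|\le\lambda$ for \emph{one} value of $y$; since $w$ is unconstrained in sign and magnitude, the other conditional RBM can have arbitrarily large hidden biases, its conditional covariances can degenerate to zero, and the learner run on that class's samples has no guarantee. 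The paper's fix is the one genuinely new lemma in its proof (Theorem~\ref{thm:srbm1}): define the label-averaged conditional covariance $\cov^{\avg}(u,v\mid S,Y)$ over \emph{both} labels, lower-bound it by the contribution of the good label alone (worth $\beta\alpha^2 e^{-12\lambda}$), and discard the bad label's contribution using only its nonnegativity, which follows from the FKG inequality for ferromagnetic models regardless of the external field. Your per-class variant could perhaps be repaired --- the two conditional models share the same two-hop structure, so one could try to certify which class's output to trust --- but as written this step is not justified.
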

Our main algorithm can be broken down into three main steps: (1) Use greedy maximization of conditional covariance $\cov^\avg$ to first learn the two-hop neighborhood $\mathcal{N}(i)$ of each observed variable $i$ w.r.t. the hidden layer conditioned on the label (see Algorithm \ref{algo:rbmferrosup}), (2) For each observed variable $X_i$, learn the conditional law of $X_i \mid X_{\mathcal{N}(i)}, Y$ using regression, and (3) Use the estimated distribution to compute $\E[Y|X]$. Step (1) leverages tools from \citep{bresler2019learning,goel2019learning} but considers a setting where the RBM may in fact have some amount of negative correlation, as $w$ has arbitrary signs and is allowed to have large norm. Step (2) can be achieved by simply looking at the conditional law under the empirical distribution; this is efficient as we learn small neighborhoods. 

In step (3), we can make use of the following useful trick (a version of which can be found in \citep{hinton2012practical}): we already have enough information to derive the law of $Y \mid X$ since we know the marginal law of $Y$ (the fraction of $+$ and $-$ labels) and the law of $X \mid Y$. However, naively carrying out the Bayes law calculation is difficult because it involves partition functions (which are in general NP-hard to approximate, see e.g. \citep{SlySun}). We avoid computing the partition function by observing that if we define $f_1,f_2$ such that $\Pr(X,Y) \propto \exp(f_1(X) \bone(Y = 1) + f_2(X) \bone(Y = -1) + by)$, then
the law of $Y \mid X$ follows a logistic regression model where
\[ \E[Y \mid X] = \tanh\left(\frac{f_1(X) - f_2(X)}{2} + b\right) \]
for some constant $b \in \mathbb{R}$. Therefore if we know $f_1,f_2$ up to additive constants (which we can derive from the Fourier coefficients learned in (2)), we can simply fit a logistic regression model from data to learn $h$ plus the missing constants, and we can prove this works using fundamental tools from generalization theory. We refer the reader to Appendix \ref{sec:feedforward-from-rbm} for additional details.

 \begin{algorithm}
   \caption{$\textsc{LearnSupervisedRBMNbhd}(u, \tau, \mathcal{S})$ (Adapted from \citep{bresler2019learning,goel2019learning})}\label{algo:rbmferrosup}
\begin{algorithmic}[1]
  \STATE Set $S := \phi$
   \STATE Set $i^*= \argmax_v \widehat{\cov}_{\mathcal{S}}^{\avg}(u,v|S, Y)$, and $\eta^* = \max_v \widehat{\cov}_{\mathcal{S}}^{\avg}(u,v|S, Y)$
   \IF {$\eta^* \ge \tau$}
    \STATE $S = S \cup \{i^*\}$
  \ELSE 
  \STATE Go to Step 8
  \ENDIF
  \STATE Go to Step 2
   \STATE For each $v \in S$, if $\widehat{\cov}_{\mathcal{S}}^{\avg}(u, v| S \backslash \{v\}, Y ) < \tau$, remove $v$ ({\em Pruning step})
   \STATE Return $S$
\end{algorithmic}
\end{algorithm}
 Observe that under the given distributional assumptions, our algorithm has runtime complexity polynomial in the input dimension in contrast to Theorem \ref{thm:regression-intro} where the run time scales as $n^{\Omega(\lambda)}$.   A simple example which shows the algorithm from this Theorem will outperform any algorithm without distributional assumptions (like Theorem~\ref{thm:regression-intro}) is given in Remark~\ref{rmk:ferromagnetic-advantage}.


\section{Discussion: Comparison to Prior work on Learning Neural Networks}
In the neural network learning literature, various works prove positive results that either (1) work for any distribution with norm assumptions or (2) require strong distributional assumptions. The result of Theorem~\ref{thm:regression-intro} falls into the category (1) and the result of Theorem~\ref{thm:smart-regression-intro} falls into category (2).

We first discuss the relation of Theorem~\ref{thm:regression-intro} to other previous works of type (1). Perhaps the most closely related works are \citep{shalev2011learning,Zhang,goel2016reliably,goel2018learning}. All of these works assume the input is bounded in $\ell_2$ norm and give learning results based on kernel methods; of course, these results could be applied under the assumption of $\ell_{\infty}$-bounded input, by using the inequality $\|x\|_{2} \le \sqrt{n} \|x\|_{\infty}$ and rescaling the input to have norm $1$. For comparison, the best result in the $\ell_2$ setting with $\tanh$ activation is given in \citep{goel2018learning}, but this result (as is essentially necessary based on the known computational hardness results) has exponential dependence on the $\ell_2$ norm of the weights in the hidden units, so doing such a reduction just using norm comparison bounds gives a runtime sub-exponential in dimension. Therefore it is indeed crucial for us to give a new analysis adapting to learning with input bounded in $\ell_{\infty}$. An interesting feature of this setting (as mentioned above) is that the kernel trick does not seem to be as useful for improving the runtime as the $\ell_2$ setting, where it seems genuinely better than writing out the feature map \citep{goel2016reliably,goel2018learning}. 


Due to the generality of direction (1), it is hard to design efficient algorithms. This further motivates direction (2), however, making the right distributional assumptions which allow for efficient learning while being well-motivated in context of real world data can be very challenging. Most prior work has been limited to the Gaussian input \citep{tian2017analytical,soltanolkotabi2017learning,brutzkus2017globally,zhong2017recovery,li2017convergence,du2018gradient} or symmetric input \citep{goel2018learning,ge2019learning} assumptions which are not satisfied by real world data. The works of \citep{mossel2016deep,malach2018provably} gave results for some simple tree-structured generative models. There has been some work in defining data based notions such as eigenvalue decay \citep{goel2017eigenvalue} and score function computability \citep{gao2019learning} to get efficient results.  Our assumption for Theorem~\ref{thm:smart-regression-intro} in contrast exploits sparsity and nonnegative correlations among the input features conditional on the output label.

\section{Experiments}\label{sec:exp}
\begin{figure}[t!]
    \centering
    \includegraphics{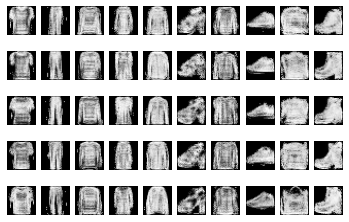}
    \caption{Five i.i.d. samples for each FashionMNIST class, drawn from the trained model by Gibbs sampling.}
    \label{fig:fashionmnist}
\end{figure}
In this section we present some simple experiments on MNIST and FashionMNIST to confirm that our method performs reasonably well in practice. In these experiments, we implemented the supervised RBM learning algorithm from Theorem~\ref{thm:smart-regression-intro} which makes use of the classification labels provided in the training data set. This algorithm outputs both a classifier (which predicts the label given the image) and also a generative model (which can sample images given a label).

For classification, we allowed the logistic regression (described as ``step (3)'' above) to fit not just the bias term but also coefficients on the sum of Fourier coefficients for each pixel (an input of dimension $768 \times 10 = 7680$), since the runtime of the logistic regression step is almost negligible anyway. This is useful because it allows greater dynamic range in the influence of each pixel.

We observed a test accuracy of $97.22 \pm 0.16\%$ on MNIST; the training accuracy was $99.9\%$ and we trained the logistic regression for 30 epochs (same as steps) of L-BFGS with line search enabled. For FashionMNIST, we obtained a test accuracy of $88.84 \pm 0.31\%$; the training accuracy was $92.19\%$ and we trained the logistic regression for 45 epochs with L-BFGS as before. Overall training took a bit less than an hour each on a Kaggle notebook with a P100 GPU.
Both datasets have $60,000$ training points and $10,000$ test; in both experiments we used a maximum neighborhood size of $12$, and stopped adding neighbors if the conditional variance shrunk by less than $1\%$.

For context, we note that our accuracy on MNIST is better than what we would get using standard training methods for RBMs and logistic regression for classification; \cite{gabrie2015training} reports accuracies of approximately $95\%$ for CD and $96\%$ using a more sophisticated TAP-based training method. The results are also around as good or better than what is achieved using many classical machine learning methods on these datasets \cite{xiao2017fashion}; for example, logistic regression achieves error $91.7\%$ and $84.2\%$ and polynomial kernel SVM achieves error $89.7\%$ and $97.6\%$ \cite{xiao2017fashion}. Of course, none of these results are as good as specialized deep convolutional networks (over $99\%$ on MNIST). In contrast to other approaches using linear models such as kernel SVM, our approach also learns a generative model. Being able to sample from the generative model can give some insight into how the model classifies.

To evaluate the performance of the learned RBM as a generative model, we generated samples using Gibbs sampling starting from random initialization and run for 6000 steps. As is common practice, we output the probabilities generated in the last step instead of the sampled binary values, so that the result is a normal greyscale image. We display the resulting samples in Figures~\ref{fig:fashionmnist} and \ref{fig:mnist} (for reference, see randomly sampled training datapoints in Appendix~\ref{apdx:reference-images}): we note that the model successfully generates samples with diversity, as in Figure~\ref{fig:fashionmnist} the model generates handbags both with and without handles, and in Figure~\ref{fig:mnist} it renders both common styles for drawing the number 4.

It is clear that the model fails to generate as detailed of patterns exhibited in real FashionMNIST images since in our training algorithm, we represent a gray pixel as a random combination of black and white, so a checkerboard pattern of black and white and a patch of grey are not well-distinguished. We do this to ensure that our setup is comparable to classic RBM training \cite{hinton2012practical}. It is potentially possible to fix this by adding spins over larger alphabets (e.g. real-valued) to the model.




\subsection*{Acknowledgements}
This work was done in part while the authors were visiting the Simons Institute for the Theory of Computing for the Summer 2019 program on the Foundations of Deep Learning. A substantial part of the work was done while SG was a graduate student at UT Austin.

SG is supported by the JP Morgan AI Phd Fellowship. FK is supported in part by NSF award CCF-1453261 and Ankur Moitra's Packard Foundation Fellowship. AK is supported by NSF awards CCF-1909204 and CCF-1717896. 

\bibliography{bib}
\bibliographystyle{plainnat}
\newpage
\appendix

\section{Outline of the Appendix}
Here we briefly outline the contents of each remaining section; each bold heading in the text below corresponds to a new section.

\paragraph{Appendix \ref{sec:connections}. Connections between Distribution Learning and Prediction in RBMs} In this section we show that if you have learned the distribution of an RBM, then you have also in principle learned how to predict the output of corresponding feedforward networks. These feedforward networks are induced from a ``self-supervised'' prediction task: predicting the spin at node $i$ given observations of all other spins. This connection leverages a classical observation in probabilistic inference: inference in all tree-structured graphical models has an exact solution known as Belief Propagation (see e.g. \cite{pearl2014probabilistic,mezard2009information}); perhaps surprisingly, this observation is useful even though the RBM itself is not tree structured.
Conversely, in the next subsection we give quantitative bounds showing that sufficiently good predictors for this self-supervised objective for every node $i$ allows us to recover the distribution of the corresponding RBM.

\paragraph{Appendix \ref{app:learningffnet}. Guarantees for Learning Feedforward Networks (with arbitrary distribution).}
In this section we prove upper and lower bounds for learning one-layer feedforward networks with $f_{\beta}$ activations in the hidden units and inputs $X$ drawn from an arbitrary distribution such that $\|X\|_{\infty} \le 1$. 

In the first two subsections, we prove the needed approximation-theoretic results about our class of activations $f_{\beta}$, giving approximation results with uniform guarantees over the entire interval $\beta \in [0,1]$. In the special case of $\beta = 0$, $f_{\beta} = \tanh$ and the needed result has essentially already been proved in the work of \cite{shalev2011learning}. As explained in the first subsection, by a classical result of Bernstein (Theorem~\ref{thm:bernstein} below) it turns out that analyzing approximation theory for functions analytic on $[-1,1]$ is equivalent to analyzing the function's extension into the complex plane. We develop the needed complex-analytic estimates (which crucially are uniform in $\beta$) in the following subsection.
We note that the authors of \cite{shalev2011learning} did not use Bernstein's result to prove their bound; their analysis of the $\beta = 0$ case is longer because they more or less reproduce the steps from the proof of the upper bound of Bernstein's Theorem. 

After solving the approximation-theoretic question, we use them in an $\ell_1$-regression based algorithm for learning feedforward networks, using an explicit polynomial feature map and the logistic version of the Lasso with its corresponding nonparametric generalization bounds. We derive the needed $\ell_1$-norm bound in a clean way from the approximation-theoretic results using in part a Lemma of \cite{sherstov2012making}, previously used in \cite{goel2016reliably}.  This proves Theorem~\ref{thm:regression-intro}. In the last subsection, we
prove that this result is nearly optimal under the hardness of sparse parity with noise, even in the case of $\tanh$ networks, using two different ways to construct a parity out of $\tanh$ units: one is a well-known construction from \cite{hajnal1993threshold}, the other is based on Taylor series expansion and is related to the MRF-to-RBM embedding result established in \cite{bresler2019learning}.

\paragraph{Appendix \ref{app:learnRBM}. Learning RBMs by Learning Feedforward Networks.} In this section, we show how to derive structure recovery results (i.e. recovery of Markov blankets) for RBMs by using the feedforward network learning results developed in the previous section. Assuming $\eta$-nondegeneracy, we show how to learn the structure of the network by doing simple regression tests, e.g. comparing the minimal logistic loss achieved predicting node $i$ from all other nodes to the loss when node $j$ is excluded from the input. This proves Theorem~\ref{thm:rbm-structure-recovery-informal}. We explain in more detail in Remark~\ref{rmk:sampling-regimes} how this result is a significant improvement over previous results in interesting regimes where we know that the RBM can actually be sampled from in polynomial time.
Based on this, we prove a result for learning the distribution: by Theorem~\ref{thm:rbm-structure-recovery-informal} this reduces to the case where the structure is known, so by proving a good estimate (Lemma~\ref{lem:finite-regression}) on the convergence of the natural predictor of $X_i$ given its neighbors, the empirical conditional expectation and using the tools developed in Section~\ref{sec:dist-from-predict} gives the result. A key point here is that the empirical conditional expectation converges at a much faster rate than e.g. relying on Theorem~\ref{thm:rbm-regression}, which gives better sample complexity guarantees.

Finally, we again prove some computational hardness results. We establish that the algorithm's dependence is essentially optimal in terms of $\eta$ and $\|w\|_1$ by using the Taylor-series based sparse parity construction from \cite{bresler2019learning}, related to the construction used above for $\tanh$ networks. For the dependence on $\lambda_2$, the hidden unit $\ell_1$-norm, we use a third, different construction of parity from \cite{martens-rbm-representation} for the RBM setting; this construction is not amenable to adding noise, but we are able to prove a lower bound on the runtime in terms of $\lambda_2$ for all SQ (Statistical Query) algorithms (see e.g. \cite{blum1994weakly}).

\paragraph{Appendix \ref{sec:feedforward-from-rbm}. Learning a Feedforward Network by Learning RBMs.}
In this section, we prove Theorem~\ref{thm:smart-regression-intro}, which lets us learn to predict in supervised RBMs under a natural conditional ferromagneticity condition in a provably more computationally efficient way than applying distribution-agnostic methods for learning feedforward networks like Theorem~\ref{thm:regression-intro}. In Remark~\ref{rmk:ferromagnetic-advantage} we give a simple example where the gap is provable and explain the (in this case) simple intuition as to how the approach of Theorem~\ref{thm:smart-regression-intro} uses the structure of the input data in a favorable way.

The idea of this learning algorithm is essentially to use Bayes rule to reduce computing the posterior on the label (i.e. $\Pr(Y | X)$) to computing the conditional likelihood of the observed $X$ under the two possible values of the label. In some situations where the conditional law of $Y | X$ is very simple, this approach may be overkill as it requires to model the law of $X$; however, we are interested in the setting where the label $Y$ may have a large, complicated effect on $X$ so this approach seems perfectly reasonable.
An obvious issue with using Bayes rule in this way is that even if the the RBM is already known perfectly, computing the normalizing constant for the conditional distribution under $Y = +$ or $Y = -$ in such a model is $\#$BIS-Hard \cite{goldberg2007complexity}. Fortunately, for our application we show that we can estimate the needed ratio of normalizing constants from the data using a simple variant of logistic regression.

What remains is to learn how to estimate the conditional log-likelihoods i.e. $\Pr(X | Y)$. Fortunately, even though under our assumptions the original RBM was not ferromagnetic, the conditional models we get by applying Bayes rule are indeed ferromagnetic so we can apply the methods developed in \cite{goel2019learning} for learning such a model. Here we need the results of \cite{goel2019learning} and not the earlier work of \cite{bresler2019learning} as we expect the external fields in the resulting model to be inconsistent (have differing signs depending on the site). Once the structure is recovered, we can learn the coefficients of the log-likelihood using the results established in the previous section based on fast convergence of the empirical condition expectation, and using these coefficients we can accurately estimate $\Pr(X | Y)$ for the application of Bayes rule. 

\paragraph{Appendix \ref{apdx:reference-images}. Additional Experimental Data.} In this section we include reference images from both datasets along with samples generated by our algorithm trained on MNIST.

\section{Connections between Distribution Learning and Prediction in RBMs}\label{sec:connections}

To our knowledge, Theorem~\ref{thm:node-prediction} has not been previously noted in the literature on RBMs. However, this is not the first time connections between RBMs and message passing algorithms for inference has been investigated: for example, the work of \cite{welling2003approximate} extensively studied the use of message passing algorithms (i.e. Belief Propagation and related algorithms) for estimating the mean and covariance matrix of nodes in an RBM, and the work of \cite{gabrie2015training} used the related TAP approximation to derive better alternatives to constrastive divergence for training RBMs in practice. The key conceptual difference is that in these works, their goal is to solve a much harder problem (e.g. estimating marginals and $\log Z$) which is well-known to be NP-hard in general. In contrast, for our application to learning the relevant task ends up being predicting one node from the others, which it turns out is \emph{not}  computationally difficult if we know the model --- conditioning on the other nodes breaks all cycles in the graph, which is the obstacle that makes inference difficult in general.


\subsection{Conditional Law Derivation}\label{apdx:conditional-law}
In this Appendix we give, for the reader's convenience, a self-contained derivation of the conditional law \eqref{eqn:node-prediction} described in Theorem~\ref{thm:node-prediction} for $\E[X_i | X_{\sim i}]$ from \eqref{eqn:cond-joint-law}. As described in the proof of the Theorem, the result is obtained as a special case of the Belief Propagation algorithm as described in a number of references, including \cite{mezard2009information,pearl2014probabilistic}, which is derived by performing a more general version of this calculation. First recall that the joint conditional law on $X_i,H$ condiditioned on $X_{\sim i}$ is given by \eqref{eqn:cond-joint-law}:
\[ \Pr(X_i = x_i, H = h | X_{\sim i} = x_{\sim i}) \propto \exp\left(x_i(b^{(1)}_i + \sum_j W_{ij} h_j) + \langle W_{\sim i}^t x_{\sim i} + b^{(2)}, h \rangle\right). \]
The computation proceeds by rewriting this measure with respect to a ``cavity'' measure where all terms involving $X_i$ are removed.
For each hidden unit $j$, define a corresponding probability measure
\[ \mu_{H_j \to X_i}(h_j) \propto \exp\left(\sum_{k \ne i} W_{kj} x_j h_j + b^{(2)}_j h_j \right)\]
under which $\sum_j h_j \mu_{H_j \to X_i}(h_j) = \tanh(\sum_k W_{kj} x_j + b^{(2)}_j)$
and rewrite the joint probability over $X,H$ as
\[ \Pr(X_i = x, H = h |X_{\sim i} = x_{\sim i}) \propto \exp\left(x_i(b^{(1)}_i + \sum_j W_{ij} h_j)\right) \prod_j \mu_{H_j \to X_i}(h_j). \]
Now we compute that
\begin{align*} 
&\Pr[X_i = x_i | X_{\sim i} = x_{\sim i}]\\
&= \sum_h x_i \Pr(X_i = x_i, H = h | X_{\sim i} = x_{\sim i}) \\
&\propto \sum_h \exp\left(x_i(b^{(1)}_i + \sum_j W_{ij} h_j)\right) \mu_{H \to X_i}(h) \\
&= \exp(x_i b^{(1)}_i) \prod_{j = 1}^{n_2} (\cosh(W_{ij}) + \sinh(x_i W_{ij})\tanh(\sum_{k \ne i} W_{kj} x_j + b^{(2)}_j)) \\
&\propto  \exp(x_i b^{(1)}_i) \prod_{j = 1}^{n_2} (1 + x_i \tanh(W_{ij})\tanh(\sum_{k \ne i} W_{kj} x_j + b^{(2)}_j)) \\
&= \exp\left(x_i b^{(1)}_i + \sum_{j = 1}^{n_2} \log (1 + x_i \tanh(W_{ij})\tanh(\sum_{k \ne i} W_{kj} x_j + b^{(2)}_j))\right)
\end{align*}
where we used $\propto$ to ignore constants of proportionality independent of $x_i$ and in the third line we used Lemma~\ref{lem:mgf-calc} below. Therefore if we use that \[ \log(1 + \beta x_i) = \frac{1}{2} \log \frac{1 + \beta x_i}{1 - \beta x_i} + \frac{1}{2} (\log(1 + \beta x_i ) + \log(1 - \beta x_i)) = \tanh^{-1}(\beta x_i) + \frac{1}{2} (\log(1 + \beta) + \log(1 - \beta)) \]
where we see the last term does not depend on $x$, we can compute that
\begin{align*}
    &\E[X_i = x_i | X_{\sim i} = x_{\sim i}] \\
    &= \frac{\sum_{x_i} x_i \exp\left(x_i b^{(1)}_i + \sum_{j = 1}^{n_2} \log (1 + x_i \tanh(W_{ij})\tanh(\sum_{k \ne i} W_{kj} x_j + b^{(2)}_j))\right)}{\sum_{x_i} \exp\left(x_i b^{(1)}_i + \sum_{j = 1}^{n_2} \log (1 + x_i \tanh(W_{ij})\tanh(\sum_{k \ne i} W_{kj} x_j + b^{(2)}_j))\right)} \\
    &= \frac{\sum_{x_i} x_i \exp\left(x_i b^{(1)}_i + \sum_{j = 1}^{n_2} x_i \tanh^{-1}(\tanh(W_{ij})\tanh(\sum_{k \ne i} W_{kj} x_j + b^{(2)}_j))\right)}{\sum_{x_i} \exp\left(x_i b^{(1)}_i + \sum_{j = 1}^{n_2} x_i \tanh^{-1}(\tanh(W_{ij})\tanh(\sum_{k \ne i} W_{kj} x_j + b^{(2)}_j))\right)} \\
    &= \tanh\left(b^{(1)}_i + \sum_{j = 1}^{n_2} \tanh^{-1}(\tanh(W_{ij})\tanh(\sum_{k \ne i} W_{kj} x_j + b^{(2)}_j))\right)
\end{align*}
where in the final step we used that $\tanh(z) = \frac{e^z - e^{-z}}{e^z + e^{-z}}$. From this we get \eqref{eqn:node-prediction} by plugging in the definition of $f_{\beta_{ij}}$.
\begin{lemma}\label{lem:mgf-calc}
For any $z \in \mathbb{R}$ we have the formula for moment generating function of a recentered Bernoulli:
\begin{align*} 
\E_{X \sim Ber_{\pm}(\tanh(z))}[\exp(\lambda X)] = \cosh(\lambda) + \sinh(\lambda)\tanh(z)
\end{align*}
where $Ber_{\pm}(\mu)$ denotes the distribution of a $\{\pm 1\}$-valued random variable with mean $\mu$.
\end{lemma}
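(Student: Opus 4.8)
The plan is to prove Lemma~\ref{lem:mgf-calc} by a direct computation with the two-point distribution. First I would note that a $\{\pm 1\}$-valued random variable $X$ with mean $\mu$ necessarily has $\Pr(X = 1) = (1+\mu)/2$ and $\Pr(X = -1) = (1-\mu)/2$, since these are the unique weights on $\{+1,-1\}$ summing to $1$ with the prescribed mean. This pins down the law $Ber_{\pm}(\mu)$ completely, so the moment generating function is determined.

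Next I would expand the expectation directly:
\[
\E_{X \sim Ber_{\pm}(\mu)}[\exp(\lambda X)] = \frac{1+\mu}{2}\, e^{\lambda} + \frac{1-\mu}{2}\, e^{-\lambda} = \frac{e^{\lambda}+e^{-\lambda}}{2} + \mu \cdot \frac{e^{\lambda}-e^{-\lambda}}{2},
\]
and then recognize the two grouped terms as $\cosh(\lambda)$ and $\sinh(\lambda)$ respectively, giving $\cosh(\lambda) + \mu\sinh(\lambda)$. Finally I would substitute $\mu = \tanh(z)$ to obtain the claimed identity $\cosh(\lambda) + \sinh(\lambda)\tanh(z)$.

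There is no real obstacle here — the statement is an elementary reformulation of the Bernoulli MGF, and the only thing to be careful about is the bookkeeping of the recentered (i.e. $\{\pm 1\}$- rather than $\{0,1\}$-valued) convention in the definition of $Ber_{\pm}$. The reason the lemma is stated in this particular form is that it is exactly what is needed in the conditional law derivation of Appendix~\ref{apdx:conditional-law}: there one sums over $h_j \in \{\pm 1\}$ against the cavity measure $\mu_{H_j \to X_i}$, whose mean is $\tanh(\sum_{k\neq i} W_{kj} x_j + b^{(2)}_j)$, and with $\lambda = x_i W_{ij}$, so the lemma produces precisely the factor $\cosh(W_{ij}) + \sinh(x_i W_{ij})\tanh(\cdots)$ appearing there (using that $\cosh$ is even and $\sinh$ is odd to move the $x_i$ inside).
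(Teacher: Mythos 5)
Your proof is correct. It takes a slightly different (and if anything more elementary) route than the paper's: you expand the expectation directly against the two-point law $\Pr(X=\pm 1) = (1\pm\mu)/2$ and regroup into $\cosh(\lambda) + \mu\sinh(\lambda)$, whereas the paper writes $Ber_{\pm}(\tanh(z))$ as an exponentially tilted Rademacher, $\E_{X \sim Rad}\bigl[e^{\lambda X} e^{zX}/\cosh(z)\bigr] = \cosh(z+\lambda)/\cosh(z)$, and then applies the addition formula $\cosh(z+\lambda) = \cosh(z)\cosh(\lambda) + \sinh(z)\sinh(\lambda)$. The two computations are equally short; yours needs nothing beyond the definition of $\cosh$ and $\sinh$, while the paper's makes the exponential-family (Ising) structure of the measure explicit, which is thematically aligned with how the cavity measures $\mu_{H_j \to X_i}$ are introduced in the surrounding derivation. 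Your closing remark about how the lemma is applied with $\lambda = x_i W_{ij}$ and the parity of $\cosh$ and $\sinh$ matches exactly the use made of it in Appendix~\ref{apdx:conditional-law}.
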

\begin{proof}
First recall that
$\E_{X \sim Rad}[\exp(\lambda X)] = \cosh(\lambda)$ and $\E_{X \sim Rad}[X\exp(\lambda X)] = \tanh(\lambda)$.
Therefore
\begin{align*}
\E_{X \sim Ber_{\pm}(\tanh(z))}[\exp(\lambda X)] 
&=\E_{X \sim Rad}\left[e^{\lambda X} \frac{e^{z X}}{\cosh(z)}\right] \\
&= \frac{\cosh(z + \lambda)}{\cosh(z)} \\
&= \frac{\cosh(z)\cosh(\lambda) + \sinh(z)\sinh(\lambda)}{\cosh(z)} \\
&= \cosh(\lambda) + \sinh(\lambda)\tanh(z).
\end{align*}
\end{proof}
\subsection{2-layer Tanh Neural Network as Bayes-Optimal Prediction in an RBM}\label{subsec:node-prediction}
In particular, \eqref{eqn:node-prediction} lets us realize \emph{any} standard 2-layer $\tanh$ neural network as the Bayes-optimal predictor in an RBM in a natural limit where the number of hidden neurons goes to infinity, but the effect of each hidden neuron is very small, so that the $\ell_1$ norm of the weights going into the top neuron stays bounded by a constant. Each hidden unit in the neural network corresponds in a direct way to several duplicated hidden units in the RBM. The construction is given explicitly in the next Lemma; we will not use the statement explicitly but use it to develop intuition for \eqref{eqn:node-prediction}.
\begin{lemma} \label{lem:express}
Suppose that $g(x) = \tanh\left(u_0 + \sum_{j = 1}^T u_j \tanh\left(M_{j0} + \sum_k M_{jk} x_k \right)\right)$ where $x$ is $n$-dimensional, i.e. $g$ is a 2-layer neural network with $\tanh$ activations. Then
\[ g(x) = \lim_{K \to \infty} \tanh\left(u_0 + \sum_{i=1}^K\sum_{j = 1}^{T} \tanh(u_j/K) f_{|u_j/K|}\left(M_{j0} + \sum_k M_{jk} x_k\right) \right), \]
so by \eqref{eqn:node-prediction} from Theorem~\ref{thm:node-prediction} the restriction of $f$ to $\{\pm 1\}^n$ is the Bayes-optimal predictor of a visible unit in an RBM with $n + 1$ total visible units where the activations of the other visible units are known. 
\end{lemma}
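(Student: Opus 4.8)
The plan is to prove the identity by reducing it to a one-dimensional limit and then invoking continuity of $\tanh$. Write $z_j := M_{j0} + \sum_k M_{jk} x_k$ for the pre-activation of the $j$-th hidden unit, so that $g(x) = \tanh(u_0 + \sum_{j=1}^T u_j \tanh(z_j))$ and the $K$-th approximant is $\tanh\big(u_0 + \sum_{j=1}^T \big[\sum_{i=1}^K \tanh(u_j/K)\, f_{|u_j/K|}(z_j)\big]\big)$. Since the inner sum over $i$ consists of $K$ identical copies of the same quantity, it equals $K\tanh(u_j/K)\cdot f_{|u_j/K|}(z_j)$. Thus it suffices to show that for each fixed $j$ and each fixed $z_j\in\R$,
\[ \lim_{K\to\infty} K\tanh(u_j/K)\cdot f_{|u_j/K|}(z_j) = u_j\tanh(z_j), \]
and then, because $T$ is finite and $\tanh$ is continuous, the limit passes through the outer sum and the outer $\tanh$ to yield $g(x)$.

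The scalar limit factors into two pieces. First, $K\tanh(u_j/K)\to u_j$, which is the standard limit $t^{-1}\tanh(t)\to 1$ as $t\to 0$ applied with $t = u_j/K$ (the case $u_j = 0$ is trivial as every term then vanishes). Second, $|u_j/K|\to 0$, and by Remark~\ref{rmk:f-activation} we have $\lim_{\beta\to 0} f_\beta(z_j) = \tanh(z_j)$, so $f_{|u_j/K|}(z_j)\to\tanh(z_j)$. Multiplying the two convergent sequences gives the claimed scalar limit. (If one prefers to avoid quoting the remark, substitute $f_\beta(z) = \beta^{-1}\tanh^{-1}(\beta\tanh z)$ directly, set $\beta_K = |u_j|/K$, and use $\tanh(t) = t + O(t^3)$ and $\tanh^{-1}(t) = t + O(t^3)$ near $0$ to get $K\tanh(u_j/K)\, f_{\beta_K}(z_j) = \operatorname{sign}(u_j)\,\beta_K^{-2}|u_j|\,\tanh(\beta_K)\,\tanh^{-1}(\beta_K\tanh z_j)\to u_j\tanh z_j$.) This establishes the displayed identity of Lemma~\ref{lem:express}.

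For the second assertion, fix $K$ and consider the RBM with $n+1$ visible units $\{X_0\}\cup\{X_k\}_{k\in[n]}$ and $KT$ hidden units indexed by $(i,j)\in[K]\times[T]$, where the weight between $X_0$ and hidden unit $(i,j)$ is chosen so that $\tanh$ of it equals $u_j/K$ (possible once $K>\max_j|u_j|$), the weight between $X_k$ and hidden unit $(i,j)$ is $M_{jk}$, the bias of $X_0$ is $u_0$, the bias of hidden unit $(i,j)$ is $M_{j0}$, and all other visible biases are zero. By \eqref{eqn:node-prediction} of Theorem~\ref{thm:node-prediction}, $\E[X_0 \mid X_{\sim 0}]$ for this RBM is exactly the $K$-th approximant appearing in the Lemma (with the minor caveat that Theorem~\ref{thm:node-prediction} forces the activation index to be $|\tanh(\cdot)|$ of the weight rather than the weight itself; this discrepancy vanishes as $K\to\infty$ and, as noted above, does not affect any of the limits). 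Letting $K\to\infty$ and restricting $x$ to the finite set $\{\pm 1\}^n$ exhibits $g|_{\{\pm 1\}^n}$ as the Bayes-optimal predictor of $X_0$ in this sequence of RBMs with a growing number of weak hidden units.

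I do not expect a genuine obstacle: the computation is an elementary pointwise limit, and no uniformity in $x$ is required, since for the intended application $x$ ranges over the finite set $\{\pm 1\}^n$. The only point needing a little care is conceptual rather than technical --- $g$ is generally \emph{not} the conditional expectation of any RBM with finitely many hidden units (each $f_\beta$ with $\beta>0$ is not a scalar multiple of $\tanh$) --- so the statement is genuinely a limiting one, which is precisely why the $K\to\infty$ limit and the ``many weak hidden units'' normalization ($u_j\mapsto u_j/K$ repeated $K$ times) are essential.
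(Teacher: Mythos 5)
Your proposal is correct and follows essentially the same route as the paper, whose proof is a one-line appeal to Remark~\ref{rmk:f-activation} (for the scalar limit $f_{|u_j/K|}\to\tanh$ together with $K\tanh(u_j/K)\to u_j$) and to Theorem~\ref{thm:node-prediction} via an RBM with $KT$ hidden units; you have simply supplied the details. Your observation that the $K$-th approximant matches the form of \eqref{eqn:node-prediction} only up to replacing $|u_j/K|$ by $|\tanh(\cdot)|$ of the corresponding weight, a discrepancy vanishing as $K\to\infty$, is a point of care the paper glosses over.
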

\begin{proof}
This follows from the observation in Remark~\ref{rmk:f-activation} and from Theorem~\ref{thm:node-prediction} by building the corresponding RBM with $KT$ hidden units.
\end{proof}
\subsection{Distribution learning bounds from prediction bounds}\label{sec:dist-from-predict}
In this section, we show how good estimates of the conditional prediction functions can be used in a direct way to recover the joint distribution of the RBM in total variation distance.
\begin{algorithm}
   \caption{$\textsc{DistributionFromPredictors}$}
\begin{algorithmic}[1]
    \STATE For every $i$ we suppose we are given $\hat{f}_i : \{\pm 1\}^n \to \mathbb{R}$ and set $\widehat{\mathcal{N}}(i)$ such that $\hat{f}_i$ is a predictor of node $i$ from other nodes that depends only on those in the set $\widehat{\mathcal{N}}(i)$
    \STATE Define $\mathcal{S} := \{S : \exists i, S \subset \widehat{\mathcal{N}}(i) \}$
    \FOR{$S \in \mathcal{S}$}
        \STATE For all $i \in S$, define $\hat{w}_{S,i} := \E_{X \sim Uni(\{\pm 1\}^n)}[\tanh^{-1}(\hat f_i(X)) X_{S \setminus i}]$.
        \STATE Define $\hat{w}_S := \frac{1}{|S|} \sum_{i \in S} \hat{w}_{S,i}$.
    \ENDFOR
    \STATE Return the MRF with unnormalized pmf $\exp\left(\sum_{S \in \mathcal{S}} \hat{w}_S X_S\right)$.
\end{algorithmic}
\end{algorithm}
\begin{lemma}[\cite{santhanamW}]\label{lem:skl-formula}
Suppose $P,Q$ are distributions over random variable $X$ valued in $\{\pm 1\}^n$.
If $P(x) \propto \exp(\sum_S p_S X_S)$ and $Q(x) \propto \exp(\sum_S q_S X_S)$ then
\[ \SKL(P,Q) = \sum_S (p_S - q_S)(\E_P[X_S] - \E_Q[X_S]). \]
where $\SKL(P,Q) = \KL(P,Q) + \KL(Q,P)$ is the symmetrized KL divergence.
\end{lemma}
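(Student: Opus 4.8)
The plan is to write out the symmetrized KL divergence directly in terms of the exponential-family parametrizations and watch the log-partition functions cancel. Write $P(x) = \frac{1}{Z_P}\exp(\sum_S p_S x_S)$ and $Q(x) = \frac{1}{Z_Q}\exp(\sum_S q_S x_S)$, where $x_S := \prod_{i \in S} x_i$ and $Z_P, Z_Q$ are the respective normalizing constants. Then $\log(P(x)/Q(x)) = \sum_S (p_S - q_S) x_S - \log(Z_P/Z_Q)$, so that
\[ \KL(P,Q) = \E_P\left[\log\frac{P(X)}{Q(X)}\right] = \sum_S (p_S - q_S)\,\E_P[X_S] - \log(Z_P/Z_Q). \]
Symmetrically, $\log(Q(x)/P(x)) = \sum_S (q_S - p_S) x_S - \log(Z_Q/Z_P)$, so
\[ \KL(Q,P) = \sum_S (q_S - p_S)\,\E_Q[X_S] - \log(Z_Q/Z_P) = \sum_S (q_S - p_S)\,\E_Q[X_S] + \log(Z_P/Z_Q). \]

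Adding the two expressions, the $\pm\log(Z_P/Z_Q)$ terms cancel, leaving
\[ \SKL(P,Q) = \sum_S (p_S - q_S)\,\E_P[X_S] + \sum_S (q_S - p_S)\,\E_Q[X_S] = \sum_S (p_S - q_S)\big(\E_P[X_S] - \E_Q[X_S]\big), \]
which is exactly the claimed identity. Note that here we have implicitly assumed the two models are specified over a common collection of subsets $S$ (taking $p_S = 0$ or $q_S = 0$ for terms absent from one model), so that the sums range over the same index set; this is harmless since a zero coefficient contributes nothing.

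There is essentially no obstacle here: the only thing to be careful about is that the parametrization of a $\{\pm 1\}^n$ graphical model is not unique (one can shift coefficients around using the relations $x_i^2 = 1$), so strictly speaking one should either fix a canonical representation — e.g. using only multilinear monomials $x_S$ with $S \subseteq [n]$, which form a basis for functions on $\{\pm 1\}^n$ — or observe that the right-hand side is invariant under such reparametrizations because $\E_P[X_S] - \E_Q[X_S]$ transforms dually to $p_S - q_S$. With the multilinear convention the statement is unambiguous and the computation above is complete. The appeal to this lemma in the downstream argument (bounding $\TV$ between the learned MRF and the true RBM marginal) then proceeds by combining it with Pinsker's inequality and control of the coefficient error $\|p - q\|$ and the moment error $\|\E_P[X_S] - \E_Q[X_S]\|$.
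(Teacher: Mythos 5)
Your proposal is correct and is essentially the same computation as the paper's proof: both expand $\log(P(x)/Q(x)) = \sum_S (p_S - q_S)x_S - \log(Z_P/Z_Q)$ and observe that the partition-function term cancels when the two KL terms are combined, leaving the claimed bilinear form by linearity of expectation. The remarks about the multilinear basis are a reasonable (if not strictly necessary) addition but do not change the argument.
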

\begin{proof}
From the definition we see
\[ \SKL(P,Q) = \E_P\left[\log\frac{P(x)}{Q(x)}\right] - \E_Q\left[\log \frac{P(x)}{Q(x)}\right] = \E_P\left[\sum_S (p_S - q_S) X_S\right] - \E_Q\left[\sum_S (p_S - q_S) X_S\right] \]
so using linearity of expectation proves the result.
\end{proof}
The following definition captures the level of contiguity $P$ has with the uniform measure when looking at small sets of coordinates.
\begin{defn}
For any distribution $P$ on $\{\pm 1\}^n$ and $d \le n$ we define
\[ \delta_P(d) := \inf_{|S| \le d} \inf_{x_S} 2^{|S|} P(X_S = x_S). \]
\end{defn}
\begin{lemma}
For any function $f$ which depends on at most $d$ coordinates,
\[ \E_P[f(X)^2] \ge \delta_P(d) \E_{X \sim \{\pm 1\}^n}[f(X)^2] \]
\end{lemma}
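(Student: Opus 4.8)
The plan is to reduce both expectations to sums over the (small) set of coordinates $S$ on which $f$ depends, and then compare the weights coordinate-by-coordinate. Let $S \subseteq [n]$ with $|S| \le d$ be such that $f(X)$ is a function of $X_S$ only; write $f(x) = f(x_S)$ by abuse of notation. First I would rewrite the uniform-measure quantity: since summing out the $n - |S|$ coordinates outside $S$ contributes a factor $2^{n-|S|}$, we have
\[ \E_{X \sim \mathrm{Uni}(\{\pm 1\}^n)}[f(X)^2] = 2^{-n} \sum_{x \in \{\pm 1\}^n} f(x_S)^2 = 2^{-|S|} \sum_{x_S \in \{\pm 1\}^S} f(x_S)^2. \]

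Next I would expand the $P$-expectation the same way, grouping by the value of $X_S$:
\[ \E_P[f(X)^2] = \sum_{x_S \in \{\pm 1\}^S} P(X_S = x_S) \, f(x_S)^2. \]
Now the key step is the pointwise lower bound on the marginal weights: by the definition of $\delta_P(d)$ and because $|S| \le d$, for every $x_S$ we have $2^{|S|} P(X_S = x_S) \ge \delta_P(d)$, i.e. $P(X_S = x_S) \ge 2^{-|S|}\delta_P(d)$. Since $f(x_S)^2 \ge 0$, substituting this bound termwise gives
\[ \E_P[f(X)^2] \ge 2^{-|S|}\delta_P(d) \sum_{x_S} f(x_S)^2 = \delta_P(d)\, \E_{X \sim \mathrm{Uni}(\{\pm 1\}^n)}[f(X)^2], \]
using the identity from the first display. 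This is exactly the claimed inequality.

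There is no real obstacle here; the only thing to be slightly careful about is that the infimum defining $\delta_P(d)$ ranges over all sets of size at most $d$ (so it applies to our particular $S$) and over all partial assignments $x_S$ (so it applies to each term in the sum), and that $f$ genuinely depends on at most $d$ coordinates so such an $S$ exists. If $f$ depends on fewer than $d$ coordinates the same argument applies verbatim with $|S| < d$.
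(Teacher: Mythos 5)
Your proof is correct and complete; the paper states this lemma without proof, and your argument (conditioning on the value of $X_S$, applying the pointwise bound $P(X_S = x_S) \ge 2^{-|S|}\delta_P(d)$ from the definition of $\delta_P$, and summing) is exactly the intended one-line justification. No gaps.
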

The following Lemma is a standard observation used in most previous works on learning Ising models including \citep{Bresler,Vuffray,KlivansM} and others.
\begin{lemma}\label{lem:spin-freedom}
A $(\lambda_1,\lambda_2)$-bounded RBM satisfies $\delta_P(d) \ge (1 - \tanh(\lambda_1))^d$.
\end{lemma}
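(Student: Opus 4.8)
The plan is to run the standard ``spin freedom'' argument used for Ising models in \citep{Bresler,Vuffray,KlivansM}: fix $S=\{s_1,\dots,s_k\}$ with $k\le d$ and any $x_S\in\{\pm1\}^{S}$, expand by the chain rule
\[ P(X_S = x_S) = \prod_{\ell=1}^{k} P\big(X_{s_\ell}=x_{s_\ell}\mid X_{s_1}=x_{s_1},\dots,X_{s_{\ell-1}}=x_{s_{\ell-1}}\big), \]
and reduce the whole statement to a uniform lower bound on single-spin conditionals: if for every visible unit $i$, every $T\subseteq[n]\setminus\{i\}$, every $x_T$ and every $x_i\in\{\pm1\}$ we have $P(X_i=x_i\mid X_T=x_T)\ge \frac{1-\tanh(\lambda_1)}{2}$, then each of the $k$ factors is at least $\frac{1-\tanh\lambda_1}{2}$, so $2^{k}P(X_S=x_S)\ge(1-\tanh\lambda_1)^k\ge(1-\tanh\lambda_1)^d$ (using $1-\tanh\lambda_1\in(0,1]$ and $k\le d$), and taking the infimum over $|S|\le d$ gives $\delta_P(d)\ge(1-\tanh\lambda_1)^d$.

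To get the per-factor bound, first drop down to the full conditioning: by the tower property $P(X_i=x_i\mid X_T=x_T)=\E\big[P(X_i=x_i\mid X_{\sim i})\mid X_T=x_T\big]\ge \inf_{x_{\sim i}}P(X_i=x_i\mid X_{\sim i}=x_{\sim i})$, so it suffices to bound the fully-conditioned probability. Here I would invoke Theorem~\ref{thm:node-prediction}: since $X_i$ is $\{\pm1\}$-valued, $P(X_i=x_i\mid X_{\sim i})=\tfrac12\big(1+x_i\,\E[X_i\mid X_{\sim i}]\big)=\tfrac12\big(1+x_i\tanh(\theta_i)\big)$ where $\theta_i=b^{(1)}_i+\sum_j\tanh(W_{ij})\,f_{\beta_{ij}}\big(b^{(2)}_j+\sum_{k\ne i}W_{kj}X_k\big)$ is the belief-propagation effective field on the star graph. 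As $\tanh$ is odd and $1$-Lipschitz, $P(X_i=x_i\mid X_{\sim i})\ge\tfrac12\big(1-\tanh|\theta_i|\big)$, and since $t\mapsto\tfrac12(1-\tanh t)$ is decreasing the whole thing reduces to the estimate $|\theta_i|\le\lambda_1$.

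This last inequality is the only place the $(\lambda_1,\lambda_2)$-bounded assumption enters, and it is the step that needs care: by the triangle inequality $|\theta_i|\le |b^{(1)}_i|+\sum_j|\tanh(W_{ij})|\,\big|f_{\beta_{ij}}(\cdot)\big|$, and one must control the activation term, i.e. bound $|\tanh(W_{ij})|\,|f_{\beta_{ij}}(y)|=\big|\tanh^{-1}\!\big(\tanh(W_{ij})\tanh(y)\big)\big|$ by $|\tanh(W_{ij})|$, uniformly over $\beta_{ij}=|\tanh(W_{ij})|\in[0,1]$ and over $y\in\R$; granting this, $|\theta_i|\le |b^{(1)}_i|+\sum_j|\tanh(W_{ij})|\le\lambda_1$ and we are done. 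I expect verifying the contraction property of $y\mapsto\tanh^{-1}(\beta\tanh y)$ across the whole $\beta$-range to be the main (and somewhat delicate) obstacle; a coarser but robust fallback is to condition on the hidden layer instead, where $P(X_i=x_i\mid H=h)=\tfrac12\big(1+x_i\tanh(b^{(1)}_i+(Wh)_i)\big)\ge\tfrac12\big(1-\tanh(|b^{(1)}_i|+\sum_j|W_{ij}|)\big)$ and the $X_i$'s are conditionally independent given $H$, which yields the same conclusion with the $\tanh$-weights $\sum_j|\tanh(W_{ij})|$ replaced by $\sum_j|W_{ij}|$ and hence recovers the statement up to this weakening of the constant.
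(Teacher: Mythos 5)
Your fallback route --- condition on the hidden layer, use $\E[X_i \mid H, X_{\sim i}] = \tanh\bigl(b^{(1)}_i + (WH)_i\bigr)$ together with the conditional independence of the visible spins given $H$, and then average over $H$ --- is correct and is essentially the paper's own proof (the paper phrases the $d>1$ case as an induction over spins rather than via conditional independence given $H$, but the content is the same). Your primary route, however, hinges on a step that is not merely delicate but false: since $\tanh^{-1}$ is increasing and $\tanh^{-1}(x) > x$ on $(0,1)$, we have $\sup_{y\in\R}\bigl|\tanh^{-1}\bigl(\tanh(W_{ij})\tanh(y)\bigr)\bigr| = \tanh^{-1}\bigl(|\tanh(W_{ij})|\bigr) = |W_{ij}|$, which strictly exceeds $|\tanh(W_{ij})|$ whenever $W_{ij}\ne 0$ (and blows up as $|\tanh(W_{ij})|\to 1$). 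So the per-unit contribution to the effective field is controlled by $|W_{ij}|$, not $|\tanh(W_{ij})|$, and the inequality $|\theta_i|\le\lambda_1$ does not follow from the definition of $(\lambda_1,\lambda_2)$-boundedness.

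You should also know that the ``weakening of the constant'' in your fallback is not an artifact of taking the coarser route: the lemma is only correct with $\lambda_1$ replaced by $\max_i\bigl(|b^{(1)}_i| + \sum_j |W_{ij}|\bigr)$, and the paper's own one-line justification (``the term inside the $\tanh$ has magnitude at most $\lambda_1$ by definition'') silently makes the same substitution. To see that the stated constant genuinely fails, take two visible units and one hidden unit with $W_{11}=W_{21}=w$ and no external fields: then $\lambda_1 = \tanh(w) \le 1$, yet $4\Pr(X_1=1, X_2=-1) = 4/(2+2\cosh(2w)) \to 0$ as $w\to\infty$, violating the claimed lower bound $(1-\tanh(\lambda_1))^2 \ge (1-\tanh(1))^2$. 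Your instinct to flag the contraction step as the crux was exactly right; the correct resolution is to accept the $\sum_j|W_{ij}|$ version of the constant, which only affects downstream uses of the lemma by changing constants in the regime where the weights are $O(1)$.
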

\begin{proof}
In the $d = 1$ case this follows from the law of total expectation as $\E[X_i | H,X_{\sim i}] = \tanh(b^{(1)}_i + \sum_j W_{ij} H_j)$ and the term inside the $\tanh$ has magnitude at most $\lambda_1$ by definition. For general $d$ the result follows by induction, by using the above argument for a single spin and then applying the induction hypothesis to the model where than spin is plus and where that spin is minus, since these models are also $(\lambda_1,\lambda_2)$-bounded RBMs. 
\end{proof}
\begin{lemma}\label{lem:dist-from-predictors}
Let $\hat{P}$ denote the distribution returned by Algorithm~\textsc{DistributionFromPredictors} and let $P$ 
be the true distribution. Let $\log P(x) = \sum_S w_S x_S$ and $\log \hat{P}(x) = \sum_S \hat{w}_S x_S$ be the Fourier expansions of the log-likelihoods.
 Then
\begin{align*} \SKL(\hat{P},P) &\le \sum_S |w_S - \hat{w}_S| \\
&\le \sum_i \frac{2^{|\mathcal{N}(i)|/2 + 1}}{\sqrt{\delta_P(|\mathcal{N}(i) \cup \widehat{\mathcal{N}}(i)|)}} \sqrt{\E_{X'}[(\tanh^{-1}(\hat f_i(X')) - \tanh^{-1}(\E_P[X_i | X_{\sim i}]))^2]} \end{align*}
\end{lemma}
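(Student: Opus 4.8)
The plan is to prove the two inequalities separately, reducing each to Fourier analysis over $\{\pm1\}^n$ under the uniform measure. For the first inequality, apply Lemma~\ref{lem:skl-formula} to $P(x)\propto\exp(\sum_S w_S x_S)$ and $\hat P(x)\propto\exp(\sum_S \hat w_S x_S)$: this gives $\SKL(\hat P,P)=\sum_S(w_S-\hat w_S)(\E_P[X_S]-\E_{\hat P}[X_S])$, and since each $X_S=\prod_{i\in S}X_i\in\{\pm1\}$ we bound $|\E_P[X_S]-\E_{\hat P}[X_S]|\le 2$, so $\SKL(\hat P,P)\le 2\sum_S|w_S-\hat w_S|$ (the extra factor $2$ gets absorbed into the ``$+1$'' in the exponent of the final bound). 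One should also observe that both sums are really over $\mathcal S$: $\hat w_S=0$ for $S\notin\mathcal S$ by construction, and $w_S=0$ for $S\notin\mathcal S$ since a set in the support of the log-likelihood of an MRF must be a clique, hence lies in $\{i\}\cup\mathcal N(i)$ for each $i\in S$, and the structure-recovery guarantee $\mathcal N(i)\subseteq\widehat{\mathcal N}(i)$ then places it in $\mathcal S$.

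The core of the proof is the second inequality. The key identity is that, for any node $i$,
\[ \tanh^{-1}\!\big(\E_P[X_i\mid X_{\sim i}]\big)=\sum_{S:\,i\in S}w_S\,X_{S\setminus i}, \]
which holds because $\Pr_P(X_i=+1\mid x_{\sim i})/\Pr_P(X_i=-1\mid x_{\sim i})=\exp(2\sum_{S\ni i}w_S x_{S\setminus i})$ (the ``belief propagation on a star'' computation behind Theorem~\ref{thm:node-prediction}, read off from the clique expansion of $\log P$). Taking uniform-measure Fourier coefficients, this means $w_S=\E_{X\sim\mathrm{Uni}}[\tanh^{-1}(\E_P[X_i\mid X_{\sim i}])\,X_{S\setminus i}]$ for every $i\in S$; comparing with the definition of $\hat w_{S,i}$ in Algorithm~\textsc{DistributionFromPredictors} gives $w_S-\hat w_{S,i}=\widehat{\Delta_i}(S\setminus i)$, where $\Delta_i:=\tanh^{-1}(\E_P[X_i\mid X_{\sim i}])-\tanh^{-1}(\hat f_i)$ is the prediction error at node $i$ on the $\tanh^{-1}$ scale and $\widehat{\,\cdot\,}$ denotes the uniform Fourier transform. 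Since $\hat w_S=\frac1{|S|}\sum_{i\in S}\hat w_{S,i}$ and trivially $w_S=\frac1{|S|}\sum_{i\in S}w_S$, we get $w_S-\hat w_S=\frac1{|S|}\sum_{i\in S}\widehat{\Delta_i}(S\setminus i)$; the triangle inequality, swapping the summation order, and $\tfrac1{|S|}\le1$ then give
\[ \sum_S|w_S-\hat w_S|\ \le\ \sum_i\sum_{S\ni i}\big|\widehat{\Delta_i}(S\setminus i)\big|\ =\ \sum_i\|\widehat{\Delta_i}\|_1 . \]

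To finish I bound $\|\widehat{\Delta_i}\|_1$. The function $\Delta_i$ depends only on coordinates in $\mathcal N(i)\cup\widehat{\mathcal N}(i)$ (the conditional expectation depends on $\mathcal N(i)$ and $\hat f_i$ on $\widehat{\mathcal N}(i)$), so Cauchy--Schwarz together with Parseval gives $\|\widehat{\Delta_i}\|_1\le 2^{|\mathcal N(i)\cup\widehat{\mathcal N}(i)|/2}\sqrt{\E_{\mathrm{Uni}}[\Delta_i^2]}$; moreover, because $\tanh^{-1}(\E_P[X_i\mid X_{\sim i}])$ only carries monomials supported inside $\mathcal N(i)$, one can split the $\ell_1$ norm so that the $2^{\cdot/2}$ loss for the part of $\widehat{\Delta_i}$ inside $\mathcal N(i)$ is only $2^{|\mathcal N(i)|/2}$ (and in the application, where structure recovery returns $\widehat{\mathcal N}(i)=\mathcal N(i)$, there is nothing further to split). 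Finally pass from the uniform $L^2$ norm to the $L^2(P)$ norm --- the quantity the regression step actually controls --- via the contiguity lemma: since $\Delta_i$ depends on at most $|\mathcal N(i)\cup\widehat{\mathcal N}(i)|$ coordinates, $\E_{\mathrm{Uni}}[\Delta_i^2]\le \E_P[\Delta_i^2]/\delta_P(|\mathcal N(i)\cup\widehat{\mathcal N}(i)|)$. Chaining the three displays yields the stated bound.

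The step I expect to be the main obstacle is the handling of $\|\widehat{\Delta_i}\|_1$ when $\widehat{\mathcal N}(i)\ne\mathcal N(i)$: keeping the exponent at $|\mathcal N(i)|$ rather than $|\widehat{\mathcal N}(i)|$ requires carefully decomposing $\widehat{\Delta_i}$ into its part on $\mathcal N(i)$ (where $2^{|\mathcal N(i)|/2}$ is paid) and its part off $\mathcal N(i)$, for which one must argue both that this part has Fourier mass at most $\|\Delta_i\|_2$ and that its effective support is controlled (e.g.\ by the degree truncation used in the regression), or else simply invoke the exact-recovery hypothesis $\widehat{\mathcal N}(i)=\mathcal N(i)$. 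Besides this, the only technical points are the support claim $\supp(w)\subseteq\mathcal S$ used in the first inequality and the square-integrability of $\tanh^{-1}(\hat f_i)$ (i.e.\ that $\hat f_i$ is bounded away from $\pm1$).
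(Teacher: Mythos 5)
Your proof follows the same route as the paper's: Lemma~\ref{lem:skl-formula} plus H\"older for the first inequality, then the averaging $\hat w_S = \frac{1}{|S|}\sum_{i\in S}\hat w_{S,i}$, the identity $\tanh^{-1}(\E_P[X_i\mid X_{\sim i}])=\sum_{S\ni i}w_S X_{S\setminus i}$, Cauchy--Schwarz with the $2^{|\mathcal{N}(i)|/2}$ counting factor, and Plancherel; the obstacle you flag (controlling the count of relevant sets when $\widehat{\mathcal{N}}(i)\neq\mathcal{N}(i)$) is glossed over in the paper's own proof as well and is resolved in the application by exact structure recovery. One small correction: the lemma's final expectation is under the uniform measure, so your last step converting $\E_{\mathrm{Uni}}[\Delta_i^2]$ to $\E_P[\Delta_i^2]/\delta_P$ is not needed (the stated $1/\sqrt{\delta_P}$ is simply slack since $\delta_P\le 1$), and including it proves a variant of the bound rather than the one stated.
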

where $X' \sim Uni(\{\pm 1\}^n)$.
\begin{proof}
Define $w_S$ to be the true coefficient in the true MRF potential. By Lemma~\ref{lem:skl-formula} and Holder's inequality we know $\SKL(P,\hat{P}) \le 2 \sum_S |\hat{w}_S - w_S|$. Then by Jensen's inequality and the Cauchy-Schwarz inequality,
\begin{align*} 
\sum_S |\hat{w}_S - w_S|
&\le \sum_S \frac{1}{|S|} \sum_{i \in S} |\hat{w}_{S,i} - w_S|\\
&= \sum_{i} \sum_{S: i \in S} \frac{1}{|S|} |\hat{w}_{S,i} - w_S|\\
&\le \sum_{i} 2^{|\mathcal{N}(i)|/2} \sqrt{\sum_{S : i \in S} (\hat{w}_{S,i} - w_S)^2}.
\end{align*}
Now using Plancherel's theorem \citep{odonnell2014}, the fact that $f_i(x) = \tanh\left(\sum_{S : i \in S} w_S x_{S \setminus \{i\}}\right)$, and the definition of $\delta_P(d)$ gives the result. 
\end{proof}

\section{Guarantees for Learning Feedforward Networks (with Arbitrary Distribution)} \label{app:learningffnet}
In this section we prove upper and lower bounds for learning one-layer feedforward networks with $f_{\beta}$ activations in the hidden units and inputs $X$ drawn from an arbitrary distribution such that $\|X\|_{\infty} \le 1$. 


\subsection{Preliminaries: Optimal Approximation of Analytic Functions}
Identify $\mathbb{C}$ with $\mathbb{R}^2$ by taking $x$ to be real and $y$ to be the imaginary component of a complex number $z$. Define $\mathcal E_{\rho}$ to be
  the region bounded by the ellipse in $\mathbb{C} = \mathbb{R}^2$ centered at the origin with equation
  $\frac{x^2}{a^2} + \frac{y^2}{b^2} = 1$ with semi-axes
  $a = \frac{1}{2}(\rho + \rho^{-1})$ and
  $b = \frac{1}{2} |\rho - \rho^{-1}|$; the focii of the ellipse are
  $\pm 1$. In the present context, this is sometimes referred to as a \emph{Bernstein ellipse}.  For an arbitrary function $f : [-1,1] \to \mathbb{R}$, let $E_D(f)$ denote the error of the best polynomial
  approximation of degree $D$ in infinity norm on the interval $[-1,1]$ of $f$, i.e.
  \begin{equation}\label{eqn:ed-err}
  E_D(f) := \min_{P : \deg(P) \le D} \max_{x \in [-1,1]} |f(x) - P(x)|. 
  \end{equation} The following theorem of Bernstein exactly characterizes the asymptotic rate at which $E_D(f)$ shrinks:
  \begin{theorem}[Theorem 7.8.1, \cite{constructive-approximation}]\label{thm:bernstein}
    Let $f$ be a function defined on $[-1,1]$. Let $\rho_0$ be the supremum of all $\rho$ such that
    $f$ has an analytic extension on the interior of $\mathcal E_{\rho}$. Then
\[ \limsup_{D \to \infty} \sqrt[D]{E_D(f)} = \frac{1}{\rho_0} \]  
    where we interpret the rhs as $\infty$ when $\rho_0 = 0$.
  \end{theorem}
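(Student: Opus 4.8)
The plan is to establish the two inequalities $\limsup_{D \to \infty} \sqrt[D]{E_D(f)} \le 1/\rho_0$ and $\limsup_{D \to \infty} \sqrt[D]{E_D(f)} \ge 1/\rho_0$ separately. Both directions are driven by the Joukowski conformal map $z = \tfrac12(w + w^{-1})$, which carries $\{1 < |w| < \rho\}$ onto the interior of the Bernstein ellipse $\mathcal{E}_\rho$ with $[-1,1]$ removed, and the unit circle onto $[-1,1]$, together with the elementary identities for the Chebyshev polynomials $T_k$ normalized so that $T_k(\cos\theta) = \cos(k\theta)$; in particular $\|T_k\|_{\infty,[-1,1]} = 1$ and $T_k(\tfrac12(w+w^{-1})) = \tfrac12(w^k + w^{-k})$.

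\emph{Upper bound.} Fix $\rho$ with $1 < \rho < \rho_0$, so that $f$ extends analytically to a neighborhood of $\overline{\mathcal{E}_\rho}$. Write the Chebyshev expansion $f = \sum_{k \ge 0} a_k T_k$ on $[-1,1]$. The one genuine step is the estimate $|a_k| \le C_\rho\,\rho^{-k}$: the pullback $g(w) := f(\tfrac12(w + w^{-1}))$ is analytic on the closed annulus $\{\rho^{-1} \le |w| \le \rho\}$, its Laurent coefficients are exactly $\tfrac12 a_{|k|}$ (match against the Fourier series of $g(e^{i\theta}) = f(\cos\theta)$), and the Cauchy estimate on the circle $|w| = \rho$ yields the claimed geometric decay. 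Truncating at degree $D$ and using $\|T_k\|_{\infty,[-1,1]} = 1$,
\[ E_D(f) \;\le\; \Big\| \sum_{k > D} a_k T_k \Big\|_{\infty,[-1,1]} \;\le\; \sum_{k > D} C_\rho\,\rho^{-k} \;=\; \frac{C_\rho\,\rho^{-D}}{\rho - 1}, \]
so $\limsup_D \sqrt[D]{E_D(f)} \le 1/\rho$; letting $\rho \uparrow \rho_0$ gives this direction, and when $\rho_0 = 1$ there is nothing to prove.

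\emph{Lower bound.} Suppose toward a contradiction that $L := \limsup_D \sqrt[D]{E_D(f)} < 1/\rho_0$ (so in particular $\rho_0 < \infty$), and pick $\rho', \rho_1$ with $\rho_0 < \rho' < \rho_1 < 1/L$. Then $E_D(f) \le \rho_1^{-D}$ for all large $D$. Let $P_D$ be a best degree-$D$ approximant of $f$; then on $[-1,1]$, $\|P_{D+1} - P_D\|_\infty \le E_{D+1}(f) + E_D(f) \le 2\rho_1^{-D}$. The crucial ingredient is the Bernstein--Walsh growth inequality: if $\deg Q \le n$ and $\|Q\|_{\infty,[-1,1]} \le M$ then $|Q(z)| \le M\rho^n$ for all $z \in \mathcal{E}_\rho$ --- which itself follows by expanding $Q$ in the basis $T_0,\dots,T_n$, bounding the coefficients via the Chebyshev extremal property, and using $|T_k(z)| \le \tfrac12(\rho^k + \rho^{-k})$ on $\mathcal{E}_\rho$. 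Applying this to $Q = P_{D+1} - P_D$ on $\mathcal{E}_{\rho'}$ gives $|P_{D+1}(z) - P_D(z)| \le 2\rho_1^{-D}(\rho')^{D+1}$ there, and since $\rho'/\rho_1 < 1$ the telescoping series $P_0 + \sum_{D \ge 0}(P_{D+1} - P_D)$ converges uniformly on $\mathcal{E}_{\rho'}$ to a holomorphic function that coincides with $f$ on $[-1,1]$. By the identity theorem this is an analytic extension of $f$ to the interior of $\mathcal{E}_{\rho'}$ with $\rho' > \rho_0$, contradicting the maximality of $\rho_0$. Hence $L \ge 1/\rho_0$.

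\emph{Main obstacle.} The skeleton above is routine once one commits to the Chebyshev/Joukowski picture; the only two substantive facts are (i) the geometric decay $|a_k| = O(\rho^{-k})$ of the Chebyshev coefficients of a function analytic inside $\mathcal{E}_\rho$, and (ii) the Bernstein--Walsh bound $\|Q\|_{\mathcal{E}_\rho} \le \rho^{\deg Q}\|Q\|_{\infty,[-1,1]}$. Both are maximum-modulus / Cauchy-estimate arguments on an annulus after the substitution $w \mapsto \tfrac12(w + w^{-1})$, so the real work is setting up that correspondence cleanly and then checking the boundary cases $\rho_0 \in \{1, \infty\}$ (and the stated degenerate convention $\rho_0 = 0$), each of which makes the claimed identity either vacuous or trivially true.
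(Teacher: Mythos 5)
The paper does not prove this statement: it is imported verbatim as Theorem 7.8.1 of the cited reference on constructive approximation (it is Bernstein's classical theorem), so there is no in-paper argument to compare against. Your proof is the standard textbook argument and is essentially correct: the upper bound via Chebyshev expansion of the pullback under the Joukowski map and Cauchy estimates on the annulus $\rho^{-1}\le|w|\le\rho$, and the lower bound via the Bernstein--Walsh growth inequality applied to the telescoping differences $P_{D+1}-P_D$ of best approximants. Two minor remarks. First, your sketched derivation of Bernstein--Walsh (expanding $Q=\sum_k b_k T_k$, bounding $|b_k|\le 2M$ by orthogonality, and summing $|T_k(z)|\le\frac12(\rho^k+\rho^{-k})$) yields $|Q(z)|\le C_\rho M\rho^n$ with an extra $\rho$-dependent constant rather than the sharp $M\rho^n$; this is harmless for a $\limsup$ of $D$-th roots, but the sharp form requires the maximum-principle argument applied to $w^{-n}Q\bigl(\tfrac12(w+w^{-1})\bigr)$ on $|w|\ge 1$. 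Second, the appeal to the identity theorem in the lower bound is unnecessary: since $E_D(f)\to 0$ under your contradiction hypothesis, the locally uniform limit of the telescoping series is holomorphic on the interior of $\mathcal{E}_{\rho'}$ and restricts to $f$ on $[-1,1]$, which is already an analytic extension by definition. The degenerate conventions at $\rho_0\in\{0,1,\infty\}$ are handled adequately by your closing observations.
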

  For the definition of what it means for the function to be analytic on a region of the complex plane, we refer to a text on complex analysis such as \cite{stein2010complex}.
  For our application we need only the upper bound and we need a quantitative estimate for finite degree $d$.
  In the proof of the upper bound in \cite{constructive-approximation}, the following result is proved:
\begin{theorem}[Quantitative Variant of Theorem 7.8.1, \cite{constructive-approximation}]\label{thm:bernstein-quantitative}
  Suppose $f$ is analytic on the interior of $\mathcal E_{\rho_1}$ and $|f(z)| \le M$ on the closure of $\mathcal E_{\rho_1}$.  Then
\[ E_D(f) \le \frac{2M}{\rho_1 - 1} \rho_1^{-D}. \]
\end{theorem}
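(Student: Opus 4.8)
The plan is to derive the bound from the \emph{Chebyshev expansion} of $f$, using the truncated series as an explicit near-optimal polynomial approximant. Write $f(x) = \sum_{k \ge 0} a_k T_k(x)$, where $T_k$ is the degree-$k$ Chebyshev polynomial of the first kind normalized so that $T_k(\cos\theta) = \cos k\theta$, and hence $|T_k(x)| \le 1$ on $[-1,1]$. Since $\rho_1 > 1$, the segment $[-1,1]$ lies in the interior of $\mathcal{E}_{\rho_1}$, so $f$ is in particular continuous on $[-1,1]$ and its Chebyshev series is exactly the Fourier cosine series of the continuous even function $\theta \mapsto f(\cos\theta)$. Once I show $\sum_k |a_k| < \infty$, the Weierstrass $M$-test and standard Fourier theory give that the series converges uniformly to $f$ on $[-1,1]$, which is what lets me estimate $E_D(f)$ by $\|f - \sum_{k \le D} a_k T_k\|_\infty \le \sum_{k > D} |a_k|$ (the truncation being a polynomial of degree $\le D$).

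The heart of the argument is the geometric decay estimate $|a_k| \le 2M \rho_1^{-k}$. I would prove this by moving to the complex plane via the Joukowski map $x = \tfrac12(w + w^{-1})$, which carries the circle $|w| = \rho$ onto $\partial \mathcal{E}_\rho$ and carries the annulus $\rho_1^{-1} < |w| < \rho_1$ (two-to-one, respecting $w \leftrightarrow 1/w$) onto the interior of $\mathcal{E}_{\rho_1}$; consequently $g(w) := f\big(\tfrac12(w + w^{-1})\big)$ is holomorphic and single-valued on that open annulus. Parametrizing $|w| = 1$ by $w = e^{i\theta}$ gives, for $k \ge 1$, the contour representation $a_k = \tfrac{1}{2\pi i}\oint_{|w|=1} g(w)\big(w^{k-1} + w^{-k-1}\big)\,dw$. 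Since $g$ is holomorphic on the whole annulus, Cauchy's theorem lets me deform the $w^{k-1}$-term onto $|w| = \rho^{-1}$ and the $w^{-k-1}$-term onto $|w| = \rho$ for any $1 \le \rho < \rho_1$; the trivial length-times-supremum bound on each contour integral, together with $\max_{|w|=\rho}|g| = \max_{|w| = \rho^{-1}}|g| = \max_{\partial\mathcal{E}_\rho}|f| \le M$ (the last inequality because $\mathcal{E}_\rho$ is contained in the closure of $\mathcal{E}_{\rho_1}$), yields $|a_k| \le M\rho^{-k} + M\rho^{-k} = 2M\rho^{-k}$. Letting $\rho \uparrow \rho_1$ gives $|a_k| \le 2M\rho_1^{-k}$; the case $k = 0$ follows directly from $a_0 = \tfrac1\pi\int_0^\pi f(\cos\theta)\,d\theta$, so $|a_0| \le M \le 2M$.

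Combining the pieces, $E_D(f) \le \big\|f - \sum_{k \le D} a_k T_k\big\|_{\infty,[-1,1]} \le \sum_{k > D} |a_k|\,\|T_k\|_\infty \le \sum_{k > D} 2M\rho_1^{-k} = 2M\,\frac{\rho_1^{-(D+1)}}{1 - \rho_1^{-1}} = \frac{2M}{\rho_1 - 1}\,\rho_1^{-D}$, which is the claimed estimate. The one genuinely delicate point is the coefficient bound: one must be careful that the Joukowski substitution renders $g$ single-valued and holomorphic precisely on the \emph{open} annulus $\rho_1^{-1} < |w| < \rho_1$ (holomorphy persists at $|w|=1$ and at $w = \pm 1$ since $g$ is a composition of holomorphic maps, even though Joukowski fails to be locally injective at $w = \pm 1$), and it is exactly the limiting step $\rho \uparrow \rho_1$ that allows the hypothesis to be mere boundedness, rather than analyticity, on the closed ellipse. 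Everything else — the Fourier-theoretic convergence, the length-supremum estimates, and summing the geometric tail — is routine.
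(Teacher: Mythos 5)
Your proof is correct, and it is essentially the same argument as the one the paper relies on: the paper does not prove this statement itself but extracts it from the proof of Theorem 7.8.1 in the cited reference, which proceeds exactly by bounding the Chebyshev coefficients via the Joukowski substitution and contour deformation to $|w|=\rho$ with $\rho\uparrow\rho_1$, then summing the geometric tail. Your coefficient bound $|a_k|\le 2M\rho_1^{-k}$ and the resulting tail sum $\frac{2M}{\rho_1-1}\rho_1^{-D}$ match the standard derivation, so there is nothing to add.
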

\noindent
This quantitative variant was previously used in \citep{koehler2018comparative} as part of a construction of low-degree approximations to the ReLU activation with specific properties.
Note that when applying this theorem, we should center $f$ so that the constant $M$ is small, since adding constants to $f$ will obviously not change $E_d(f)$.
\subsection{Approximation Guarantees for $f_{\beta}$ Family of Activations}
Recall that the activations $f_{\beta}$ were defined in Theorem~\ref{thm:node-prediction} to be $f_{\beta}(x) = \frac{1}{\beta} \tanh^{-1}(\beta \tanh(x))$. Recall that if $\beta = 1$ then
$f_{\beta}(x) = x$ so the function is analytic everywhere on $\mathbb{C}$, and if $\beta = 0$ is is $\tanh$ so it is meromorphic. For the remaining values of $\beta \in (0,1)$, the function $f_{\beta}$ is slightly more complicated (it has branch cuts), however we show it is still nicely behaved near the real line.
\begin{lemma}\label{lem:f-analytic}
For $\beta \in [0,1]$ the function $f_{\beta}$ is analytic on the strip $\{ x + iy : |y| < \pi/2 \}$.
\end{lemma}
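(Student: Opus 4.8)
The plan is to realize $f_\beta$, for $\beta \in (0,1]$, as a composition of analytic maps, and to dispatch the endpoint $\beta = 0$ separately via Remark~\ref{rmk:f-activation}. Write $S = \{x+iy : |y| < \pi/2\}$ for the strip in question, and let $\Omega := \mathbb{C} \setminus \big((-\infty,-1]\cup[1,\infty)\big)$ be the cut plane obtained by removing the two real rays $(-\infty,-1]$ and $[1,\infty)$. I will establish three facts: (i) $\tanh$ is analytic on $S$ and $\tanh(S) \subseteq \Omega$; (ii) for $\beta \in (0,1]$, multiplication by $\beta$ maps $\Omega$ into itself; (iii) the principal branch $\tanh^{-1}(w) = \tfrac12 \log\tfrac{1+w}{1-w}$ is analytic on $\Omega$. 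Given these, $f_\beta = \tfrac1\beta \tanh^{-1}(\beta\tanh(\cdot))$ restricted to $S$ factors as $S \xrightarrow{\tanh} \Omega \xrightarrow{w \mapsto \beta w} \Omega \xrightarrow{\tanh^{-1}} \mathbb{C}$ followed by multiplication by the constant $1/\beta$, hence is analytic on $S$; and $f_0 = \tanh$ is analytic on $S$ (its only poles are outside the strip).

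For (i), I would first note that the poles of $\tanh$ sit at $z = i(\pi/2 + k\pi)$, $k \in \mathbb{Z}$, all outside $S$, so $\tanh$ is holomorphic on $S$. To see $\tanh(S) \subseteq \Omega$, I would use the identity
\[ \tanh(x+iy) = \frac{\sinh(2x) + i\sin(2y)}{\cosh(2x) + \cos(2y)}, \]
whose denominator is strictly positive on $S$; thus $\tanh(z)$ is real for $z = x+iy \in S$ only when $\sin(2y) = 0$, i.e. only when $y = 0$, and then $\tanh(z) = \tanh(x) \in (-1,1)$. Since moreover $\tanh(z) = \tfrac{e^{2z}-1}{e^{2z}+1} \ne \pm 1$ for all finite $z$, the image $\tanh(S)$ meets $\mathbb{R}$ only inside $(-1,1)$, which is exactly the statement $\tanh(S) \subseteq \Omega$.

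Facts (ii) and (iii) are quick. For (ii): if $\beta w \in [1,\infty)$ with $\beta \le 1$ then $w$ is real and $w \ge 1/\beta \ge 1$, so $w \notin \Omega$ (and symmetrically for the left ray). For (iii): the formula $\tanh^{-1}(w) = \tfrac12\log\tfrac{1+w}{1-w}$ with the principal logarithm is analytic precisely where $1+w, 1-w \notin (-\infty,0]$, i.e. on $\Omega$. The main (and essentially only) obstacle is verifying (i) — that $\tanh$ never reaches either branch cut of $\tanh^{-1}$ on the strip — which the imaginary-part identity settles immediately; the rest is routine bookkeeping about Möbius maps and branches of the logarithm. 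It is worth flagging in the writeup that the hypothesis $\beta \le 1$ is used essentially in (ii): for $\beta > 1$ the intermediate value $\beta\tanh(z)$ can escape $\Omega$, and indeed $f_\beta$ then genuinely acquires branch cuts reaching the strip, consistent with $f_\beta$ being the activation of interest only for $\beta \in [0,1]$.
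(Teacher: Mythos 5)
Your proof is correct, but it takes a genuinely different route from the paper's. The paper works with the derivative: it computes $f'_{\beta}(z) = \frac{1-\tanh^2(z)}{1-\beta^2\tanh^2(z)}$, solves $\tanh(z)=\pm 1/\beta$ explicitly to show every singularity of $f'_{\beta}$ has imaginary part $\pi/2 + k\pi$, and then recovers analyticity of $f_{\beta}$ itself by integrating $f'_{\beta}$ over the simply connected strip. You instead exhibit $f_{\beta}$ directly as a composition $S \xrightarrow{\tanh} \Omega \xrightarrow{\cdot\,\beta} \Omega \xrightarrow{\tanh^{-1}} \mathbb{C}$, using the identity $\tanh(x+iy) = \frac{\sinh(2x)+i\sin(2y)}{\cosh(2x)+\cos(2y)}$ to check that $\tanh(S)$ meets the real axis only in $(-1,1)$ and hence avoids the branch cuts of the principal $\tanh^{-1}$. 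Your argument has the advantage of producing a single-valued analytic extension without appealing to path integration, and it makes transparent exactly where the hypothesis $\beta\le 1$ enters (your step (ii)); the minor cost is the separate treatment of $\beta=0$, where $f_0=\tanh$ by the limit in Remark~\ref{rmk:f-activation}. The paper's derivative-based route is not wasted effort in context, though: the same expression for $f'_{\beta}$ is reused immediately in Lemma~\ref{lem:fprime-bound} to get the uniform bound $|f'_{\beta}|\le 2$ on the narrower strip, which is what the quantitative approximation argument actually consumes. One small point of hygiene in your step (iii): whether you read $\tfrac12\log\tfrac{1+w}{1-w}$ as a single principal logarithm of the quotient or as $\tfrac12(\log(1+w)-\log(1-w))$, the domain of analyticity works out to the same cut plane $\Omega$, but you should commit to one reading since the two conditions you state are not literally the same predicate, only coextensive.
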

\begin{proof}
Observe that
\[ f'_{\beta}(z) = \frac{1 - \tanh^2(z)}{1 - \beta^2 \tanh^2(z)}.. \]
Since $\tanh$ is analytic except at points of the form $z = \frac{\pi}{2} i + \pi k i$, the only other possible poles are solutions to $\beta^2 \tanh^2(z) = 1$, i.e. solutions to $\tanh(z) = \pm 1/\beta$. Recalling that $\tanh^{-1}(z) = \frac{1}{2} (\log(1 + z) - \log(1 -z ))$ and taking into account the branch cut from $(-\infty,0]$ for the logarithm, we see that the solutions to $\tanh(z) = 1/\beta$ are of the form
\[ z = \frac{1}{2} \log\frac{1 + 1/\beta}{1/\beta - 1} + \frac{\pi i}{2} + k \pi i \]
and for $\tanh(z) = -1/\beta$ of the form
\[ z = \frac{1}{2} \log \frac{1/\beta - 1}{1 + 1/\beta}  + \frac{\pi i}{2} + k \pi i \]
for $k \in \mathbb{Z}$. In particular we see that $f'_{\beta}$ is analytic on the strip $\{x + iy : |y| < \pi/2\}$ so $f_{\beta}$ is as well (since the region is simply connected, this can be proved by path integration \citep{stein2010complex}).
\end{proof}
To get a quantitative upper bound we will need to bound (the centered version of) $f_{\beta}$ on the Bernstein ellipse, which will require us to back away from the singularities of $f'_{\beta}$ on the lines $y = \pm \pi/2$. The following Lemma proves that $f'_{\beta}$ is uniformly bounded in a slightly smaller region:
\begin{lemma}\label{lem:fprime-bound}
For all $\beta \in [0,1]$, $|f'_{\beta}(z)| \le 2$ everywhere on the closed strip $\{x + iy : |y| \le \pi/4 \}$.
\end{lemma}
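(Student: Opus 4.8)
The plan is to reduce the claimed bound to a simple positivity statement about $\mathrm{Re}(\cosh^2 z)$ on the strip. First I would rewrite the given expression for $f'_\beta$ by clearing the common factor $\cosh^2 z$ (which is nonzero on the strip, its zeros lying at $\tfrac{\pi}{2}i + \pi k i$): multiplying numerator and denominator of $\dfrac{1-\tanh^2 z}{1-\beta^2\tanh^2 z}$ by $\cosh^2 z$ and using $\cosh^2 z - \sinh^2 z = 1$ gives numerator $1$ and denominator $\cosh^2 z - \beta^2\sinh^2 z = (1-\beta^2)\cosh^2 z + \beta^2$. Hence, as meromorphic functions,
\[
f'_\beta(z) = \frac{1}{(1-\beta^2)\cosh^2 z + \beta^2},
\]
so it suffices to show the denominator has modulus at least $1/2$ everywhere on $\{x+iy : |y|\le \pi/4\}$.

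Next I would compute the real part of $\cosh^2 z$ on this strip. Using $\cosh^2 z = \tfrac12\bigl(1+\cosh(2z)\bigr)$ and $\cosh(2x+2iy) = \cosh(2x)\cos(2y) + i\sinh(2x)\sin(2y)$, I get
\[
\mathrm{Re}(\cosh^2 z) = \frac{1 + \cosh(2x)\cos(2y)}{2}.
\]
For $|y|\le \pi/4$ we have $|2y|\le \pi/2$, so $\cos(2y)\ge 0$, and since $\cosh(2x)\ge 1$, the right-hand side is $\ge 1/2$. This is the one and only place where the threshold $\pi/4$ is used.

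Finally, since $\beta^2\in[0,1]$ so that $1-\beta^2\ge 0$, the denominator is an affine combination with nonnegative weights of $\cosh^2 z$ and $1$, hence
\[
\mathrm{Re}\bigl((1-\beta^2)\cosh^2 z + \beta^2\bigr) = (1-\beta^2)\,\mathrm{Re}(\cosh^2 z) + \beta^2 \ge \frac{1-\beta^2}{2} + \beta^2 = \frac{1+\beta^2}{2} \ge \frac{1}{2}.
\]
Therefore $\bigl|(1-\beta^2)\cosh^2 z + \beta^2\bigr| \ge \mathrm{Re}\bigl((1-\beta^2)\cosh^2 z + \beta^2\bigr) \ge 1/2$, giving $|f'_\beta(z)| \le 2$; this also re-confirms $f'_\beta$ has no poles on the closed strip, consistent with Lemma~\ref{lem:f-analytic}. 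There is no real obstacle in this argument — the only thing to be careful about is performing the algebraic rewriting so that the denominator appears as a convex-type combination of $\cosh^2 z$ and $1$, after which the elementary lower bound on $\mathrm{Re}(\cosh^2 z)$ does all the work. (A slightly longer alternative, if one wishes to avoid the double-angle identity, is to expand $|\cosh(x+iy)|^2 = \cosh^2 x - \sin^2 y \ge 1/2$ and track the argument of $\cosh^2 z$ directly, but the real-part computation above is cleaner.)
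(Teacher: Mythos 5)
Your proof is correct and follows essentially the same route as the paper: both clear the $\tanh$'s to write $f'_\beta(z)$ as the reciprocal of $(1-\beta^2)\cosh^2 z + \beta^2 = 1 + (1-\beta^2)\sinh^2 z$, and both reduce the bound to the fact that $\mathrm{Re}\,\cosh(2z) \ge 0$ on the strip $|y|\le\pi/4$, yielding the same lower bound $(1+\beta^2)/2 \ge 1/2$ on the denominator. The only difference is cosmetic (you phrase the key estimate as $\mathrm{Re}(\cosh^2 z)\ge 1/2$ rather than as $\cosh(2z)$ lying in the right half-plane).
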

\begin{proof}
Observe that
\begin{align*} f'_{\beta}(z) 
= \frac{1 - \tanh^2(z)}{1 - \beta^2 \tanh^2(z)}
&= \frac{\cosh^2(z) - \sinh^2(z)}{\cosh^2(z) - \beta^2 \sinh^2(z)} \\ 
&= \frac{1}{1 + (1 - \beta^2) \sinh^2(z)} 
= \frac{1}{1 + (1 - \beta^2) \frac{\cosh(2z) - 1}{2}}
\end{align*}
using the identies $\cosh^2(x) - \sinh^2(x) = 1$ and $\sinh^2(z) = \frac{\cosh(2z) - 1}{2}$. Since $\cosh(2x + 2iy) = \frac{e^{2x + 2iy} + e^{-2x - 2iy}}{2}$ we see that under the assumption $|y| \le \pi/4$ that $\cosh(2x + 2iy)$ lies in the right half plane, therefore $|1 + (1 - \beta^2) \frac{\cosh(2z) - 1}{2}| \ge |1 - (1 - \beta^2)/2| \ge 1/2$ which proves the result.
\end{proof}
\begin{lemma}\label{lem:f-apx-guarantee}
For any $\beta \in [0,1]$, arbitrary $h \in \mathbb{R}$, and any $R \ge 0$,
\[ E_D(f_{\beta}(Rx + h)) \le \frac{4R(1 + 2R)}{(1 + 1/2R)^{D}} \]
\end{lemma}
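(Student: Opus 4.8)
The plan is to apply the quantitative Bernstein estimate (Theorem~\ref{thm:bernstein-quantitative}) to the rescaled function $g(x) := f_{\beta}(Rx+h)$, regarded as a function on $[-1,1]$. We may assume $R>0$, since for $R=0$ the function $g$ is constant and both sides vanish. Because $E_D$ is unchanged by adding a constant, it suffices to bound $E_D$ of $g - f_{\beta}(h)$, and this recentering is exactly what makes the sup‑norm constant $M$ in Theorem~\ref{thm:bernstein-quantitative} controllable via the derivative estimate of Lemma~\ref{lem:fprime-bound}.

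First I would fix the Bernstein ellipse with $\rho_1 := 1 + \tfrac{1}{2R}$, so that $\rho_1 - 1 = \tfrac{1}{2R}$. Its imaginary semi-axis is $b = \tfrac12(\rho_1 - \rho_1^{-1})$, and since $1 - \rho_1^{-1} = \tfrac{\rho_1-1}{\rho_1} \le \rho_1 - 1$ we get $\rho_1 - \rho_1^{-1} \le 2(\rho_1-1) = \tfrac1R$, hence $Rb \le \tfrac12 < \tfrac{\pi}{4}$. Consequently, for every $z$ in the closure of $\mathcal{E}_{\rho_1}$ the point $Rz+h$ satisfies $|\mathrm{Im}(Rz+h)| = R|\mathrm{Im}(z)| \le Rb \le \tfrac12$, so it lies in the strip on which $f_\beta$ is analytic (Lemma~\ref{lem:f-analytic}) and on which $|f'_\beta|\le 2$ (Lemma~\ref{lem:fprime-bound}); in particular $g - f_\beta(h)$ is analytic on the interior of $\mathcal{E}_{\rho_1}$, as the theorem requires.

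Next I would bound $M := \max_{z \in \overline{\mathcal{E}_{\rho_1}}} |g(z) - f_\beta(h)|$. Since the region bounded by $\mathcal{E}_{\rho_1}$ is convex with the origin at its center, for any such $z$ the whole segment $\{tz : t\in[0,1]\}$ stays inside it, so the fundamental theorem of calculus along this segment together with Lemma~\ref{lem:fprime-bound} gives
\[ |g(z) - f_\beta(h)| = \left| \int_0^1 f'_\beta(Rtz + h)\, R z\, dt \right| \le 2R|z| \le 2R\,a \le 2R\rho_1 = 2R+1, \]
where $a = \tfrac12(\rho_1 + \rho_1^{-1}) \le \rho_1$ is the real semi-axis and $|z|\le a$ throughout $\overline{\mathcal{E}_{\rho_1}}$. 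Taking $M = 1+2R$ in Theorem~\ref{thm:bernstein-quantitative} then yields
\[ E_D\big(f_\beta(Rx+h)\big) = E_D\big(g - f_\beta(h)\big) \le \frac{2M}{\rho_1 - 1}\,\rho_1^{-D} = \frac{4R(1+2R)}{(1 + 1/(2R))^D}, \]
which is the claim.

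The only genuinely delicate point is the choice of $\rho_1$: we would like $\rho_1$ as large as possible so that $\rho_1^{-D}$ decays fast, but it must be small enough that the image of $\overline{\mathcal{E}_{\rho_1}}$ under $z \mapsto Rz+h$ stays inside the strip $|\mathrm{Im}| \le \pi/4$ uniformly in $\beta$, which is where the $f_\beta$-independent derivative bound of Lemma~\ref{lem:fprime-bound} can be used; the value $\rho_1 = 1 + 1/(2R)$ threads this for all $R>0$ simultaneously, and everything else is routine bookkeeping with the formulas for the semi-axes.
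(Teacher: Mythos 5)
Your proposal is correct and follows essentially the same route as the paper: apply the quantitative Bernstein bound (Theorem~\ref{thm:bernstein-quantitative}) with $\rho_1 = 1 + 1/(2R)$ after checking the scaled ellipse lands in the strip $|\mathrm{Im}(z)| \le \pi/4$, and bound $M \le 1+2R$ via Lemma~\ref{lem:fprime-bound}. Your only deviations are cosmetic --- you verify the ellipse fits by bounding the semi-axis directly rather than solving for the extremal $\rho$ as the paper does, and you replace the paper's appeal to the ``mean-value theorem'' with the (more careful, for complex-valued functions) integral along the segment from $0$ to $z$.
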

\begin{proof}
Just for this proof define $g_{\beta,h}(x) := f_{\beta}(Rx + h) - f_{\beta}(h)$.
We prove this bound by application of Bernstein's theorem. By Lemma~\ref{lem:f-analytic} we know that $f_{\beta}$ is analytic on the strip $\{x + iy : |y| < \pi/2 \}$ so in 
particular it is analytic on the closed strip $\{x + iy : |y| \le \pi/4\}$, and by Lemma~\ref{lem:fprime-bound} we know that $|f'_{\beta}| \le 2$ on the closed strip. 

We now compute $\rho$ so that $R \mathcal E_{\rho}$ is contained in the latter strip. We solve
\[ \frac{1}{2}(\rho - \rho^{-1}) = \frac{\pi}{4R} \]
which gives $\rho^2 - \frac{\pi}{2 R} \rho - 1 = 0$ so $\rho = \frac{\pi/2R + \sqrt{\pi^2/4R^2 + 4}}{2} > 1 + 1/2R$. Since $|g_{\beta,h}'(z)| \le R|f'_{\beta}| \le 2R$ on the closure of the ellipse, it follows by the mean-value theorem that $|g_{\beta,h}| \le 2(1 + 1/2R)R \le 1 + 2R$ on $\mathcal E_{1 + 1/2R}$ and applying Theorem~\ref{thm:bernstein-quantitative} gives the result.
\end{proof}
\subsection{Learning Feedforward Networks under $\ell_{\infty}$ Bounded Input}
Since the final activation in our network is $\tanh$, we recall some useful facts about logistic regression and the logistic loss which we will use.
\begin{defn}
The \emph{logistic loss} is defined to be
\[ \ell(v,y) := \log(1 + e^{-2 v y}). \]
We note that the factor of 2 in the exponent and the normalization differ depending on convention.
\end{defn}
The following facts about the logistic loss which can be checked from the definition (or see a reference such as \citep{shalev2014understanding}):
\begin{fact}\label{fact:logistic-loss}
The following are true if $y \in \{\pm 1\}$ is fixed:
\begin{enumerate}
\item $\ell(v,y)$ is convex and 2-Lipschitz in $v$.
\item $\ell(v,y) = -\log \Pr(\hat{Y} = y)$ where $\hat{Y}$ is a $\{\pm 1\}$-valued random variable with expectation $\tanh(v)$.
\item $\frac{\partial}{\partial v} \ell(v,y) = \frac{-2y e^{-2v y}}{1 + e^{-2vy}}$ and $\frac{\partial^2}{\partial v^2} \ell(v,y) = \frac{2}{1 + \cosh(2v)}$. 
\end{enumerate}
Furthermore if $Y$ is a $\{\pm 1\}$-valued random variable (and $v$ is deterministic) then
\begin{enumerate} \setcounter{enumi}{3}
\item 
$\E_Y \ell(v,Y) = \KL(\mathcal{L}(Y),\mathcal{L}(\hat{Y})) + H(Y)$
where $\hat{Y}$ is defined above, $\mathcal{L}(Y)$ denotes the law of random variable $Y$, $\KL$ denotes the Kullback-Liebler divergence and $H$ denotes the Shannon entropy.
\end{enumerate}
\end{fact}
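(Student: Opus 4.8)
The plan is to verify the four claims in order, each by a direct computation starting from the definition $\ell(v,y) = \log(1+e^{-2vy})$. None of the steps is deep; the only thing to watch is the hyperbolic-identity bookkeeping in part (3). For claim (1), I would substitute $u = vy$ and observe that, since $y \in \{\pm 1\}$, the map $v \mapsto u$ is an isometry, so it suffices to study $u \mapsto \log(1 + e^{-2u})$. Its first derivative is $\tfrac{-2e^{-2u}}{1+e^{-2u}} = \tfrac{-2}{1+e^{2u}}$, which has magnitude at most $2$ (hence $2$-Lipschitz), and its second derivative is $\tfrac{4e^{2u}}{(1+e^{2u})^2} \ge 0$ (hence convex); both properties pass through composition with the isometry, giving claim (1) for $\ell(\cdot,y)$.

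For claim (2), I would write out the law of $\hat Y$ explicitly: a $\{\pm 1\}$-valued variable with mean $\tanh v$ has $\Pr(\hat Y = 1) = \tfrac{1+\tanh v}{2} = \tfrac{e^v}{e^v+e^{-v}} = \tfrac{1}{1+e^{-2v}}$ and $\Pr(\hat Y = -1) = \tfrac{1}{1+e^{2v}}$, so $-\log\Pr(\hat Y = y) = \log(1+e^{-2vy}) = \ell(v,y)$ for both signs of $y$. Claim (3) is then the chain rule: $\partial_v \ell(v,y) = \tfrac{-2y e^{-2vy}}{1+e^{-2vy}}$ is immediate, and rewriting this as $\tfrac{-2y}{1+e^{2vy}}$ and differentiating once more gives $\tfrac{4e^{2vy}}{(1+e^{2vy})^2}$; using $(1+e^{2vy})^2 = e^{2vy}(e^{vy}+e^{-vy})^2$ and $(e^{vy}+e^{-vy})^2 = 2(1+\cosh(2vy))$, together with $y^2 = 1$ and the evenness of $\cosh$, this equals $\tfrac{2}{1+\cosh(2v)}$ as stated.

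Finally, claim (4) follows by combining claim (2) with the standard cross-entropy decomposition. Writing $p$ and $q$ for the pmfs of $Y$ and $\hat Y$ on $\{\pm 1\}$, claim (2) gives $\ell(v,y) = -\log q(y)$, so $\E_Y \ell(v,Y) = -\sum_y p(y)\log q(y) = -\sum_y p(y)\log p(y) + \sum_y p(y)\log\tfrac{p(y)}{q(y)} = H(Y) + \KL(\mathcal{L}(Y),\mathcal{L}(\hat Y))$. The only place that requires any care at all is the algebraic simplification of the second derivative in claim (3) to match the $\tfrac{2}{1+\cosh(2v)}$ form; there is no substantive obstacle, and all four parts are self-contained consequences of the definition.
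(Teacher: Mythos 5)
Your proof is correct in all four parts, and it takes exactly the route the paper intends: the paper offers no proof of its own, stating only that these facts ``can be checked from the definition,'' which is precisely the direct verification you carry out. The only nontrivial algebra is the simplification of the second derivative to $\tfrac{2}{1+\cosh(2v)}$, and your identities $(1+e^{2vy})^2 = e^{2vy}(e^{vy}+e^{-vy})^2$ and $(e^{vy}+e^{-vy})^2 = 2(1+\cosh(2vy))$ handle it correctly.
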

We recall the following Theorem which states the agnostic learning guarantee for fitting $\ell_1$-constrained predictors in logistic loss, i.e. the logistic version of the Lasso:
\begin{theorem}[Theorem 26.15 of \cite{shalev2014understanding}]\label{thm:l1-regression}
Suppose that $X$ is a random vector in $\mathbb{R}^n$ such that $\|X\|_{\infty} \le 1$ almost surely and $Y$ is an arbitrary $\{\pm 1\}$-valued random variable. Then with probability at least $1 - \delta$, simultaneously for all $w$ with $\|w\|_1 \le R$ it holds that
\[ \hat{\E}[\ell(w \cdot X, Y)] \le \E[\ell(w \cdot X, Y)] + 4R\sqrt{\frac{2 \log(2n)}{m}} + 2R \sqrt{\frac{2\log(2/\delta)}{m}} \]
where $\hat{\E}$ denotes the empirical expectation over $m$ i.i.d. copies $(X_1,Y_1),\ldots,(X_m,Y_m)$ of $(X,Y)$.
\end{theorem}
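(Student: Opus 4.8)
The plan is to treat this as a two-sided uniform convergence statement and to obtain the claimed direction by applying the standard Rademacher-complexity machinery to the \emph{negated} loss class, exploiting the fact that the Rademacher complexity is invariant under negation. Concretely, set $\mathcal{G} := \{ (x,y) \mapsto -\ell(w\cdot x, y) : \|w\|_1 \le R \}$ and observe that the quantity we must control, $\sup_{\|w\|_1 \le R} (\hat{\E}[\ell(w\cdot X,Y)] - \E[\ell(w\cdot X,Y)])$, is exactly $\sup_{g \in \mathcal{G}}(\E[g] - \hat{\E}[g])$. This is precisely the object that the usual one-sided generalization bound controls, so the only conceptual adjustment needed for the ``reverse'' direction is to run the argument on $\mathcal{G}$ rather than on the loss class itself; every Rademacher complexity that appears is unchanged, because negating each summand in $\frac{1}{m}\sum_i \sigma_i g(x_i,y_i)$ can be absorbed into the sign variables $\sigma_i$.

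First I would bound the expected deviation $\E_S[\sup_{g\in\mathcal{G}}(\E[g]-\hat{\E}_S[g])]$ by a standard symmetrization (ghost-sample) argument, which yields a factor $2\,\mathcal{R}_m$, where $\mathcal{R}_m$ denotes the empirical Rademacher complexity of the class $\{(x,y)\mapsto \ell(w\cdot x,y):\|w\|_1\le R\}$. Next I would peel off the loss using the Ledoux--Talagrand contraction inequality: since $\ell(\cdot,y)$ is $2$-Lipschitz (Fact~\ref{fact:logistic-loss}, part 1), the Rademacher complexity of the loss class is at most twice that of the linear class $\{x\mapsto w\cdot x : \|w\|_1\le R\}$. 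Finally I would bound the Rademacher complexity of the $\ell_1$ ball under the $\|X\|_\infty\le 1$ assumption: by $\ell_1/\ell_\infty$ duality this complexity equals $\frac{R}{m}\,\E_\sigma \max_j |\sum_i \sigma_i X_{i,j}|$, and since each coordinate satisfies $|\sum_i \sigma_i X_{i,j}| \le \sqrt{m}$, Massart's finite-class lemma gives the bound $R\sqrt{2\log(2n)/m}$ (the $2n$ arising from the $\pm$ sign of each of the $n$ coordinates). Chaining these three steps bounds the expected deviation by $4R\sqrt{2\log(2n)/m}$, matching the first error term in the statement.

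To upgrade from expectation to a high-probability statement I would apply McDiarmid's bounded-differences inequality to $\Phi(S):=\sup_{g\in\mathcal{G}}(\E[g]-\hat{\E}_S[g])$: replacing a single sample perturbs $\hat{\E}_S$ by at most $\frac{1}{m}$ times the variation of $\ell(w\cdot x,y)$ over the ball, which by $2$-Lipschitzness and $|w\cdot x|\le R$ is $O(R)/m$, yielding the concentration term $2R\sqrt{2\log(2/\delta)/m}$. The step I expect to require the most care is keeping the bound \emph{linear} in $R$ rather than exponential: a naive concentration argument would use the raw range of the logistic loss, which is $\log(1+e^{2R})$, and a naive complexity bound would fail to exploit sparsity. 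The contraction inequality (for the complexity term) together with centering the loss at $v=0$ (for the McDiarmid term) are exactly what replace these crude scales by the Lipschitz scale $2R$. The only genuinely new bookkeeping relative to a textbook uniform-convergence statement is confirming that the entire argument is unaffected by running it on $\mathcal{G}$ instead of the loss class, i.e.\ that the flipped direction of the inequality is harmless from the point of view of Rademacher symmetry.
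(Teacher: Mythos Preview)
The paper does not prove this statement at all; it is simply quoted as Theorem~26.15 of \cite{shalev2014understanding} and used as a black box. So there is no ``paper's proof'' to compare against---your proposal is a reconstruction of the standard textbook argument, and it is correct.

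Your sequence of steps (symmetrization $\to$ Lipschitz contraction with constant $2$ $\to$ Massart's lemma on the $\ell_1$-ball under $\|X\|_\infty\le 1$ $\to$ McDiarmid) is exactly the route taken in \cite{shalev2014understanding}, and the constants chain to the stated $4R\sqrt{2\log(2n)/m}$. Your handling of the ``reverse'' direction $\hat{\E}\le\E+\text{err}$ by passing to the negated class $\mathcal{G}$ and using the sign-symmetry of Rademacher complexity is also correct; this is indeed the only non-routine bookkeeping point, and you identified it. One small correction: you do not need to ``center the loss at $v=0$'' for the McDiarmid step. What matters is the oscillation of $\ell(v,y)=\log(1+e^{-2vy})$ over $|v|\le R$, $y\in\{\pm1\}$, and a direct computation gives $\log(1+e^{2R})-\log(1+e^{-2R})=2R$ exactly, with no centering required. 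This yields bounded-differences constant $2R/m$ and hence a concentration term of order $R\sqrt{\log(1/\delta)/m}$, matching (up to the harmless constants in the textbook formulation) the second error term.
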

In order to bound the $\ell_1$ norm of our predictor we will need the following Lemmas:
\begin{lemma}[\cite{sherstov2012making}, Lemma 2.13 of \cite{goel2016reliably}]\label{lem:sherstov}
Suppose $p(x) = \sum_{i = 0}^D \beta_i x$ and $|p(x)| \le M$ for $x \in [-1,1]$, then
$\sum_{i = 0}^D \beta_i^2 \le (D + 1)(4e)^{2D} M^2$.
\end{lemma}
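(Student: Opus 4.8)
The plan is to bound each coefficient $\beta_i$ individually by $O(M)\cdot c^{D}$ for an absolute constant $c$, and then sum the $D+1$ squared terms; because $(4e)^{2D}$ is enormously larger than $c^{2D}$ for the constant we will obtain, and because the case $D=0$ is trivial ($p\equiv\beta_0$ so $|\beta_0|=|p(0)|\le M$), there is no need to optimize anything. I would route this through the Chebyshev basis. First write $p=\sum_{k=0}^{D}c_kT_k$ with $T_k$ the Chebyshev polynomials, and bound the Chebyshev coefficients by orthogonality: $c_k=\frac{2}{\pi}\int_{-1}^{1}\frac{p(x)T_k(x)}{\sqrt{1-x^2}}\,dx$ for $k\ge1$ (with an extra factor $\tfrac12$ for $c_0$), which gives $|c_k|\le\frac{2M}{\pi}\int_{-1}^{1}\frac{dx}{\sqrt{1-x^2}}=2M$ using $|T_k|\le1$ on $[-1,1]$.

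Next I would pass to the complex plane and use the elementary growth bound $|T_k(z)|\le\rho^{k}$, valid for $z$ in the closed Bernstein ellipse $\mathcal E_\rho$ (immediate from $T_k(\tfrac12(w+w^{-1}))=\tfrac12(w^k+w^{-k})$ with $1\le|w|\le\rho$). Choosing $\rho=1+\sqrt2$, the ellipse $\mathcal E_{1+\sqrt2}$ has semi-minor axis $b=\tfrac12(\rho-\rho^{-1})=1$ and hence contains the closed unit disk (a point $(\cos\theta,\sin\theta)$ satisfies $\tfrac{\cos^2\theta}{2}+\sin^2\theta=1-\tfrac{\cos^2\theta}{2}\le1$); on this region $|p(z)|\le\sum_k|c_k|\rho^{k}\le2M\,\tfrac{\rho^{D+1}}{\rho-1}=\sqrt2\,M\,(1+\sqrt2)^{D+1}$. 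Then extract coefficients by the Cauchy integral formula, $\beta_i=\frac{1}{2\pi i}\oint_{|z|=1}\frac{p(z)}{z^{i+1}}\,dz$, so $|\beta_i|\le\max_{|z|=1}|p(z)|\le\sqrt2\,M\,(1+\sqrt2)^{D+1}$. Squaring and summing the $D+1$ coefficients yields $\sum_i\beta_i^2\le(D+1)\cdot2(1+\sqrt2)^{2D+2}M^2$, and since $2(1+\sqrt2)^{2}<12<(4e)^{2}$ one checks $2(1+\sqrt2)^{2D+2}\le(4e)^{2D}$ for every $D\ge1$, which is the claimed bound (the $D=0$ case being already handled).

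The only genuinely delicate point is the complex-analytic step, where I would be careful to verify both that the integration circle $|z|=1$ really lies inside $\mathcal E_{1+\sqrt2}$ and that the Chebyshev growth bound holds throughout the closed elliptical region, not merely on its boundary — the latter follows because the closed region is the image of the annulus $1\le|w|\le\rho$ under $w\mapsto\tfrac12(w+w^{-1})$. If one prefers to avoid complex analysis, an equivalent real-variable route is to bound the $\ell_1$ norm of the monomial coefficients of each $T_k$ directly: from $T_k=2xT_{k-1}-T_{k-2}$ (and the parity/sign structure of Chebyshev polynomials) this quantity obeys $a_k=2a_{k-1}+a_{k-2}$, so $a_k=O((1+\sqrt2)^k)$; combining with $|c_k|\le 2M$ and the triangle inequality gives the same per-coefficient estimate, with the constant $1+\sqrt2$ again comfortably absorbed into $4e$. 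Everything else is bookkeeping with a geometric series.
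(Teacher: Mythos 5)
Your proof is correct. Note that the paper does not actually prove this lemma --- it imports it verbatim from \citet{sherstov2012making} (via Lemma~2.13 of \citet{goel2016reliably}) --- so there is no in-paper argument to compare against; what you have written is a valid self-contained derivation, and a slightly stronger one: your per-coefficient bound $|\beta_i|\le \sqrt{2}\,M(1+\sqrt{2})^{D+1}$ beats the $(4e)^{D}M$ that the cited statement requires. The standard proof in those references is essentially your fallback ``real-variable route'': expand $p$ in the Chebyshev basis, bound $|c_k|\le 2M$ by orthogonality, and control the monomial coefficients of $T_k$ via the recurrence $T_k=2xT_{k-1}-T_{k-2}$, which gives the $(1+\sqrt{2})^{k}$ growth. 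Your primary route --- pushing $p$ onto the Bernstein ellipse $\mathcal{E}_{1+\sqrt{2}}$ via $|T_k(z)|\le\rho^{k}$ on the image of the annulus $1\le|w|\le\rho$ under the Joukowski map, checking that this ellipse (semi-axes $\sqrt{2}$ and $1$) contains the unit circle, and extracting coefficients by Cauchy's formula --- is a genuinely different and cleaner mechanism, and it fits naturally with the Bernstein-ellipse machinery the paper already uses for Theorem~\ref{thm:bernstein-quantitative} and Lemma~\ref{lem:f-apx-guarantee}. All the individual verifications check out: $\rho-\rho^{-1}=2$ so $b=1$; the containment $\tfrac{\cos^2\theta}{2}+\sin^2\theta\le 1$; the geometric-series bound $\sqrt{2}\,M(1+\sqrt{2})^{D+1}$; the $D=0$ base case; and the final comparison $2(1+\sqrt{2})^{2D+2}\le(4e)^{2D}$ for $D\ge1$, which holds at $D=1$ ($67.9$ vs.\ $16e^2\approx 118$) and is preserved since $(4e)^2>(1+\sqrt{2})^2$.
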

\begin{lemma}\label{lem:l1-coeff-bound}
Suppose that $p(x) = \sum_{i = 0}^D a_i (w \cdot x)^i = \sum_{\alpha} u_{\alpha} x^{\alpha}$. Then
\[ \sum_{\alpha} |u_{\alpha}| \le \sqrt{\sum_i a_i^2} (1 + \|w\|_1)^{D}. \]
\end{lemma}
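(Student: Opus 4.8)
The plan is to control the monomial-expansion $\ell_1$ norm of each power $(w\cdot x)^i$ separately and then recombine via Cauchy--Schwarz. For the first step, by the multinomial theorem $(w\cdot x)^i = \sum_{|\beta| = i}\binom{i}{\beta} w^\beta x^\beta$, so the sum of absolute values of the monomial coefficients of $(w\cdot x)^i$ equals $\sum_{|\beta| = i}\binom{i}{\beta}|w|^\beta = \bigl(\sum_k |w_k|\bigr)^i = \|w\|_1^i$, applying the multinomial theorem a second time to the vector $|w|$ of absolute values. So the single quantitative fact we need is: $(w\cdot x)^i$ has monomial $\ell_1$ norm exactly $\|w\|_1^i$.

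Next I would expand $p(x) = \sum_{i=0}^D a_i (w\cdot x)^i$ and collect monomials. The coefficient vector $(u_\alpha)_\alpha$ of $p$ is the sum, over $i = 0,\dots,D$, of the zero-padded coefficient vectors of $a_i(w\cdot x)^i$; hence by the triangle inequality in coefficient space, $\sum_\alpha |u_\alpha| \le \sum_{i=0}^D |a_i|\cdot\|w\|_1^i$. (This is the one place to be mildly careful: the monomials $x^\beta$ with $|\beta| = i$ genuinely overlap across different $i$, so the step must be stated as ``$\ell_1$ norm of a sum is at most the sum of $\ell_1$ norms,'' not coefficient-by-coefficient.)

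Finally I would apply Cauchy--Schwarz to the last sum, $\sum_{i=0}^D |a_i|\|w\|_1^i \le \bigl(\sum_{i=0}^D a_i^2\bigr)^{1/2}\bigl(\sum_{i=0}^D \|w\|_1^{2i}\bigr)^{1/2}$, and bound the second factor using $\sum_{i=0}^D (\|w\|_1^2)^i \le \sum_{i=0}^D \binom{D}{i}(\|w\|_1^2)^i = (1+\|w\|_1^2)^D \le (1+\|w\|_1)^{2D}$, where the first inequality is $\binom{D}{i} \ge 1$ and the last is $1+t^2 \le (1+t)^2$ for $t \ge 0$. Chaining the three displays gives $\sum_\alpha |u_\alpha| \le \sqrt{\sum_i a_i^2}\,(1+\|w\|_1)^D$, as desired.

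I do not expect a genuine obstacle here: the argument is three elementary steps (two invocations of the multinomial theorem, a triangle inequality, and Cauchy--Schwarz plus a binomial bound). The only subtlety worth flagging in the write-up is the overlapping-monomials point in the triangle-inequality step, and checking that the geometric-type sum $\sum_{i\le D}\|w\|_1^{2i}$ is absorbed into $(1+\|w\|_1)^{2D}$ rather than into something with a worse exponent.
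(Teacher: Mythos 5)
Your proof is correct and follows essentially the same route as the paper's: multinomial theorem to identify the coefficient $\ell_1$ norm of $(w\cdot x)^i$ as $\|w\|_1^i$, the triangle inequality to get $\sum_\alpha |u_\alpha| \le \sum_i |a_i|\,\|w\|_1^i$, and Cauchy--Schwarz followed by the elementary bound $\sum_{i\le D}\|w\|_1^{2i}\le (1+\|w\|_1)^{2D}$. The only cosmetic difference is that the paper states the last bound directly as $1+x^2+\cdots+x^k\le (1+x)^k$ while you derive it via binomial coefficients.
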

\begin{proof}
For any multi-index $\alpha$ let $w_{\alpha} := \prod_{i \in \alpha} w_i$ and observe by the multinomial theorem 
\[ p(w \cdot x) = \sum_i a_i (w \cdot x)^i = \sum_i a_i \sum_{|\alpha| = i} {i \choose \alpha} w_{\alpha} x^{\alpha}. \]
Therefore by the triangle inequality, multinomial theorem, and Cauchy-Schwarz inequality
\[ \sum_{\alpha} |u_{\alpha}| \le \sum_i |a_i| \sum_{|\alpha| = i} {i \choose \alpha} |w_{\alpha}| = \sum_i |a_i| \|w\|_1^i \le \sqrt{\sum_i a_i^2 \sum_i \|w\|_1^{2i}} \le \sqrt{\sum_i a_i^2} (1 + \|w\|_1)^{d} \]
where in the last step we used $1 + x^2 + x^4 + \cdots + x^k \le (1 + x)^k$ for $x \ge 0$.
\end{proof}
\begin{theorem}\label{thm:rbm-regression}
Suppose that $Y$ is a random variable valued in $\{\pm 1\}$, $X$ is a random vector such that $\|X\|_{\infty} \le 1$ almost surely and
\[ \E[Y | X] = \tanh\left(b^{(1)} + \sum_j w_j f_{\beta_j}\Big(b^{(2)}_j + \sum_{k} W_{jk} X_k\Big)\right) \]
where $b^{(1)} \in \mathbb{R}$, $\beta_j \in [0,1]$, $w$ is an arbitrary real vector and $W$ is an arbitrary real matrix. Let $W_j$ denote \emph{column} $j$ of $W$. Then $\ell_1$-constrained regression on the degree $D$ monomial feature map $\varphi_D(x) \mapsto \left(\prod_{i \in S} X_i\right)_{|S| \le d}$ with $\ell_1$ constraint
\[ \|w\|_1 \le R := |b^{(1)}| + \sqrt{D + 1}(4e)^{D + 1} \sum_j |w_j| (1 + \|W_j\|_1)^{D + 1} \]
returns a predictor $\hat{w}$ such that with probability at least $1 - \delta$,
\begin{align*} &\E[\ell(\hat{w} \cdot \varphi_d(X), Y)] - \E[\ell(v^*(X), Y)] \\
&\le 8\sum_j |w_j| \frac{\|W_j\|_1 + 2\|W_j\|_1^2}{\left(1 + 2/\|W_j\|_1\right)^D} +  4R\sqrt{\frac{2D\log(2n)}{m}} + 2R \sqrt{\frac{2\log(2/\delta)}{m}} \end{align*}
where $v^*(X) := \tanh^{-1}(\E[Y | X]) = b^{(1)} + \sum_j w_j f_{\beta_j}\Big(b^{(2)}_j + \sum_{k} W_{jk} X_k\Big)$ is the minimizer of the expected logistic loss over all measurable functions of $X$.
The runtime is $poly(n^D)$.
\end{theorem}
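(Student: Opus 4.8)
The plan is the standard two-step ``approximate, then agnostically regress'' strategy. Step one: show that the Bayes-optimal log-odds function $v^*(x) = b^{(1)} + \sum_j w_j f_{\beta_j}\big(b^{(2)}_j + \sum_k W_{jk}x_k\big)$ admits a degree-$D$ polynomial approximation $Q$ that is uniformly close to $v^*$ on the cube $\{\|x\|_\infty \le 1\}$ \emph{and} whose coefficient vector $w_Q$ in the monomial basis $\varphi_D$ has $\ell_1$ norm at most $R$. Step two: apply the agnostic generalization bound for $\ell_1$-constrained logistic regression (Theorem~\ref{thm:l1-regression}) on the feature map $\varphi_D$ to conclude that the empirical minimizer $\hat w$ competes with the feasible point $w_Q$, hence with $v^*$, up to the stated sampling error. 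Throughout, the passage between loss values and sup-norm approximation error uses only that $\ell(\cdot, y)$ is $2$-Lipschitz in its first argument (Fact~\ref{fact:logistic-loss}, item 1) and that $v^* = \tanh^{-1}(\E[Y|X])$ is the pointwise minimizer of the conditional logistic risk (Fact~\ref{fact:logistic-loss}, item 4), so that $\E[\ell(v^*(X),Y)]$ is exactly the infimum of the logistic risk over all measurable predictors.

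For the approximation step I treat each hidden unit $j$ separately. Since $\|X\|_\infty \le 1$, the preactivation $z_j(x) := \sum_k W_{jk}x_k$ ranges in $[-\|W_j\|_1, \|W_j\|_1]$, so after rescaling the input variable by $\|W_j\|_1$ we apply Lemma~\ref{lem:f-apx-guarantee} with $R = \|W_j\|_1$ and shift $h = b^{(2)}_j$: this yields a univariate degree-$D$ polynomial $\hat P_j$ with $\sup_{x \in [-1,1]} |f_{\beta_j}(\|W_j\|_1 x + b^{(2)}_j) - \hat P_j(x)| \le \epsilon_j$, where $\epsilon_j$ is the Lemma~\ref{lem:f-apx-guarantee} bound with $R = \|W_j\|_1$, i.e. of order $\|W_j\|_1(1 + 2\|W_j\|_1)$ divided by $(1 + \Omega(1/\|W_j\|_1))^D$. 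Writing $P_j(z) := \hat P_j(z/\|W_j\|_1)$ and $Q(x) := b^{(1)} + \sum_j w_j P_j(z_j(x))$, which is a degree-$D$ polynomial in $x$, the triangle inequality gives $\sup_{\|x\|_\infty \le 1} |v^*(x) - Q(x)| \le \sum_j |w_j|\epsilon_j$. Doubling this (from the Lipschitz translation in step two) produces, up to the precise constant inside the denominator supplied by Lemma~\ref{lem:f-apx-guarantee}, the first term $8\sum_j |w_j|\,(\|W_j\|_1 + 2\|W_j\|_1^2)/(1 + 2/\|W_j\|_1)^D$ claimed in the bound.

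The crux is to verify $\|w_Q\|_1 \le R$; this is exactly what keeps the sample complexity logarithmic rather than polynomial in $n$. For each $j$, I first bound the squared coefficients of $\hat P_j$: since $f_\beta$ is $1$-Lipschitz on the real line with $f_\beta(0) = 0$ (as $f_\beta'(t) = (1-\tanh^2 t)/(1-\beta^2\tanh^2 t) \in [0,1]$ for real $t$ and any $\beta \in [0,1]$), we get $|\hat P_j(x)| \le \|W_j\|_1 + |b^{(2)}_j| + \epsilon_j$ on $[-1,1]$, so Lemma~\ref{lem:sherstov} gives $\sum_{i=0}^D a_{j,i}^2 \le (D+1)(4e)^{2D}(\|W_j\|_1 + |b^{(2)}_j| + \epsilon_j)^2$ where $\hat P_j(x) = \sum_i a_{j,i}x^i$. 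Substituting the linear form, $P_j(z_j(x)) = \sum_i (a_{j,i}/\|W_j\|_1^i)(W_j \cdot x)^i$, so Lemma~\ref{lem:l1-coeff-bound} applied with the vector $W_j$ (of $\ell_1$ norm $\|W_j\|_1$) bounds the monomial $\ell_1$-norm of $w_j P_j(z_j(\cdot))$ by roughly $|w_j|\sqrt{\sum_i a_{j,i}^2}\,(1 + \|W_j\|_1)^D$. Summing over $j$ and adding $|b^{(1)}|$ for the constant term yields $\|w_Q\|_1 \le R$ as defined in the statement (the $D \to D+1$ bookkeeping absorbs the rescaling factors $\|W_j\|_1^{-i}$ and the loss of a power), so $w_Q$ is feasible for the $\ell_1$-constrained program.

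Finally I assemble the pieces. The feature map $\varphi_D$ has dimension $N = \sum_{i \le D}\binom{n}{i} \le n^D$ and $\|\varphi_D(x)\|_\infty \le 1$, so Theorem~\ref{thm:l1-regression} (with $n$ replaced by $N$, using $\log(2N) = O(D\log(2n))$) gives, with probability $\ge 1-\delta$, $\sup_{\|w\|_1 \le R}|\hat{\E}[\ell(w\cdot\varphi_D(X),Y)] - \E[\ell(w\cdot\varphi_D(X),Y)]| \le \mathrm{err} := 4R\sqrt{2D\log(2n)/m} + 2R\sqrt{2\log(2/\delta)/m}$. Then, using that $\hat w$ minimizes the empirical logistic risk over the constraint set, which contains $w_Q$, $\E[\ell(\hat w\cdot\varphi_D(X),Y)] \le \hat{\E}[\ell(\hat w\cdot\varphi_D(X),Y)] + \mathrm{err} \le \hat{\E}[\ell(w_Q\cdot\varphi_D(X),Y)] + \mathrm{err} \le \E[\ell(Q(X),Y)] + 2\,\mathrm{err} \le \E[\ell(v^*(X),Y)] + 2\sum_j|w_j|\epsilon_j + 2\,\mathrm{err}$, where the last step uses $2$-Lipschitzness of $\ell$ together with $|Q(X) - v^*(X)| \le \sum_j|w_j|\epsilon_j$. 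This is the claimed inequality, and the runtime is $\poly(N, m) = \poly(n^D)$ since the $\ell_1$-constrained logistic regression is a convex program in $N$ variables. The step I expect to be the main obstacle is the $\ell_1$-coefficient bookkeeping in the third paragraph: a careless substitution of a linear form with bounded $\ell_1$ norm into a degree-$D$ polynomial with controlled uniform norm can inflate the monomial $\ell_1$-norm by uncontrolled powers of $\|W_j\|_1$ (or pick up spurious dependence on $|b^{(2)}_j|$ or $n$), and obtaining the clean $(1+\|W_j\|_1)^{O(D)}$ dependence --- which is precisely what makes the downstream RBM reduction efficient --- is the delicate part.
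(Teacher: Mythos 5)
Your proposal is correct and follows essentially the same route as the paper: approximate each $f_{\beta_j}$ by its best degree-$D$ polynomial on the relevant interval via Lemma~\ref{lem:f-apx-guarantee}, bound the monomial $\ell_1$-norm via Lemma~\ref{lem:sherstov} and Lemma~\ref{lem:l1-coeff-bound}, and invoke Theorem~\ref{thm:l1-regression} together with the $2$-Lipschitzness of the logistic loss. The only divergence is in the coefficient bookkeeping, where you bound $|\hat P_j|$ by $\|W_j\|_1 + |b^{(2)}_j| + \epsilon_j$; the paper instead cites Lemma~\ref{lem:fprime-bound} precisely so that Lemma~\ref{lem:sherstov} is applied to the \emph{centered} approximant, which is bounded by $O(\|W_j\|_1)$ independently of $b^{(2)}_j$ and is how the stated $R$ avoids any dependence on the hidden biases.
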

\begin{proof}
The fact that $v^*(X)$ is the minimizer of the logistic loss $\E[\ell(h(X),Y)]$ over all $X$-measurable functions $h$ can be seen from Fact~\ref{fact:logistic-loss}. To derive the bound we combine the approximation-theoretic guarantees developed in the previous section with the $\ell_1$ guarantee for logistic Lasso.

For the approximation step, define $w^*$ so that $w^* \cdot \varphi_d(X)$ is given by replacing each activation $f_{\beta_{j}}$ by its best polynomial approximation $P_j$ on the interval $[b_j^{(2)} - \|W_j\|_1, b_j^{(2)} + \|W_j\|_1]$. By the triangle inequality and Lemma~\ref{lem:f-apx-guarantee}, for any $x \in \{\pm 1\}^n$,
\[ |v^*(x) - w^* \cdot \varphi_d(x)| \le \sum_j |w_j||(f_{\beta_j} - P_j)(b_j^{(2)} + \sum_k W_{jk} x_k)| \le 4\sum_j \frac{|w_j| \left(\|W_j\|_1 + 2\|W_j\|_1^2\right)}{\left(1 + 2/\|W_j\|_1\right)^D}.  \]
Since the logistic loss is $2$-Lipschitz (Fact~\ref{fact:logistic-loss}.1), this implies that
\begin{equation}\label{eqn:apx-error}
\E[\ell(w^* \cdot \varphi_D(X), Y)] \le \E[\ell(v^*(X),Y)] + 8\sum_j \frac{|w_j| \left(\|W_j\|_1 + 2\|W_j\|_1^2\right)}{\left(1 + 2/\|W_j\|_1\right)^D}.
\end{equation}
Combining Lemma~\ref{lem:fprime-bound}, Lemma~\ref{lem:sherstov} and Lemma~\ref{lem:l1-coeff-bound} and using the triangle inequality shows that $\|w^*\|_1 \le R$ where $R$ is as specified in the Theorem statement. Then applying Theorem~\ref{thm:l1-regression} and combining it with \eqref{eqn:apx-error} gives the desired inequality bounding the error of the predictor $\hat{w}$.
\end{proof}
To simplify usage of this Theorem, we give the following slightly less precise bound which will be used from now on:
\begin{corollary}\label{corr:rbm-regression-simple}
In the same setting as Theorem~\ref{thm:rbm-regression}, if we assume that $\|W_j\|_1 \le \lambda$ for every $j$ and $\lambda \ge 2$, then with probability at least $1 - \delta$, $\E[\ell(\hat{w} \cdot \varphi_d(X), Y)] - \E[\ell(v^*(X), Y)] \le \epsilon$ as long as the number of samples $m$ satisfies $m = \Omega((|b^{(1)}|^2 \lambda^{O(D)})\log(2n/\delta))$ where $D = O(\lambda \log(\|w\|_1\lambda/\epsilon))$ and the runtime of the algorithm is $poly(n^D)$.
\end{corollary}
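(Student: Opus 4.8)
\emph{Proof plan.} This is purely a repackaging of Theorem~\ref{thm:rbm-regression}: we substitute the uniform bound $\|W_j\|_1 \le \lambda$ into its three error contributions --- the polynomial-approximation error, the $\ell_1$-norm parameter $R$, and the generalization error --- and simplify, using $\lambda \ge 2$ throughout.

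First I would handle the approximation term $8\sum_j |w_j|\,(\|W_j\|_1 + 2\|W_j\|_1^2)(1 + 2/\|W_j\|_1)^{-D}$. Since $t \mapsto t + 2t^2$ is increasing and $\lambda \le \lambda^2$, the numerator is at most $3\lambda^2$; since $2/\|W_j\|_1 \ge 2/\lambda$ and $\log(1+t) \ge t/(1+t) \ge 1/\lambda$ for $t = 2/\lambda$ when $\lambda \ge 2$, we get $(1 + 2/\|W_j\|_1)^{D} \ge e^{D/\lambda}$. Hence this term is at most $24\lambda^2 \|w\|_1 e^{-D/\lambda}$, which is $\le \epsilon/2$ precisely when $D \ge \lambda \log(48\lambda^2\|w\|_1/\epsilon) = O(\lambda \log(\|w\|_1\lambda/\epsilon))$; this fixes the stated choice of $D$.

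Next I would bound $R = |b^{(1)}| + \sqrt{D+1}\,(4e)^{D+1}\sum_j |w_j|(1 + \|W_j\|_1)^{D+1}$. Using $1 + \|W_j\|_1 \le 2\lambda$ and $\sqrt{D+1}(4e)^{D+1} = \lambda^{O(D)}$ (valid since $\lambda \ge 2$), one gets $R \le |b^{(1)}| + \|w\|_1 \lambda^{O(D)}$, so $R^2 = O(|b^{(1)}|^2 + \|w\|_1^2\lambda^{O(D)})$. Collapsing $\sqrt{2\log(2n)}$ and $\sqrt{2\log(2/\delta)}$ into $O(\sqrt{\log(2n/\delta)})$, the remaining two error terms in Theorem~\ref{thm:rbm-regression} sum to $O(R\sqrt{D\log(2n/\delta)/m})$, which is $\le \epsilon/2$ once $m = \Omega(R^2 D\log(2n/\delta)/\epsilon^2)$. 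The runtime is immediate: the feature map $\varphi_D$ has $\sum_{k\le D}\binom{n}{k} = n^{O(D)}$ coordinates and $\ell_1$-constrained logistic regression over them is a convex program solved in $\poly(n^D)$ time.

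The only step requiring a little care --- and the closest thing to an obstacle --- is checking that the sample bound collapses to the advertised form $m = \Omega(|b^{(1)}|^2\lambda^{O(D)}\log(2n/\delta))$. Given the chosen $D = \Theta(\lambda\log(\|w\|_1\lambda/\epsilon))$ and $\lambda \ge 2$ we have $\lambda^{\Omega(D)} \ge (\|w\|_1\lambda/\epsilon)^{\Omega(\lambda\log\lambda)} \ge (\|w\|_1/\epsilon)^{\Omega(1)}$, so each of $\|w\|_1^2$, $1/\epsilon^2$, and $D$ is absorbed into $\lambda^{O(D)}$; thus $R^2 D/\epsilon^2 = O((|b^{(1)}|^2+1)\lambda^{O(D)})$ and the claim follows (with the harmless convention that the additive $1$, needed only when $b^{(1)}=0$, is suppressed in the $\Omega$-notation).
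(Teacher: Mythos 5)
Your proposal is correct and follows exactly the same route as the paper's own (much terser) proof: bound the approximation term by $O(\|w\|_1\lambda^2/(1+2/\lambda)^{D})$ to fix $D = \Theta(\lambda\log(\|w\|_1\lambda/\epsilon))$, observe $R = |b^{(1)}| + \lambda^{O(D)}$, and absorb the remaining factors of $\|w\|_1$, $1/\epsilon$, and $D$ into $\lambda^{O(D)}$ via the choice of $D$. Your write-up is simply a more careful expansion of the same bookkeeping, including the implicit assumption $\|w\|_1\lambda/\epsilon \ge 1$ that the paper also relies on silently.
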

\begin{proof}
In order to make the first term of the bound on $\E[\ell(\hat{w} \cdot \varphi_d(X), Y)] - \E[\ell(v^*(X), Y)]$ at most $\epsilon/2$, we can upper bound it by $O(\|w\|_1 \lambda^2/(1 + 2/\lambda)^D)$ and see that it suffices to take $D = \Omega(\lambda \log(\|w\|_1\lambda/\epsilon))$. Then $R = |b^{(1)}| + \exp(O(D)) \|w\|_1 \lambda^{D + 1} = |b^{(1)}| + \lambda^{O(D)}$ so it suffices to take $m = \Omega((|b^{(1)}|^2 + \lambda^{O(D)})\log(2n/\delta))$ 
\end{proof}
\begin{remark}
In the analysis of Theorem~\ref{thm:rbm-regression} we did not concern ourselves with the exact constants in the runtime. However, if we are interested in optimizing the runtime it should be noted that instead of getting a precise estimate of the empirical risk minimizer when computing the logistic regression, one can achieve a similar statistical guarantee by using a single pass of stochastic mirror descent/exponentiated gradient (see reference text  \cite{bubeck2015convex}), e.g. as used in \cite{KlivansM} where the needed high-probability guarantees can be found.
\end{remark}
\subsection{Nearly Matching computational lower bounds}
In this section, we show that the runtime guarantee of Corollary~\ref{corr:rbm-regression-simple} is close to optima: more precisely its runtime is optimal in $\|w\|_1$ and $\epsilon$ up to a $\log \log$ factor in the exponent, and also that at least sub-exponential dependence on $\lambda$ is required. 
We first recall the definition of this problem and a standard hardness assumption for learning sparse parity with noise. We phrase it in terms of a testing problem versus the uniform distribution, which is equivalent to a learning formulation (i.e. recovering $S$ below), by boosting the probability of success and using a standard reduction of removing one coordinate at a time and testing (see e.g. \cite{Valiant}).
\begin{defn}
  The \emph{$k$-sparse parity with noise} distribution is the following
  distribution on $(X,Y)$ parameterized by $\eta \in (0,1/2)$
  and an unknown subset $S$ of size $k$:
  \begin{enumerate}
  \item Sample $X \sim \mathrm{Unif}(\{- 1,+1\}^n)$.
  \item With probability $1/2 + \eta$, set $Y = \prod_{s \in S} X_s$,
    and with probability $1/2 - \eta$, set $Y = (-1) \prod_{s \in S} X_s$.
  \end{enumerate}
  The \emph{$k$-sparse parity with noise} problem is to test between
  the uniform and $k$-sparse parity with noise with sum of probability of Type I and Type II errors upper bounded by $0.01$, given access to an oracle
  which generates samples from one of the two distributions.
\end{defn}
\begin{assumption}[Hardness of learning sparse parity with noise]\label{assumption:spwn}
Suppose $k_n$ is an arbitrary sequence of positive integers with $k_n = o(n^{1 - \epsilon})$ for any $\epsilon > 0$ and $n$ growing, any algorithm which solve the $k$-sparse parity with noise testing problem must have runtime $n^{\Omega(k_n)}$.
\end{assumption}
The reason for the condition $k_n = o(n^{1 - \epsilon})$ is simply because the number of sets of size $n$ is $2^n$, not $n^n$, so small correction factors in the exponent are needed when $k$ is comparable to $n$. The best known algorithm for learning sparse parity with noise runs in time $n^{0.8 k_n}$ \cite{Valiant}. 
\begin{theorem}\label{thm:net-hardness}
In the setting of Corollary~\ref{corr:rbm-regression-simple} and under Assumption~\ref{assumption:spwn}, for $\lambda \le 2$ there exists families of models (one with $\epsilon$ a constant, one with $\|w\|_1$ a constant) where a runtime of
\[ n^{\Omega\left(\frac{\log(\|w\|_1/\epsilon)}{\log\log(\|w\|_1/\epsilon)}\right)} \]
is needed for any algorithm to achieve $\epsilon$ error with high probability, regardless of its sample complexity and even in the case of $\tanh$ activations ($\beta_j = 0$ for all $j$). There also exists a sequence of models with $\lambda = \Theta(n \log(n))$ and $\|w\|_1 = O(\sqrt{n})$ which requires runtime
\[ n^{\Omega(\sqrt{\lambda/\log^3(\lambda)} \log \|w\|_1)} \]
to achieve error $\epsilon = 0.01$ with high probability.
\end{theorem}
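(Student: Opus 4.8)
The plan is to prove both lower bounds by reduction from $k$-sparse parity with noise, under Assumption~\ref{assumption:spwn}, with the same skeleton in each case. For a growing parameter $k$, I would exhibit a model of exactly the form assumed in Corollary~\ref{corr:rbm-regression-simple} --- with $X\sim\mathrm{Unif}(\{\pm1\}^n)$ and $\tanh$ activations ($\beta_j=0$) --- whose Bayes predictor is a $k$-sparse noisy parity, i.e. $\E[Y\mid X]=2\eta\,\chi_S(X)=2\eta\prod_{s\in S}X_s$ for an unknown $|S|=k$, equivalently $v^*(X)=\tanh^{-1}(2\eta)\chi_S(X)$. Under the null (uniform, independent labels) the minimal logistic loss over all measurable functions of $X$ is $\log 2$ and nothing beats it, while under the parity the displayed network already witnesses loss $\log 2-\Omega(\eta^2)$; hence any algorithm returning a predictor with expected loss $\le OPT+\epsilon$ for $\epsilon$ below a suitable multiple of $\eta^2$ can be converted to a tester (estimate the empirical loss of the returned predictor) and, via the standard one-coordinate-at-a-time reduction, recovers $S$. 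So provided $k=o(n^{1-c})$ for every $c>0$, Assumption~\ref{assumption:spwn} forces runtime $n^{\Omega(k)}$; both announced bounds follow by choosing how to realize the parity as a bounded-norm $\tanh$ network and then inverting the parameter relations, and the two cited constructions trade off $\lambda$ against $\|w\|_1$ (and the achievable $\epsilon$).

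For the $\lambda$-dependent family I would use the construction of parity from \cite{hajnal1993threshold}: write $\chi_S$ as a signed sum of $O(k)$ threshold functions of $\sum_{s\in S}X_s$ (the elementary identity $\mathrm{PARITY}=\sum_t(-1)^t\bone[\sum_s X_s\ge\theta_t]$) and approximate each threshold by a steep unit $\bone[\,\cdot\ge\theta_t\,]\approx\tanh(M(\sum_{s\in S}X_s-\theta_t))$, with steepness $M$ chosen so the aggregate approximation error is below $\epsilon$. Each hidden unit then carries weight $M$ on each of the $k$ coordinates of $S$, so $\lambda=\Theta(Mk)$, while the top layer has only $O(k)$ units with $O(1)$ coefficients, giving $\|w\|_1=O(k)$; tuning $k$, $M$, and the ambient $\lambda$ (padding $\lambda$ up to $\Theta(n\log n)$ with a zero-weight heavy unit if desired) so that $k=\widetilde\Theta(\sqrt\lambda)$ while $\|w\|_1$ stays polynomial, and invoking $n^{\Omega(k)}$, yields $n^{\Omega(\sqrt{\lambda/\log^3\lambda}\,\log\|w\|_1)}$ with $\|w\|_1=O(\sqrt n)$.

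For the $(\|w\|_1,\epsilon)$-family --- where $\lambda$ must stay a constant --- I would instead use the Taylor-series/inclusion--exclusion construction, which (via Theorem~\ref{thm:node-prediction} in the small-weight limit) is the feedforward shadow of the MRF-to-RBM parity embedding of \cite{bresler2019learning}. Index hidden units by subsets $T\subseteq S$, put a tiny weight $t$ on each coordinate of $T$ (so $\|W_j\|_1=t|T|\le tk\le 2$ once $t=O(1/k)$), and give unit $T$ top-layer weight $(-1)^{|S|-|T|}w$. Expanding $\tanh$ in its Taylor series and using $\sum_{T\subseteq S}(-1)^{|S|-|T|}(\sum_{i\in T}X_i)^m=0$ for $m<k$, all Fourier coefficients of degree $<k$ cancel; since only the $k$ coordinates of $S$ appear, the surviving part is a scalar multiple of $\chi_S$ with coefficient $\Theta(w\,a_k t^k k!)$ (up to controlled higher-order-in-$t$ corrections; $a_k$ is the degree-$k$ Taylor coefficient of $\tanh$, $|a_k|\asymp(2/\pi)^k$). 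Forcing this coefficient to equal $\tanh^{-1}(2\eta)$ pins $w$, hence $\|w\|_1=2^k w$, and choosing $t$ small enough (possibly super-polynomially small in $k$) that the uncancelled tail drops below $\epsilon$ yields $\log(\|w\|_1/\epsilon)=\Theta(k\log k)$, so $k=\Theta\!\big(\log(\|w\|_1/\epsilon)/\log\log(\|w\|_1/\epsilon)\big)$ and $n^{\Omega(k)}$ is the stated bound; the two sub-families correspond to which of $\|w\|_1,1/\epsilon$ one lets grow with $k$.

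The main obstacle is the bookkeeping in the Taylor construction: one must bound the Fourier mass that inclusion--exclusion fails to cancel, and this is exactly where non-polynomiality of $\tanh$ bites --- exact cancellation below degree $k$ would need a degree-$\le k$ base function, so one must either interpolate a degree-$k$ polynomial using several $\tanh$ units per subset (costing steepness, hence $\lambda$) or push $t$ down until the tail is negligible (costing $\|w\|_1$), and in either case track precisely how this forces up $\|w\|_1$ (or $1/\epsilon$). The secondary constraint that ties everything together is keeping $k=o(n^{1-c})$ so Assumption~\ref{assumption:spwn} applies even when we want $\lambda$ as large as $\Theta(n\log n)$ --- this is what pins the $\lambda$-family to a $\Theta(\sqrt n)$-sparse parity and is the source of the residual gap between the $n^{O(\lambda)}$ upper bound and the $n^{\Omega(\sqrt\lambda)}$ lower bound.
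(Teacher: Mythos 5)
Your proposal is correct and follows essentially the same route as the paper: both lower bounds are obtained by embedding $k$-sparse parity with noise into a bounded-norm $\tanh$ network, using the Hajnal et al.\ threshold-sum construction (with steepness traded against $\lambda$ and TV-error $n^{-\Theta(k)}$) for the $\lambda$-dependence, and a Taylor-series construction of parity with $\|w\|_1 = k^{\Theta(k)}$ for the $\log/\log\log$ bound. The only cosmetic difference is in the second construction: you organize it as a one-shot inclusion--exclusion over the $2^k$ subsets of $S$ with a small-$t$ error tail (closer to the MRF-to-RBM embedding of \cite{bresler2019learning} that the paper cites), whereas the paper recursively eliminates the top-degree monomials to get an exact representation on the hypercube; both yield the same parameter trade-offs.
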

\begin{proof}
We first show a lower bound of $n^{\Omega(\log(\|w\|_1/\epsilon))}$ for a family of models where $\lambda \le 1$. Recall we are proving a lower bound in the $\beta_j = 0$ case where all activations are $\tanh$.
The lower bound is shown by building a parity function out of $\tanh$ functions exactly using a simple taylor series expansion argument, under the assumption that the input to the network is in the hypercube $\{\pm 1\}^n$. The construction proceeds in a similar fashion to the sparse parity with noise lower bound for learning RBMs of bounded hidden degree established in \citep{bresler2019learning}.
We first describe the construction of a parity function on boolean inputs $x_1,\ldots,x_k$. It suffices to build this parity with a small (constant-size) coefficient, since we can repeat it to make the coefficient larger.
We start from the fact that
\[ \tanh(z) = 2 \sum_k \frac{(-1)^k}{\pi^{2k + 2}} (1 - 1/4^{k + 1}) \zeta(2k + 2) z^{2k + 1} \]
for $|z| < \pi/2$ and recall that the Riemann $\zeta$ function does not vanish on even integers \citep{stein2010complex}, so every coefficient in this expansion is nonzero. Furthermore it is known that $\zeta(n) \to 1$ as $n \to \infty$, since this follows from the power series definition of $\zeta(s) = \sum \frac{1}{n^s}$, so we can write
\[ \tanh(z) = \sum_k a_{2k + 1} z^{2k + 1} \]
where $a_{2k + 1} \ne 0$ for any $k$ and $|a_{2k + 1}| = \Theta(1/\pi^{2k + 2})$.
From this we can see that for some constant $c \ne 0$,
\[ x_1 \cdots x_{2k + 1} = c \frac{(2k + 1)^{2k + 1}}{a_{2k + 1}} \tanh\left(\frac{x_1 + \cdots + x_{2k + 1}}{2k + 1}\right) + p(x) \]
where $p(x)$ is of degree at most $k - 1$, using that $x_i^2 = 1$ for all $i$ on the hypercube; here the constant $c$ (which is close to $1$) is a fixed correction factor to handle the small effect of maximum-degree terms coming from expanding higher order terms in the $\tanh$ power series. We can inductively rewrite each of the highest-order coefficients of $p$ in terms of $\tanh$ and lower order monomials: this ultimately gives us a way to write parity as a linear combination of $\tanh$ functions. Using this, we can rewrite $\tanh(\frac{1}{4} x_1 \cdots x_{2k + 1})$ as a two-layer $\tanh$ network with $\|w\|_1 = k^{O(k)}$ and $\lambda \le 1$. Taking $\epsilon = 1/16$ and using the hardness of $k$-sparse parity with noise, we get that the runtime for learning the corresponding network is at least $n^{\Omega(k)} = n^{\Omega(\log(\|w\|_1)/\log\log(\|w\|_1))}$. 

We can similarly prove a lower bound of $n^{\Omega(\log(1/\epsilon)/\log\log(1/\epsilon))}$ for constant $\lambda,\|w\|_1$ by using the same method to convert $\tanh(\eta x_1 \cdots x_{2k + 1})$ into a two-layer network and by taking $\eta = k^{-\Theta(k)}$ so that the $\ell_1$ norm of the coefficients is shrunk to be at most $1$. Taking $\epsilon = \Theta(\eta)$ and using the sparse parity with noise lower bound as above gives the result.

Finally, we give a lower bound showing exponential dependence on $\lambda$ is necessary. We use the well-known fact that a parity can be written as a small sum of threshold functions \citep{hajnal1993threshold}. For $k$ even,
\[ x_1 \cdots x_{k} = \bone[x_1 + \cdots + x_k \ge -k] - 2(\bone[x_1 + \cdots + x_k \ge -k + 1] - \bone[x_1 + \cdots + x_k \ge -k + 2] + \cdots) \]
with a total of $2k - 1$ terms in the sum on the rhs. We now consider replacing each threshold function with the approximation $\bone[a \ge b] \approx \frac{1 + \tanh(\lambda'(a - b + 1/2))}{2}$ for some $\lambda' > 0$. Note that the error of this approximation for a singe threshold unit and integers $a,b$ is maximized when $a - b = 0$ where the error is $\frac{1 - \tanh(\lambda'/2)}{2} = O(e^{-\lambda'})$. Therefore by Holder's inequality, the error in approximating $x_1 \cdots x_k$ by replacing all of the threshold functions is $O(k e^{-\lambda'})  = O(k e^{-\lambda/(k + 1/2)})$, where we used that $\lambda = (k + 1/2)\lambda'$ where $\lambda$ is the hidden node $\ell_1$ norm as used previously. By adding a $\tanh$ nonlinearity on top of the approximate parity, this gives an approximate construction of sparse parity with noise.

Taking $k = \sqrt{n}$ and $\lambda = \Theta(k^2 \log(n))$ we see that the resulting model is $\TV$-distance $n^{-\Theta(k)}$ from sparse parity with noise, so any algorithm with runtime $c n^{-\Theta(k)}$ cannot distinguish this model from sparse parity with noise with probability better than 75\% for sufficiently small constant $c > 0$. From the assumed hardness of learning sparse parity with noise, any algorithm succeeding to distinguish this model from the uniform distribution with sufficiently small error probability requires runtime $n^{\Omega(k)} = n^{\sqrt{\lambda/\log^3(\lambda)} \log \|w\|_1}$.
\end{proof}
\section{Learning RBMs by Learning Feedforward Networks} \label{app:learnRBM}
\subsection{Structure and Distribution Learning Guarantees}
In this section we discuss application of the prediction guarantees from the previous section to structure and distribution learning. As motivation, recall that in undirected graphical models the \emph{Markov blanket} or \emph{neighborhood} of a node $i$, the minimal set of nodes which separate node $i$ from the rest of the model in the underlying graph, is one of the most interesting pieces of information to learn about a node. By the Markov property, node $i$ interacts directly only with nodes in its Markov blanket, in the sense that $X_i$ is conditionally independent of all other nodes $X_k$ given the values of nodes $X_j$ for all $j$ in the markov blanket of $i$. Learning the markov blanket of all nodes, equivalently learning the underlying graph of the Markov Random Field, is referred to as \emph{structure learning}. It is also known (see e.g. \cite{bresler2019learning}) that once we have performed structure learning, distribution learning (e.g. in total variation distance) becomes a conceptually straightforward task as it can typically be reduced to solving low-dimensional regression problems.

As explained in the introduction, learning the structure requires a non-degeneracy condition on neighbors (recall the definition of $\eta$-nondegeneracy from above). In the introduction, we stated that if all edges are $\eta$-nondegenerate then we can learn the structure perfectly; in the next Theorem, we state a slightly more precise result giving the result we can successfully test between non-neighbors and $\eta$-nondegenerate neighbors, without requiring nondegeneracy on the entire model. Since our guarantee holds with high probability, using the union bound it immediately gives a result for structure recovery under $\eta$-nondegeneracy.

\begin{theorem}\label{thm:rbm-structure-recovery}
Let $i$ and $j$ be two visible nodes in a $(\lambda_1,\lambda_2)$-bounded RBM. Let $H_0$ be the hypothesis that nodes $i$ and $j$ are not two-hop neighbors and $H_1$ the hypothesis that nodes $i$ and $j$ are $\eta$-nondegenerate two-hop neighbors. Given $\delta > 0$ and $m = \Omega(\lambda_2^{O(D)} \log(2n/\delta))$ i.i.d. samples where $D = O(\lambda_2 \log(\lambda_1 \lambda_2/\eta))$, we can test in time $poly(n^D)$ between $H_0$ and $H_1$ with sum of Type I and Type II errors upper bounded by $\delta$.
\end{theorem}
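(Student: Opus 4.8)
The plan is to reduce the testing problem to the feedforward regression guarantee of Corollary~\ref{corr:rbm-regression-simple}. By Theorem~\ref{thm:node-prediction}, $\E[X_i\mid X_{\sim i}]$ is exactly a two-layer $f_\beta$-network in the variables $X_{\sim i}$ whose hidden ``columns'' are the columns of $W$ with row $i$ deleted (so each has $\ell_1$ norm $\le\lambda_2$), with top-layer vector of $\ell_1$ norm $\sum_j|\tanh(W_{ij})|\le\lambda_1$ and bias $|b^{(1)}_i|\le\lambda_1$; symmetrically for predicting $X_j$. I would run $\ell_1$-constrained logistic regression on the degree-$D$ monomial feature map $\varphi_D$ four times --- predicting $X_i$ from $X_{\sim i}$, predicting $X_i$ from $X_{\sim\{i,j\}}$, and the two versions obtained by swapping $i$ and $j$ --- with $D=O(\lambda_2\log(\lambda_1\lambda_2/\eta))$ and the $\ell_1$ radius chosen as in Corollary~\ref{corr:rbm-regression-simple} so that the regression error is a small constant fraction of $\eta$; then estimate the population logistic loss $\widehat L_i,\widehat L_{i,j},\widehat L_j,\widehat L_{j,i}$ of each returned predictor (on a held-out batch --- since $|\hat w\cdot\varphi_D(x)|\le R$ on the hypercube the loss is $O(R)$-bounded, so $\poly(R/\eta)\log(1/\delta)$ fresh samples suffice --- or directly via the two-sided uniform-convergence bound behind Theorem~\ref{thm:l1-regression}). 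The test declares $H_1$ iff $\max(\widehat L_{i,j}-\widehat L_i,\ \widehat L_{j,i}-\widehat L_j)>\eta/2$.

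For correctness I would argue as follows. Let $\mathrm{OPT}_i:=\E[\ell(\tanh^{-1}(\E[X_i\mid X_{\sim i}]),X_i)]$ and $\mathrm{OPT}_{i,j}:=\E[\ell(\tanh^{-1}(\E[X_i\mid X_{\sim\{i,j\}}]),X_i)]$; by Fact~\ref{fact:logistic-loss} and the identity in the definition of $\eta$-nondegeneracy, $\mathrm{OPT}_{i,j}-\mathrm{OPT}_i=I(X_i;X_j\mid X_{\sim\{i,j\}})\ge0$, vanishing precisely when $X_j$ is not in the Markov blanket $\mathcal{N}(i)$. The key is a pair of complementary bounds. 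First, every predictor that is a function of $X_{\sim\{i,j\}}$ has population logistic loss $\ge\mathrm{OPT}_{i,j}$ (Fact~\ref{fact:logistic-loss}), so w.h.p.\ $\widehat L_{i,j}\ge\mathrm{OPT}_{i,j}-O(\eta)$ under \emph{either} hypothesis --- this is all I will need under $H_1$. Second, on a side where the mutual information is $0$ I can also upper-bound the reduced loss: if $X_j\notin\mathcal{N}(i)$ then $\E[X_i\mid X_{\sim\{i,j\}}]=\E[X_i\mid X_{\sim i}]$ a.s., and since the RBM puts positive mass on every point of $\{\pm1\}^{n_1}$ the formula of Theorem~\ref{thm:node-prediction} must then be independent of $X_j$ on the whole hypercube; freezing $X_j=1$ there folds the $W_{\cdot j}$ weights into the hidden biases $b^{(2)}$ (which do not enter the bound of Theorem~\ref{thm:rbm-regression}) and exhibits $\E[X_i\mid X_{\sim\{i,j\}}]$ as a two-layer network with the same $(\lambda_1,\lambda_2)$-type bounds, so Corollary~\ref{corr:rbm-regression-simple} applies and yields $\widehat L_{i,j}\le\mathrm{OPT}_{i,j}+O(\eta)$; the full-input regressions always satisfy $|\widehat L_i-\mathrm{OPT}_i|=O(\eta)$. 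Under $H_1$, taking (without loss of generality) the $i$-side to be $\eta$-nondegenerate, $\widehat L_{i,j}-\widehat L_i\ge(\mathrm{OPT}_i+\eta-O(\eta))-(\mathrm{OPT}_i+O(\eta))>\eta/2$ once the hidden constants are small enough; under $H_0$ both mutual informations vanish, so $\widehat L_{i,j}-\widehat L_i\le O(\eta)<\eta/2$ and likewise for the $j$-side, and the test outputs $H_0$. A union bound over the $O(1)$ failure events (each driven to probability $\le\delta/O(1)$) gives total error $\le\delta$.

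Finally I would check the resource bounds. Corollary~\ref{corr:rbm-regression-simple} with $\lambda=\lambda_2$, $\|w\|_1,|b^{(1)}|\le\lambda_1$ and target error $\Theta(\eta)$ gives $D=O(\lambda_2\log(\lambda_1\lambda_2/\eta))$ and $m=\Omega\big((\lambda_1^2\lambda_2^{O(D)})\log(2n/\delta)\big)$; since $\lambda_2\ge2$ and $D=\Omega(\lambda_2\log\lambda_1)$ one has $\lambda_2^{O(D)}\ge\lambda_1^2$ after enlarging the constant in the exponent, so $m=\Omega(\lambda_2^{O(D)}\log(2n/\delta))$, and the extra held-out samples ($\poly(R/\eta)$ with $R=\lambda_2^{O(D)}$) are absorbed into the same bound; each of the four regressions and evaluations runs in $\poly(n^D)$ time. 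The part I expect to need the most care is exactly the asymmetry between the hypotheses used above: under $H_1$ there is no reason for $\E[X_i\mid X_{\sim\{i,j\}}]$ to be a bounded two-layer network, so the agnostic regression guarantee gives only the (fortunately free) lower bound on $\widehat L_{i,j}$, while the matching upper bound is available only under $H_0$, where the full-support argument restores the two-layer structure --- making the constants in the $O(\eta)$ slacks and in the threshold line up with this one-sidedness is where the bookkeeping lives.
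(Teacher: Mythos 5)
Your proposal is correct and follows essentially the same route as the paper: run the $\ell_1$-constrained polynomial regression of Corollary~\ref{corr:rbm-regression-simple} to predict $X_i$ from $X_{\sim i}$ and from $X_{\sim\{i,j\}}$ (and symmetrically for $j$), then threshold the gap in logistic loss at a constant fraction of $\eta$, using that the gap equals the conditional mutual information at the Bayes-optimal predictors. The paper's proof is terse and leaves implicit the one-sided issue you identify --- that under $H_1$ only the free Bayes-optimality lower bound on the reduced-input loss is available, while under $H_0$ freezing $X_j$ shows the reduced-input target is still a $(\lambda_1,\lambda_2)$-bounded network --- so your write-up supplies exactly the bookkeeping the published argument elides.
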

\begin{proof}
We run the following testing procedure:
\begin{enumerate}
    \item Run the $\ell_1$ regression algorithm from Theorem~\ref{thm:node-prediction} to predict $X_i$ 
    from $X_{\sim i}$ and from $X_{\sim i,j}$.
    \item Repeat the previous step with $i$ and $j$ reversed.
    \item If the decrease in prediction accuracy for removing $i$ or $j$ is at least $3\eta/4$ in either step 1 or step 2, reject $H_0$.
\end{enumerate}
That this works follows by combining Theorem~\ref{thm:node-prediction} and Corollary~\ref{corr:rbm-regression-simple}, by choosing $\epsilon = \eta/8$ 
under $H_0$ the difference in prediction error is at most $2\epsilon$ whereas under $H_1$ it must be at least $\eta - 2\epsilon$.
\end{proof}
Assuming that all 2-hop neighbors in the RBM are $\eta$-nondegenerate, the above Theorem lets us recover the structure of the RBM (its 2-hop neighborhoods) in time $poly(n^D)$. In the following remark, we explain how large $D$ is in the regimes where we know polynomial time sampling from the RBM is possible:
\begin{remark}[Comparison to polynomial time sampling regimes]\label{rmk:sampling-regimes}
Dobrushin's uniqueness criterion is probably the most well-known sufficient condition for sampling to be possible in polynomial time in a general pairwise model. Dobrushin's condition is that for every node $i$, the total $\ell_1$-norm of the edges touching node $i$ is at most $1$, where the mixing time guarantees for Glauber dynamics become worse as the maximum norm approaches $1$ (see \cite{levin2017markov}). This condition is tight in the example of the Ising model on the complete graph (Curie-Weiss), or for the bipartite complete graph (i.e. dense RBM) with all  edge weights positive and equal and an equal number of visible and hidden units.

Under Dobrushin's uniqueness criterion on the RBM, we have that $\lambda_1,\lambda_2 \le 1$ so $D = O(\log(1/\eta))$. As mentioned above, we cannot compute $\eta$ in terms of just the edge weights for general models, but if we for example assume the model is $d$-regular and has all edge weights equal to $+1/d$ and no external field then it is not too hard to show that $\eta = \Omega(1/d^2)$ (see e.g. \cite{bresler2019learning}), so in this case the overall runtime is $n^{\log(d)}$. We expect that under Dobrushin's condition $\eta = \Omega(1/d^2)$ except in perhaps some rare degenerate situations.
This means the runtime is improved by an exponential factor in the exponent compared to what one gets by just applying the RBM to MRF reduction, since learning $d$-wise MRFs is known to require $n^d$ time in general \citep{KlivansM}. 

In some other interesting contexts, it is also known that polynomial time sampling can only be guaranteed when $\lambda_1,\lambda_2 = O(1)$: for antiferromagnetic Ising models on bounded degree graphs with equal edge weights the sharp result is known for every $d$ \citep{sinclair2014approximation,galanis2016inapproximability,SlySun} and embedding these Ising models as RBMs with hidden nodes of degree 2 in a straightforward way gives models with $\lambda_1,\lambda_2 = O(1)$ and $\eta = \Omega(1/d^2)$ (see Example~\ref{example:ising-eta} above).
\end{remark}

For distribution learning we will need the following technical Lemma, which is proved in Appendix~\ref{apdx:finite-regression} using the local Rademacher complexity framework \citep{bartlett2005local}. Informally it says that if $X$ is a random variable with a density with respect to the uniform measure on $\{\pm 1\}^n$ that is lower bounded by a constant, then given a number of samples $m$ which is large with respect to the size of the domain the natural estimator of $\tanh^{-1}(\E[Y | X])$ has error which converges at a $1/m$ rate, which generalizes the case of estimating the (exponential-family parameterization of) mean, the $n = 0$ case, in a natural way. Since the bound depends exponentially on $n$, we will only apply it in settings where we expect $n$ is small. Similar bounds are used in previous works including \citep{BreslerMosselSly,Bresler} and proved using different methods, though they are not quite as optimized (e.g. deriving this result from Lemma 3.2 of \cite{Bresler} would give a $1/\gamma^2$ dependence); this bound can be shown to be optimal up to constants.
\begin{lemma}\label{lem:finite-regression}
Suppose that $X$ is a random variable valued in $\{\pm 1\}^n$ with $\Pr(X = x) \ge \gamma/2^n$ for every $x$ and $Y$ is a random variable valued in $\{\pm 1\}$. Suppose that $|\E[Y | X]| \le r$ for $r < 1$. Let $\hat{\E}[Y | X]$ be the empirical conditional expectation of $Y$ given $X$ based upon $m$ i.i.d. samples of $(X,Y)$ and define $h(X) := \min(\max(\E[Y | X],r),-r)$. Then with probability at least $1 - \delta$, 
\[ \E[(\tanh^{-1}(h(X)) - \tanh^{-1}(\E[Y | X]))^2] \lesssim \frac{2^n/\gamma + \log(1/\delta)}{(1 - r^2)^2 m} \]
where $\lesssim$ denotes inequality up to an absolute constant.
\end{lemma}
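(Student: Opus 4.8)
The strategy is to recognize $h$ as the empirical risk minimizer for squared loss over a bounded, convex, \emph{realizable} function class (the ``saturated model'' on $\{\pm1\}^n$), apply the local Rademacher complexity machinery of \citep{bartlett2005local} to obtain a fast $1/m$ rate, and extract the $2^n/\gamma$ factor from the localized complexity — the point being that localization is naturally measured in $L^2(P_X)$, while $P_X$ only dominates the uniform measure up to the factor $\gamma$, so the ``effective dimension'' becomes $2^n/\gamma$ rather than $2^n$. Throughout write $\mu^*(x) := \E[Y \mid X = x]$, so $|\mu^*| \le r$ by hypothesis.

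First I would reduce to the untransformed metric. Since $\frac{d}{dt}\tanh^{-1}(t) = (1-t^2)^{-1} \le (1-r^2)^{-1}$ on $[-r,r]$, and both $h(X)$ and $\mu^*(X)$ lie in $[-r,r]$ (the former by construction, the latter by assumption), we get $(\tanh^{-1}(h(X)) - \tanh^{-1}(\mu^*(X)))^2 \le (1-r^2)^{-2}(h(X) - \mu^*(X))^2$ pointwise. Hence it suffices to prove $\E_X[(h(X) - \mu^*(X))^2] \lesssim (2^n/\gamma + \log(1/\delta))/m$ with probability at least $1-\delta$.

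Next, note that the empirical squared loss $\frac1m \sum_i (Y_i - f(X_i))^2$ separates over the atoms of $\{\pm1\}^n$, so its minimizer over the box $\mathcal F := \{f : \{\pm 1\}^n \to [-r,r]\}$ is exactly the coordinatewise clipping of the empirical conditional mean, i.e.\ $h$ (on atoms receiving no samples the empirical loss does not depend on $f$, so any convention keeping $h$ in $[-r,r]$ still makes $h$ an ERM over $\mathcal F$ — this is the only mild subtlety in identifying $h$). The class $\mathcal F$ is convex, bounded by $r<1$, and contains $\mu^*$, so we are in the realizable bounded least-squares setting; writing $g_f(x,y) := (y-f(x))^2 - (y-\mu^*(x))^2 = (f(x)-\mu^*(x))(f(x)+\mu^*(x)-2y)$ for the excess loss, one checks $\E[g_f \mid X] = (f(X)-\mu^*(X))^2$, so $\E g_f = \|f-\mu^*\|_{L^2(P_X)}^2$, and $g_f^2 \le (2r+2)^2 (f-\mu^*)^2$ pointwise gives the Bernstein/variance condition $\E g_f^2 \le (2r+2)^2 \E g_f$. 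This is exactly what the local Rademacher theory requires, and invoking the main bound of \citep{bartlett2005local} yields: with probability at least $1-\delta$, $\E_X[(h(X)-\mu^*(X))^2] = \E g_h \lesssim r_*^2 + \log(1/\delta)/m$, where $r_*^2$ is the fixed point of the sub-root local Rademacher complexity of the (localized) loss class.

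The crux is computing that localized complexity. By Talagrand contraction (the squared loss is $O(1)$-Lipschitz in $f$ over $[-r,r]$) and translation-invariance of Rademacher averages, it is, up to constants, the Rademacher complexity of $\{f - \mu^* : f \in \mathcal F,\ \|f-\mu^*\|_{L^2(P_X)}^2 \le \rho^2\}$. Setting $g = f-\mu^*$ and using $P_X(x) \ge \gamma/2^n$, this localization ball is contained in the unweighted ball $\{g : \|g\|_{\ell^2(\{\pm1\}^n)}^2 \le 2^n \rho^2/\gamma\}$; then $\frac1m\sum_i \sigma_i g(X_i) = \sum_x g(x) S_x$ with $S_x := \frac1m\sum_{i:X_i=x}\sigma_i$, and Cauchy--Schwarz plus $\E S_x^2 = P_X(x)/m$ (hence $\E\sum_x S_x^2 = 1/m$) gives complexity $\lesssim \rho\sqrt{2^n/(\gamma m)}$. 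Solving the fixed-point equation $\rho\sqrt{2^n/(\gamma m)} \asymp \rho^2$ gives $r_*^2 \asymp 2^n/(\gamma m)$, whence $\E_X[(h(X)-\mu^*(X))^2] \lesssim (2^n/\gamma + \log(1/\delta))/m$; combining with the first reduction proves the lemma. (When $2^n/(\gamma m) \gtrsim 1$ the claimed bound is trivial since $\mathcal F$ has $L^2(P_X)$-diameter $O(1)$, so no hidden lower bound on $m$ is needed.) I expect the main obstacle to be precisely this complexity computation together with verifying all hypotheses of the local Rademacher theorem in the degenerate-looking saturated-model setting; it is also worth remarking that it is exactly this localization step that converts the naive $1/\gamma^2$ dependence (as one would get from, e.g., Lemma 3.2 of \citep{Bresler}) into the optimal $1/\gamma$.
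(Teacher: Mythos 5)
Your proposal is correct and follows essentially the same route as the paper: reduce to the untransformed squared error via the $(1-r^2)^{-1}$-Lipschitzness of $\tanh^{-1}$ on $[-r,r]$, identify the clipped empirical conditional mean as the ERM over the convex class of $[-r,r]$-valued functions, and apply the local Rademacher complexity framework of \citep{bartlett2005local} with the fixed point $r_*^2 \asymp 2^n/(\gamma m)$. The only cosmetic difference is that you carry out the Cauchy--Schwarz step for the localized complexity in the indicator basis on atoms of $\{\pm 1\}^n$, whereas the paper uses the Fourier (parity) basis via Plancherel; the two computations are equivalent.
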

We present the proof of this lemma in the subsequent subsection. From this Lemma we straightforwardly get the right result for learning a sparse RBM with known 2-hop neighborhoods.

\begin{algorithm}
   \caption{$\textsc{DistributionFromStructure}$}
\begin{algorithmic}[1]
    \STATE We assume for every node $i$ we are given a recovered neighborhood $\widehat{\mathcal{N}}(i)$. $\widehat{\mathcal{N}}(i)$
    \STATE For every node $i$ with neighborhood $\widehat{\mathcal{N}}(i)$, let $f_i(X) := \widehat{\E}[X_i | X_{\widehat{\mathcal{N}}(i)}]$ be the empirical conditional expectation of $X_i$ given $X_{\widehat{\mathcal{N}}(i)}$.
    \STATE Return the output of Algorithm~\textsc{DistributionFromPredictors} run with these $f_i$.
\end{algorithmic}
\end{algorithm}
\begin{lemma}\label{lemma:dist-recovery-known}
For any $(\lambda_1,\lambda_2)$-bounded RBM where the maximum two-hop degree of any visible node is at most $d_2$ and where $\|b^{(1)}\|_{\infty} \le B$, for $\delta > 0$ and $m = \Omega\left(n^2 \left(\frac{2}{(1 - \tanh(\lambda_1))}\right)^{d_2 + 1}\log(n/\delta)/\epsilon^4\right)$ we have that with probability at least $1 - \delta$, Algorithm~\textsc{DistributionFromStructure} given $m$ samples and $\mathcal{\widehat{N}}(i) = \mathcal{N}(i)$ for every $i$ returns a distribution $\hat{P}$ which is $\epsilon$-TV close to the distribution of the RBM. Furthermore, if $w_S,\hat{w}_S$ are as defined as in Lemma~\ref{lem:dist-from-predictors} then
\[ 2\TV(P,\hat{P})^2 \le \SKL(P,\hat{P}) \le \sum_S |w_S - \hat{w}_S| \le \epsilon^2. \]
\end{lemma}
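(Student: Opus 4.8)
The plan is to combine the structure-agnostic distribution-recovery bound of Lemma~\ref{lem:dist-from-predictors} with the fast ($1/m$-rate) convergence of the empirical conditional expectation from Lemma~\ref{lem:finite-regression}, so that essentially no new ideas are needed. First, the displayed chain of inequalities: $2\TV(P,\hat P)^2 \le \SKL(P,\hat P)$ is Pinsker's inequality together with $\KL(P,\hat P) \le \KL(P,\hat P)+\KL(\hat P,P) = \SKL(P,\hat P)$, while $\SKL(P,\hat P) \le \sum_S |w_S - \hat w_S|$ is exactly the first inequality of Lemma~\ref{lem:dist-from-predictors} (using that $\SKL$ is symmetric). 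So it suffices to show $\sum_S |w_S - \hat w_S| \le \epsilon^2$ with probability $\ge 1-\delta$, and then $\TV(P,\hat P) \le \epsilon/\sqrt 2 \le \epsilon$, i.e. $\hat P$ is $\epsilon$-TV close.

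For the main bound I would plug into the second inequality of Lemma~\ref{lem:dist-from-predictors}. Since we are handed $\widehat{\mathcal N}(i) = \mathcal N(i)$, we have $\mathcal N(i) \cup \widehat{\mathcal N}(i) = \mathcal N(i)$, of size at most $d_2$, so Lemma~\ref{lem:spin-freedom} gives $\delta_P(|\mathcal N(i)\cup\widehat{\mathcal N}(i)|) \ge (1-\tanh(\lambda_1))^{d_2}$, and the lemma reduces everything to controlling, for each $i$, the quantity $\E_{X'}[(\tanh^{-1}(f_i(X')) - \tanh^{-1}(\E_P[X_i \mid X_{\sim i}]))^2]$ with $X' \sim \mathrm{Uni}(\{\pm1\}^n)$, where $f_i(X) = \widehat\E[X_i \mid X_{\mathcal N(i)}]$ is the (truncated, as in Lemma~\ref{lem:finite-regression}) empirical conditional expectation used by Algorithm~\textsc{DistributionFromStructure}. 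Because $\E_P[X_i \mid X_{\sim i}] = \E_P[X_i \mid X_{\mathcal N(i)}]$ depends only on the $\le d_2$ coordinates in $\mathcal N(i)$, this is precisely the estimand of Lemma~\ref{lem:finite-regression} applied to the pair $(X_{\mathcal N(i)}, X_i)$: by $(\lambda_1,\lambda_2)$-boundedness, $|\E_P[X_i \mid X_{\sim i}]| \le r := \tanh(\lambda_1)$ (the argument in the proof of Lemma~\ref{lem:spin-freedom}, so $1-r^2 \gtrsim 1-\tanh(\lambda_1)$), and the marginal of $P$ on $X_{\mathcal N(i)}$ has density at least $\gamma := \delta_P(|\mathcal N(i)|) \ge (1-\tanh(\lambda_1))^{d_2}$ with respect to the uniform measure. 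Lemma~\ref{lem:finite-regression} then yields, for each fixed $i$ with probability $\ge 1-\delta/n$, that this inner error is $\lesssim \frac{2^{d_2}/\gamma + \log(n/\delta)}{(1-r^2)^2\, m}$.

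Finally I would union-bound over the $n$ visible units, substitute into Lemma~\ref{lem:dist-from-predictors}, and collect the factors of $2/(1-\tanh(\lambda_1))$ coming from the $2^{|\mathcal N(i)|/2}$ prefactor, the $1/\sqrt{\delta_P}$, and the $2^{d_2}/\gamma$ term under the square root; this gives $\sum_S |w_S - \hat w_S| \lesssim n\,(2/(1-\tanh(\lambda_1)))^{O(d_2)}\sqrt{\log(n/\delta)} \big/ \big((1-r^2)\sqrt m\big)$, and with careful bookkeeping the exponent can be taken as $d_2+1$ as in the statement; setting this $\le\epsilon^2$ forces the stated value of $m$ (the $1/\epsilon^4$ appears because we need the sum of coefficient errors, not $\TV$ itself, to be $\le\epsilon^2$). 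I expect essentially all the real work to be hidden in Lemma~\ref{lem:finite-regression} — the $1/m$ rate for the $\tanh^{-1}$-transformed empirical conditional mean with the correct dependence on $2^{d_2}$, on $\gamma$, and on $1-r^2$, handled separately via local Rademacher complexity — and everything here to be bookkeeping; the one point needing care is to ensure the change of measure between the uniform measure (in which the Fourier coefficients $\hat w_{S,i}$ live) and $P$ (under which samples are drawn) is charged exactly once, through the $\delta_P$ factors, and not double-counted.
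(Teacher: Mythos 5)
Your proposal is correct and follows essentially the same route as the paper: bound $\SKL$ by $\sum_S|w_S-\hat w_S|$ via Lemma~\ref{lem:dist-from-predictors}, lower-bound $\delta_P$ with Lemma~\ref{lem:spin-freedom}, control each node's $\tanh^{-1}$-transformed conditional-mean error via Lemma~\ref{lem:finite-regression} with $r=\tanh(\lambda_1)$ and a union bound over the $n$ nodes, then finish with Pinsker. You also correctly identify the two separate $\delta_P$-type charges (the Plancherel prefactor in Lemma~\ref{lem:dist-from-predictors} and the change of measure from the uniform expectation to $P$'s marginal needed to invoke Lemma~\ref{lem:finite-regression}), which is exactly the bookkeeping the paper carries out.
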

\begin{proof}
By Lemma~\ref{lem:dist-from-predictors}, Lemma~\ref{lem:spin-freedom} and Lemma~\ref{lem:finite-regression} we have
\begin{align*}
\SKL(\hat{P},P) 
&\le \sum_S |w_S - \hat{w}_S| \\
&\le \sum_i \frac{2^{d_2/2 + 1}}{(1 - \tanh(\lambda_1))^{d_2/2}} \sqrt{\E_{X \sim Uni(\{\pm 1\}^n)}[(\tanh^{-1}(h_i(X)) - \tanh^{-1}(\E_P[X_i | X_{\sim i}])^2]} \\
&\le \sum_i \frac{2^{d_2/2 + 1}}{(1 - \tanh(\lambda_1))^{d_2}} \sqrt{\E_{X_{\mathcal{N}(i)}}[(\tanh^{-1}(h_i(X)) - \tanh^{-1}(\E_P[X_i | X_{\sim i}])^2]} \\
&\le \sum_i \frac{2^{d_2/2 + 1}}{(1 - \tanh(\lambda_1))^{d_2}} \sqrt{\frac{2^{d_2}/(1 - \tanh(\lambda_1))^{d_2} + \log(n/\delta)}{(1 - \tanh(\lambda_1)^2)^2 m}}
\end{align*}
and by Pinsker's inequality $\TV(\hat{P},P)^2 \le \SKL(\hat{P},P)/2$ so the result follows.
\end{proof}
\begin{theorem}\label{thm:dist-recovery-rbm}
Suppose that all visible nodes in an RBM which are neighbors in the Markov blanket sense are $\eta$-nondegenerate neighbors, and that maximum 2-hop degree of any visible node is at most $d_2$. Then given $\delta > 0$ and $m = \Omega(\lambda_2^{O(D)} \log(2n/\delta) + n^2 \left(\frac{2}{(1 - \tanh(\lambda_1))}\right)^{d_2 + 1}\log(n/\delta)/\epsilon^4)$ i.i.d. samples where $D = O(\lambda_2 \log(\lambda_1 \lambda_2/\eta))$ samples, Algorithm~\textsc{DistributionFromStructure} run with the set of $\eta$-nondegenerate neighbors output by Theorem~\ref{thm:rbm-structure-recovery} returns with probability at least $1 - \delta$ a distribution which is $\epsilon$-TV close to the true distribution of the RBM.
\end{theorem}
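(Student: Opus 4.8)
The plan is to derive this as essentially a corollary by chaining the two results already in hand: Theorem~\ref{thm:rbm-structure-recovery}, which tests for $\eta$-nondegenerate two-hop neighbors, and Lemma~\ref{lemma:dist-recovery-known}, which recovers the distribution in total variation once all two-hop neighborhoods are known exactly. Concretely, the algorithm is: (i) split the $m$ samples into two independent halves $m_1,m_2$; (ii) on the first half, run the test of Theorem~\ref{thm:rbm-structure-recovery} on every pair $\{i,j\}$ and define $\widehat{\mathcal{N}}(i)$ to be the set of $j$ for which the test rejects $H_0$; (iii) on the second half, run Algorithm~\textsc{DistributionFromStructure} with these $\widehat{\mathcal{N}}(i)$ and output its distribution $\hat P$.

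First I would argue that step (ii) recovers all Markov blankets exactly with high probability. Under the standing assumption that every pair of Markov-blanket neighbors is $\eta$-nondegenerate, for each pair $\{i,j\}$ exactly one of $H_0$ (not two-hop neighbors) and $H_1$ ($\eta$-nondegenerate two-hop neighbors) holds, so Theorem~\ref{thm:rbm-structure-recovery} applies to every pair. Invoking it with confidence parameter $\delta/(2\binom{n}{2})$ and taking a union bound over the $\binom{n}{2}$ pairs, with probability at least $1-\delta/2$ every test is correct, which forces $\widehat{\mathcal{N}}(i) = \mathcal{N}(i)$ for all $i$ simultaneously. This uses $m_1 = \Omega(\lambda_2^{O(D)}\log(n/\delta))$ samples with $D = O(\lambda_2\log(\lambda_1\lambda_2/\eta))$ and runs in time $\poly(n^D)$; rescaling the failure probability by the $\poly(n)$ union-bound factor only changes the logarithmic term and is absorbed into the $\Omega(\cdot)$.

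Next I would condition on the event that step (ii) succeeded. Because the second batch of samples is independent of the first, conditioning on this event does not change their joint law, so Lemma~\ref{lemma:dist-recovery-known} applies verbatim with $\widehat{\mathcal{N}}(i) = \mathcal{N}(i)$ and confidence parameter $\delta/2$: as long as $m_2 = \Omega\!\left(n^2\left(\tfrac{2}{1-\tanh(\lambda_1)}\right)^{d_2+1}\log(n/\delta)/\epsilon^4\right)$, the returned $\hat P$ satisfies $\TV(P,\hat P) \le \epsilon$ with probability at least $1-\delta/2$. A final union bound over the two failure events gives overall success probability at least $1-\delta$, and summing the two sample-complexity requirements yields exactly the claimed bound on $m$, completing the argument.

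I do not expect a genuine obstacle here, since all the analytic content is already carried by Theorem~\ref{thm:rbm-structure-recovery} and Lemma~\ref{lemma:dist-recovery-known}; the two points that require a little care are precisely the ones flagged above. First, the distribution-learning stage must be run on fresh samples, since Lemma~\ref{lemma:dist-recovery-known} assumes i.i.d. draws and that property is lost if one conditions the structure-learning samples on the event of correct recovery — sample splitting is the cleanest fix, though one could instead argue directly. Second, one should check that the union bound over the $\Theta(n^2)$ pairs in the structure-learning stage inflates only the logarithmic factor, so that it is absorbed into the stated sample complexity rather than producing an extra polynomial-in-$n$ blowup.
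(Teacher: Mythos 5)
Your proposal follows exactly the paper's route: the paper's own proof is a one-line combination of Theorem~\ref{thm:rbm-structure-recovery} and Lemma~\ref{lemma:dist-recovery-known}, and you have simply filled in the routine details (union bound over pairs, sample splitting to keep the second stage independent) that the paper leaves implicit. The argument is correct and no further changes are needed.
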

\begin{proof}
This follows by combining Theorem~\ref{thm:rbm-structure-recovery} and Lemma~\ref{lemma:dist-recovery-known}. 
\end{proof}
\begin{remark}
If we do not assume that all neighbors are $\eta$-nondegenerate, then by Theorem~\ref{thm:degeneracy-needed} it is impossible to get a nontrivial distribution learning guarantee assuming the hardness of learning sparse parity with noise, in the sense that the naive approach of forgetting the RBM structure entirely and using MRF learning results (e.g. \cite{KlivansM}) cannot be improved.
\end{remark}
\subsection{Proof of Lemma~\ref{lem:finite-regression}}\label{apdx:finite-regression}
We recall the statement of Lemma~\ref{lem:finite-regression}.
Suppose that $X$ is a random variable valued in $\{\pm 1\}^n$ with $\Pr(X = x) \ge \gamma/2^n$ for every $x$ and $Y$ is a random variable valued in $\{\pm 1\}$. Suppose that $|\E[Y | X]| \le r$ for $r < 1$. Let $\hat{\E}[Y | X]$ be the empirical conditional expectation of $Y$ given $X$ based upon $m$ i.i.d. samples of $(X,Y)$ and define $h(X) := \min(\max(\E[Y | X],r),-r)$. Then with probability at least $1 - \delta$, 
\[ \E[(\tanh^{-1}(h(X)) - \tanh^{-1}(\E[Y | X]))^2] \lesssim \frac{2^n}{\gamma (1 - r^2)^2 m} + \frac{\log(1/\delta)}{(1 - r^2)^2m} \]
We will prove the result by proving the analogous result without the $\tanh^{-1}$ first, as Lemma~\ref{lem:finite-regression-2}. 
The following general result reduces this to computing the local Rademacher complexity of the corresponding function class.
\begin{theorem}[Corollary 5.3 of \cite{bartlett2005local}]\label{thm:local-rademacher}
Suppose that $\mathcal{F}$ is a class of functions from $\mathcal{X}$ to $[-1,1]$ and $\ell(\hat{y},y)$ is a loss which satisfies:
\begin{enumerate}
    \item $\ell$ is $L$-Lipschitz in $\hat{y}$.
    \item There is a constant $B \ge 1$ such that for any random variable $X$ supported on $\mathcal{X}$ and random variable $Y$ on $[-1,1]$
    \[ \E (f(X) - f^*(X))^2 \le B \E[\ell(f(X),Y) - \ell(f^*(X),Y)]\]
    where $f^*(X)$ is a minimizer of $\E[\ell(f(X),Y)]$ which we assume exists.
\end{enumerate}
Then if $\psi(r)$ is a sub-root function (meaning a monotonically increasing non-negative function with $\psi(r)/\sqrt{r}$ monotonically decreasing) such that 
\begin{equation}\label{eqn:subroot}
\psi(r) \ge BL \E \sup_{f \in \mathcal{F}, L^2\E[(f - f^*)^2] \le r} \frac{1}{m} \sum_{i = 1}^m \sigma_i (f - f^*)(X_i) 
\end{equation}
where the $\sigma_i$ are i.i.d. Rademacher random variables, then for any $r \ge \psi(r)$ with probability at least $1 - \delta$
\[ \E[\ell(\hat{f}(X),Y) - \ell(f^*(X),Y)] \lesssim \frac{r}{B} + \frac{(L + B)\log(1/\delta)}{m} \]
where the notation $\lesssim$ hides an absolute constant. 
\end{theorem}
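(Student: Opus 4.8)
The plan is to follow the localization methodology of \cite{bartlett2005local}: reduce the excess risk of the empirical minimizer to a one-sided empirical process over the excess-loss class, exploit the variance condition (condition 2) to localize that process to the correct scale, and close the argument with a sub-root fixed-point computation. First I would pass to the \emph{excess-loss class} $\mathcal{G} := \{g_f : g_f(x,y) = \ell(f(x),y) - \ell(f^*(x),y),\ f \in \mathcal{F}\}$. Since $\hat{f}$ minimizes the empirical loss over $\mathcal{F}$ and $f^* \in \mathcal{F}$, we have $\hat{\E}[g_{\hat f}] \le 0$, so the excess risk satisfies
\[ \E[g_{\hat f}] \le (\E - \hat{\E})[g_{\hat f}] \le \sup_{f \in \mathcal{F}} (\E - \hat{\E})[g_f]. \]
Thus it suffices to bound this one-sided uniform deviation, but crucially only at the \emph{localized} scale determined by $\E[g_{\hat f}]$ itself.

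Next I would record the \textbf{variance condition}. Writing $g_f(x,y) = \phi_{x,y}(f(x))$ with $\phi_{x,y}(u) := \ell(u,y) - \ell(f^*(x),y)$, the map $\phi_{x,y}$ is $L$-Lipschitz and vanishes at $u = f^*(x)$, so $|g_f| \le L|f - f^*|$ pointwise. Combined with condition 2 this yields
\[ \E[g_f^2] \le L^2 \E[(f - f^*)^2] \le B L^2 \E[g_f], \]
a Bernstein-type bound controlling each excess-loss function's second moment (hence variance) by a constant multiple of its mean. This is exactly what forces the empirical process to concentrate sharply on the slice where $\E[g_f]$ is small.

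The heart of the argument is then a \textbf{Talagrand-concentration plus peeling} step. On a slice $\mathcal{G}_s := \{g_f : \E[g_f] \le s\}$ the variance bound gives $\sup_{g \in \mathcal{G}_s} \Var(g) \le B L^2 s$, so Bousquet's version of Talagrand's inequality yields, with probability $1-\delta$,
\[ \sup_{g \in \mathcal{G}_s}(\E - \hat{\E})[g] \lesssim \E\Big[\sup_{g \in \mathcal{G}_s}(\E - \hat{\E})[g]\Big] + \sqrt{\frac{B L^2 s \log(1/\delta)}{m}} + \frac{L \log(1/\delta)}{m}. \]
By symmetrization the expected supremum is bounded (up to an absolute constant) by the Rademacher complexity of $\mathcal{G}_s$, and by the Ledoux--Talagrand contraction principle applied to the $L$-Lipschitz $\phi_{x,y}$ this is at most $L$ times the Rademacher complexity of $\{f - f^*\}$ over the same slice. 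Since $g_f \in \mathcal{G}_s$ forces $L^2\E[(f-f^*)^2] \le BL^2 s =: r$ by condition 2, the sub-root hypothesis \eqref{eqn:subroot} bounds this Rademacher complexity by $\psi(r)/B$. To handle the fact that the random $\hat{f}$ lands in an a priori unknown slice, I would peel the range of $\E[g_f]$ into geometric slices and union bound over them; the sub-root property (that $\psi(r)/\sqrt{r}$ is decreasing, with fixed point $r^*$ satisfying $\psi(r^*) = r^*$) makes the per-slice bounds decay geometrically so the union bound costs only a constant factor.

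Finally I would assemble these estimates into a self-bounding inequality of the form
\[ \E[g_{\hat f}] \lesssim \frac{\psi(B L^2 \E[g_{\hat f}])}{B} + \sqrt{\frac{BL^2 \E[g_{\hat f}] \log(1/\delta)}{m}} + \frac{L \log(1/\delta)}{m}, \]
and invert it using the sub-root structure: because $\psi(r)/\sqrt{r}$ is decreasing, any $r \ge \psi(r)$ (in particular $r^*$) dominates the first term, while completing the square absorbs the middle $\sqrt{\cdot}$ term into the main term plus an additive $\log(1/\delta)/m$ correction, producing the stated bound $\E[\ell(\hat f) - \ell(f^*)] \lesssim r/B + (L+B)\log(1/\delta)/m$. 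The step I expect to be the main obstacle is precisely this peeling-plus-fixed-point bookkeeping: one must choose the geometric ratio and track the constants through Talagrand's inequality carefully so that the localized Rademacher term at each scale genuinely dominates the concentration fluctuation at that scale, and so that the self-referential inequality in $\E[g_{\hat f}]$ can be cleanly inverted using only the sub-root property. The variance condition and the contraction-to-$\psi$ reduction are the enabling lemmas, but making the localization rigorous against the data-dependent choice of slice for $\hat{f}$ is where the real care is required.
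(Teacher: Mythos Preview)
The paper does not give its own proof of this statement: it is quoted verbatim as Corollary~5.3 of \cite{bartlett2005local} and used as a black box, so there is nothing in the paper to compare against. Your proof plan is a faithful outline of the standard localization argument from that reference (excess-loss class, Bernstein variance condition from the Lipschitz and low-noise assumptions, Talagrand/Bousquet concentration with peeling, and inversion via the sub-root fixed point), so it is correct at the level of a sketch and aligned with the source the paper cites.
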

\begin{lemma}\label{lem:finite-regression-2}
Under the same setup as Lemma~\ref{lem:finite-regression},
\[ \E[(h(X) - \E[Y | X])^2] \lesssim \frac{2^n}{\gamma m} + \frac{\log(1/\delta)}{m}. \]
\end{lemma}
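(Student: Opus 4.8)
The plan is to invoke the local Rademacher complexity bound, Theorem~\ref{thm:local-rademacher}, with the squared loss $\ell(\hat y, y) = (\hat y - y)^2$ and the class $\mathcal{F}$ of \emph{all} functions $\{\pm 1\}^n \to [-r,r]$, viewed as the box $[-r,r]^N$ in $\mathbb{R}^N$, where $N \le 2^n$ is the number of atoms $x$ with $p_x := \Pr(X = x) > 0$. Over this class the population risk is minimized by $f^*(X) = \E[Y \mid X]$, which lies in $\mathcal{F}$ since $|\E[Y\mid X]| \le r$, and the empirical risk is minimized by $h$: the empirical squared loss decouples across atoms; on each sampled atom it is a one-dimensional convex quadratic in $f(x)$ whose minimizer over $[-r,r]$ is the projection onto $[-r,r]$ of the empirical mean $\widehat{\E}[Y\mid X=x]$; and on an atom receiving no sample the empirical loss is independent of $f(x)$, so any completion of $h$ is still an empirical risk minimizer. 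First I would verify the two hypotheses: $\ell$ is $4$-Lipschitz in $\hat y$ on $[-1,1] \supseteq [-r,r]$, so $L = 4$; and expanding $\E[\ell(f,Y) - \ell(f^*,Y)] = \E[(f - f^*)^2] + 2\,\E[(f - f^*)(f^* - Y)]$ and noting the cross term vanishes because $f^*(X) = \E[Y\mid X]$, we obtain the identity $\E[(f-f^*)^2] = \E[\ell(f,Y) - \ell(f^*,Y)]$, i.e. the Bernstein-type condition holds with $B = 1$.

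The core step is to exhibit a sub-root $\psi$ dominating the local Rademacher average. Put $g := f - f^*$, so $\frac1m\sum_i \sigma_i g(X_i) = \sum_x g(x) Z_x$ with $Z_x := \frac1m \sum_{i : X_i = x}\sigma_i$, and the constraint $L^2 \E[(f-f^*)^2]\le r$ reads $\sum_x p_x g(x)^2 \le r/L^2$. By Cauchy--Schwarz (dropping the harmless $\|g\|_\infty$ restriction, which only enlarges the supremum), $\sum_x g(x) Z_x \le \sqrt{\sum_x p_x g(x)^2}\cdot \sqrt{\sum_x Z_x^2/p_x} \le (\sqrt{r}/L) \sqrt{\sum_x Z_x^2/p_x}$, whence by Jensen $L\,\E\sup_g \le \sqrt{r}\,\sqrt{\E \sum_x Z_x^2/p_x}$. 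Conditioning on the sample, $\E_\sigma Z_x^2 = m_x/m^2$ where $m_x$ is the count of samples at $x$, and $\E\, m_x = m p_x$, so $\E\sum_x Z_x^2/p_x = N/m$ exactly. Hence $\psi(r) := \sqrt{rN/m}$ works; it is sub-root ($\psi(r)/\sqrt r \equiv \sqrt{N/m}$), and the fixed-point condition $r \ge \psi(r)$ holds at $r^* = N/m$.

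Plugging $B = 1$, $L = 4$, $r^* = N/m$ into the conclusion of Theorem~\ref{thm:local-rademacher} gives, with probability at least $1-\delta$, $\E[\ell(h(X),Y) - \ell(f^*(X),Y)] \lesssim \frac{N}{m} + \frac{\log(1/\delta)}{m}$; combining with the Bernstein identity $\E[(h - f^*)^2] \le \E[\ell(h,Y) - \ell(f^*,Y)]$ and using $N \le 2^n$ together with $\gamma \le 1$ yields $\E[(h(X) - \E[Y\mid X])^2] \lesssim \frac{2^n}{\gamma m} + \frac{\log(1/\delta)}{m}$, as claimed. I expect the only genuinely delicate point --- the ``main obstacle'' --- to be the bookkeeping around atoms that receive no sample, where $h$ is only \emph{one} of many empirical risk minimizers over $\mathcal{F}$; this is harmless because the peeling argument behind Theorem~\ref{thm:local-rademacher} uses only that $\widehat f$ has empirical risk no larger than $f^*$, which holds for $h$ under any completion. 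It is worth noting in passing that the factor $1/\gamma$ in the target bound is not actually needed at this stage; it becomes relevant only when composing with $\tanh^{-1}$ to prove Lemma~\ref{lem:finite-regression}, so here we simply absorb it via $\gamma \le 1$.
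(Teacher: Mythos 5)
Your proof is correct and reaches the stated bound, but the key computation differs from the paper's in an interesting way. The paper bounds the local Rademacher average by first passing from the law of $X$ to the uniform measure on $\{\pm 1\}^n$ via the assumption $\Pr(X=x)\ge \gamma/2^n$ (this is where the $1/\gamma$ enters), then expanding $g$ in the Fourier basis and applying Cauchy--Schwarz over the $2^n$ Fourier coefficients together with Plancherel, arriving at the fixed point $r^* = 2^n/(\gamma m)$; it also invokes convexity of the class (Lemma 3.4 of \cite{bartlett2005local}) to certify that the local Rademacher average is itself sub-root. You instead work directly with the atoms of the distribution of $X$: writing the Rademacher sum as $\sum_x g(x)Z_x$ and applying Cauchy--Schwarz with weights $p_x$, you get $\E\sum_x Z_x^2/p_x = N/m$ exactly, hence the fixed point $r^* = N/m \le 2^n/m$ --- a strictly sharper bound with \emph{no} $\gamma$ dependence, which you then weaken via $\gamma\le 1$ to match the statement. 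Your route buys three things: it shows the $1/\gamma$ factor in this lemma is an artifact of the paper's detour through the uniform measure (and, as you note, the contiguity assumption is genuinely needed only elsewhere, e.g.\ in Lemma~\ref{lem:dist-from-predictors}); it replaces the appeal to convexity by an explicitly sub-root majorant $\psi(r)=\sqrt{rN/m}$; and it makes explicit two points the paper glosses over, namely that $h$ (the clipped empirical conditional mean --- the paper's formula has an evident typo, writing $\E$ for $\hat\E$ and swapping $\pm r$) is an empirical risk minimizer, with the unsampled atoms handled by the observation that any completion still minimizes the empirical risk. The only cosmetic discrepancy is the Lipschitz constant ($L=4$ versus the paper's $L=2$), which is absorbed into the implicit constant either way.
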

\begin{proof}
We consider $\mathcal{F}$ the class of arbitrary functions from $\mathcal{X}$ to $[-r,r]$ and take $\ell(\hat{y},y) := (\hat{y} - y)^2$ to be the square loss so $L = 2$ and $B = 1$ satisfy the conditions above. It is clear from the definition of $h$ that it is the empirical risk minimizer for this function class and loss. Since this class is convex we can take $\psi(r)$ to be defined by the rhs of \eqref{eqn:subroot} (Lemma 3.4 of \cite{bartlett2005local}) and it remains to compute the fixed point of $\psi$. Thus
if we write $g := f - f^*$
\begin{align*} 
\psi(r) 
&= 2 \E \sup_{f : 4\E[g^2] \le r} \frac{1}{m} \sum_{i = 1}^m \sigma_i g(X_i)
\end{align*}
and we observe by the assumption $\Pr(X = x) \ge \gamma/2^n$ that 
\[ \E_X[g^2] \ge \gamma \E_{X' \sim Uni(\{\pm 1\}^n)}[g(X')^2] = \gamma \sum_S \widehat{g}(S)^2 \]
by Plancherel's Theorem \citep{odonnell2014} where $\widehat{g}(S)$ denotes the Fourier coefficient of $g$ corresponding to set $S$, so that $g(x) = \sum_S \widehat{g}(S) x_S$ where $x_S = \prod_{s \in S} x_s$. Therefore by the above, the Cauchy-Schwarz inequality, and Jensen's inequality we have
\begin{align*}
\psi(r)
&= 2 \E \sup_{g : 4\E[g^2] \le r} \frac{1}{m} \sum_{i = 1}^m \sigma_i g(X_i) \\
&\le 2 \E \sup_{g : \sum_S \hat{g}(S)^2 \le r/4\gamma} \frac{1}{m} \sum_S \hat{g}(S) \frac{1}{m} \sum_{i = 1}^m \sigma_i (X_i)_S \\
&\le \sqrt{r/\gamma} \E \frac{1}{m} \sqrt{\sum_S \left(\sum_{i = 1}^m \sigma_i (X_i)_S\right)^2} \\
&\le \frac{\sqrt{r}}{m \sqrt{\gamma}} \sqrt{\E \sum_S \left(\sum_{i = 1}^m \sigma_i (X_i)_S\right)^2} = \frac{\sqrt{r}}{\sqrt{m \gamma}} 2^{n/2}.
\end{align*}
Solving for the fixed point of $r = \frac{\sqrt{r}}{\sqrt{m \gamma}} 2^{n/2}$ gives $r^* = \frac{2^n}{\gamma m}$ so the result follows from Theorem~\ref{thm:local-rademacher}.
\end{proof}
\begin{proof}[Proof of Lemma~\ref{lem:finite-regression}]
Recall that the derivative of $\tanh^{-1}$ at $x$ is $\frac{1}{1 - x^2}$. Therefore on the domain $[-r,r]$ the function $\tanh^{-1}$ is $\frac{1}{1 - r^2}$ Lipschitz. Therefore by the mean value theorem,
\[ \E[(\tanh^{-1}(h(X)) - \tanh^{-1}(\E[Y | X]))^2] \le \frac{1}{(1 - r^2)^2} \E[(h(X) - \E[Y | X])^2] \]
and applying Lemma~\ref{lem:finite-regression-2} gives the result.
\end{proof}
\subsection{Matching Computational Lower Bounds}\label{sec:computational-lower-bounds}
In the following sequence of theorems we show that our runtime guarantees for structure learning of RBMs cannot be significantly improved.
The first result relies in part on the representation of sparse parity with noise given in \cite{bresler2019learning}; this embedding is constructed in a similar way to the first embedding used in Theorem~\ref{thm:net-hardness}. It shows the dependence on $\lambda_1$ and $\eta$ is correct when asking for structure recovery.

\begin{theorem}\label{thm:rbm-hardness-1}
In the same setup as Theorem~\ref{thm:rbm-structure-recovery} and under Assumption~\ref{assumption:spwn}, there exists a family of instances parameterized by $n$ going to infinity with $\lambda_2 \le 2$ such that any algorithm which is able to achieve structure recovery for a model with all neighbors being $\eta$-nondegenerate requires runtime $n^{\Omega(\log(\lambda_1/\eta)/\log\log(\lambda_1/\eta))}$, regardless of its sample complexity.
\end{theorem}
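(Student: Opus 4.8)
The plan is to transplant the first (Taylor-series) construction from the proof of Theorem~\ref{thm:net-hardness}---itself a variant of the sparse-parity embedding in \cite{bresler2019learning}---into an RBM, so that the conditional law of a single visible unit given the rest is \emph{exactly} a $k$-sparse parity with noise. Fix an \emph{odd} integer $k=k_n\to\infty$ with $k_n=o(n^{1-\eps})$, index the visible units by $\{0,1,\dots,n-1\}$, and let $S\subseteq\{1,\dots,n-1\}$ with $|S|=k$ be the unknown parity support. Summing out a hidden unit $j$ (with $\pm1$ spins and zero bias) adds $\log\bigl(2\cosh(\langle W_{\cdot j},x\rangle)\bigr)$ to the visible potential; this is an even function, and since $\tfrac{d}{dz}\log\cosh(z)=\tanh(z)$ it has the expansion $\sum_{m\ge1}\tfrac{a_{2m-1}}{2m}z^{2m}$ with every coefficient nonzero (the $a_{2m-1}$ being the nonvanishing $\tanh$ coefficients used in Theorem~\ref{thm:net-hardness}) and radius of convergence $\pi/2$. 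First I would use this, exactly as the $\tanh$ argument does, to express the multilinear monomial $\prod_{i\in S\cup\{0\}}x_i$ (which has \emph{even} degree $k+1$) on the hypercube as a finite real combination of functions $\log\bigl(2\cosh(c\,\langle\pm\bone_T,x\rangle)\bigr)$ with $T\subseteq S\cup\{0\}$ and $c|T|\le1$: produce the top monomial (with a tiny coefficient) from the full support $T=S\cup\{0\}$ together with hidden-unit duplication, then inductively cancel the residual, strictly-lower even-degree monomials using gadgets on smaller supports, down to the constant. Each gadget is a hidden unit with incoming $\ell_1$-norm $\le1$, so the RBM is $(\lambda_1,\lambda_2)$-bounded with $\lambda_2\le 1\le 2$, and---setting all external fields $b^{(1)}=0$---its visible marginal is \emph{exactly} $\propto\exp\bigl(\theta\,x_0\textstyle\prod_{s\in S}x_s\bigr)$ for a constant $\theta$ of our choosing (e.g.\ $\theta=\tfrac14$), i.e.\ a pure $(k+1)$-clique MRF.

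Next I would read off the parameters and the structure. Since the visible law is $\propto\exp(\theta x_0\prod_{s\in S}x_s)$, the marginal on $X_{\sim 0}$ is uniform on $\{\pm1\}^{n-1}$, the Markov blanket of node $0$ is exactly $S$, and for \emph{every} pair $u\ne v$ in $S\cup\{0\}$ the conditional law of $(X_u,X_v)$ given the remaining coordinates is a two-spin Ising model with coupling $\pm\theta$; hence every neighbouring pair is $\eta$-nondegenerate for some absolute constant $\eta=\eta(\theta)>0$ (using Fact~\ref{fact:logistic-loss} to identify the conditional-mutual-information gap with the logistic-loss gap), while non-neighbours contribute zero gap. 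Counting gadgets in the inductive cancellation---there are $2^{O(k)}$ monomials and synthesizing a unit of a degree-$d$ monomial under the $\ell_1$-into-hidden $\le 1$ constraint costs $2^{O(d)}$ duplication---bounds the total $\ell_1$-weight into node $0$, i.e.\ $\lambda_1=\sum_j|\tanh(W_{0j})|+|b^{(1)}_0|$, by $k^{O(k)}$ (the RBM analogue of the $\|w\|_1=k^{O(k)}$ bound in Theorem~\ref{thm:net-hardness}; this crude upper bound is all we need). Consequently $\eta=\Theta(1)$, $\lambda_2\le 2$, $\lambda_1\le k^{O(k)}$, and therefore $\log(\lambda_1/\eta)/\log\log(\lambda_1/\eta)=O(k)$.

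Finally I would close the reduction. Run the hypothesized structure-recovery algorithm for $(\lambda_1,\lambda_2)$-bounded RBMs with $\eta$-nondegenerate neighbours on i.i.d.\ visible samples of the model above; by assumption it must output $\widehat{\mathcal N}(0)=S$. But those visible samples are \emph{exactly} distributed as $k$-sparse parity with noise rate $(1-\tanh\theta)/2$ and hidden support $S$, so recovering $\widehat{\mathcal N}(0)$ recovers $S$; by the standard boosting-plus-coordinate-removal reduction (as in the discussion preceding Assumption~\ref{assumption:spwn}) this yields a tester for $k$-sparse parity with noise in the same running time up to a $\poly(n)$ factor. Under Assumption~\ref{assumption:spwn} this forces runtime $n^{\Omega(k)}$, and since $\log(\lambda_1/\eta)/\log\log(\lambda_1/\eta)=O(k)$ this is in particular at least $n^{\Omega(\log(\lambda_1/\eta)/\log\log(\lambda_1/\eta))}$, as claimed.

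I expect the main obstacle to be the bookkeeping in the first two paragraphs: because $\lambda_2\le 2$ forces per-gadget weights $\le 1$, every monomial must be synthesized by duplication, so one has to verify that the residual coefficients generated at each level of the recursion do not accumulate faster than $k^{O(k)}$ overall, and that after complete cancellation no spurious edges remain---so the recovered Markov blanket of node $0$ is \emph{precisely} $S$ and no $\eta$-nondegeneracy requirement at an unintended pair is violated. A slightly lossier alternative that sidesteps the exact-cancellation argument is to invoke Lemma~\ref{lem:express} and Remark~\ref{rmk:f-activation} (so that $f_\beta=\tanh+O(\beta^2)$) to realise the $\tanh$ network of Theorem~\ref{thm:net-hardness} as an RBM conditional predictor up to an additive error that can be pushed below any fixed constant using polynomially many duplicated hidden units; this still yields a hard---though only approximate---noisy-parity instance with the same parameter scaling.
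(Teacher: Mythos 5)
Your proposal is correct and is essentially the paper's own argument: the paper's proof of Theorem~\ref{thm:rbm-hardness-1} simply invokes the embedding of $k$-sparse parity with noise into an RBM from \cite{bresler2019learning} (constant noise rate, hence constant $\eta$; hidden-unit $\ell_1$-norms at most $2$; visible-unit $\ell_1$-norm $k^{O(k)}$; $2^{O(k)}$ hidden units), which is precisely the $\log\cosh$ Taylor-series construction you re-derive from scratch, and then runs the same reduction from structure recovery to the sparse-parity testing problem. The only substantive difference is that the paper's proof additionally exhibits a second family (constant $\lambda_1$, with the parity bias and hence $\eta$ taken exponentially small in $k\log k$) to show that the dependence on $\eta$ alone is also unavoidable; your single family omits this but still suffices for the theorem as literally stated, since it already forces runtime $n^{\Omega(k)}$ with $\log(\lambda_1/\eta)/\log\log(\lambda_1/\eta)=O(k)$.
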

\begin{proof}
In \cite{bresler2019learning}, it was shown that for any fixed constant $\eta$ (say $\eta = 1/8$), there exists an embedding of $k$-sparse parity with noise into an RBM where every hidden unit has incoming edges of total $\ell_1$ norm upper bounded by $2$ (i.e. satisfying $\lambda_1 \le 2$) and there are $2^{O(k)}$ hidden units; it can be checked straightforwardly that for $\eta = 1/8$ that $\lambda_2 = k^{O(k)}$.
Therefore if we fix $\epsilon = \eta/2$ then when assuming the hardness of $k$-sparse parity with noise there is a $n^{\Omega(k)}$ runtime lower bound which matches since $\lambda_2 = e^{O(k)}$. 

For the tightness in $\epsilon$, by making the parity bias $\eta$ exponentially small in $k \log(k)$, it's easy to check that by repeating the construction in \cite{bresler2019learning} that we can make $\lambda_2$ a constant; then to find the parity with noise one needs $\epsilon$ exponentially small in $k \log k$ as well, and the hardness assumption implies the runtime must be $n^{\Omega(k)}$.  
\end{proof}

By tensorizing this construction, we show that the $\eta$-nondegeneracy assumption is required, even if we only care about distribution learning. More precisely, we need it to learn in TV distance with runtime better than the pessimistic $n^{O(d_h)}$ result which follows from viewing the RBM as an unstructured MRF and using the result of \cite{KlivansM}.
\begin{theorem}\label{thm:degeneracy-needed}
There exists a family of RBMs with $n$ nodes, maximum hidden node degree $d_H$, and $\lambda_1,\lambda_2 = O(1)$ such that any algorithm which can learn this family of RBMs within total variation distance at most $1/4$ requires $n^{\Omega(d_H)}$ time.
\end{theorem}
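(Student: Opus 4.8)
My plan is to reduce from $k$-sparse parity with noise by tensorizing the RBM embedding underlying Theorem~\ref{thm:rbm-hardness-1}. The one point needing care: the embedding with $\lambda_1,\lambda_2 = O(1)$ must, as already used in the proof of Theorem~\ref{thm:rbm-hardness-1}, employ an exponentially small parity bias $\eta_0 = k^{-\Theta(k)}$ (this is exactly what keeps the row/column $\ell_1$-norms bounded in the Taylor-series construction). Hence a single gadget $R_S$ has visible marginal $P_S$ only total-variation distance $\eta_0$ from uniform and on its own is useless for a $\TV$-$\tfrac14$ distribution-learning lower bound; tensorization is what turns this tiny bias back into constant-size signal, at the price of controlled blowups in the number of nodes and samples.

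In detail, fix $k := d_H$ (in the range where $2^{O(k)}\le n$, i.e.\ $d_H = \tilde O(\log n)$). For each size-$k$ subset $S$ of $[\tilde n]$ let $R_S$ be the RBM from Theorem~\ref{thm:rbm-hardness-1} with $\tilde n$ visible units, $2^{O(k)}$ hidden units, maximum hidden degree $k$, $\lambda_1,\lambda_2 = O(1)$, and visible marginal $P_S$ equal to the $k$-sparse parity on $S$ with noise bias $\eta_0 = k^{-\Theta(k)}$; let $R_\emptyset$ be the trivial ($W=0$) RBM. Take $T := C/\eta_0^2 = k^{\Theta(k)}$ for a large absolute constant $C$, and let the hard family $\mathcal F$ consist of the $T$-fold products $R_S^{\otimes T}$ over disjoint blocks of variables, together with $R_\emptyset^{\otimes T}$. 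Since the blocks do not interact, each member is still an RBM with $\lambda_1,\lambda_2 = O(1)$ and maximum hidden degree $k$, and it has $n = \Theta\!\big(T(\tilde n + 2^{O(k)})\big)$ nodes. The core estimate is that $\mathcal F$ is well separated in $\TV$: a direct computation of the Bhattacharyya coefficient gives $\mathrm{BC}(P_S,P_{S'}) = 1-\Theta(\eta_0^2)$ and $\mathrm{BC}(P_S,\mathrm{Unif}) = 1-\Theta(\eta_0^2)$, so by tensorization $\mathrm{BC}(R_S^{\otimes T},R_{S'}^{\otimes T})\le e^{-\Theta(C)}$ (and likewise against $R_\emptyset^{\otimes T}$), whence every pair of distinct members of $\mathcal F$ is at $\TV$ distance $\ge 1-e^{-\Theta(C)}\ge 3/4$ for $C$ large enough.

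Given this, the reduction mirrors Theorem~\ref{thm:rbm-hardness-1}: from sample access to a $k$-sparse-parity-with-noise source on $\tilde n$ bits (either uniform, or with some unknown $S$), produce samples of the corresponding member of $\mathcal F$ by packing $T$ independent oracle draws into one block-structured sample, run the hypothesized $\TV$-$\tfrac14$ learner to obtain $\hat P$, and then determine which member of $\mathcal F$ the output is --- it is unique, since members are $\ge 3/4$ apart --- in particular whether it is $R_\emptyset^{\otimes T}$, which solves the parity-with-noise testing problem after the usual probability boosting. On the bookkeeping side, one tensorized sample costs $T = k^{\Theta(k)}$ oracle draws and $\poly(n)$ time, so for $d_H$ in the stated range one has $n = \tilde n^{1+o(1)}$, and a learner running in time $o\!\big(n^{\Omega(d_H)}\big)$ would violate Assumption~\ref{assumption:spwn}. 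I expect the bulk of the work to be (i) the Bhattacharyya/Hellinger tensorization estimate, and (ii) checking that the $k^{\Theta(k)}$ tensorization factor and the $2^{O(k)}$ hidden-unit factor stay subpolynomial in $\tilde n$ over the relevant range of $k$ so that the exponent in the final $n^{\Omega(d_H)}$ bound is not degraded. The remaining delicate point is making the step that decides which member $\hat P$ is close to efficient enough to fit inside the reduction (turning the $\TV$-accurate model into a correct answer for parity testing); this is exactly where the tensor amplification is essential, since the relevant likelihood-ratio statistic is block-separable and its signal is boosted to a constant by the tensor power, so it can be estimated from samples of the learned generative model rather than by brute-force search over $S$.
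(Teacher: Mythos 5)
Your proposal matches the paper's proof of this theorem: both take the $\lambda_1,\lambda_2=O(1)$ sparse-parity embedding from Theorem~\ref{thm:rbm-hardness-1}, whose visible marginal is only $k^{-\Theta(k)}$ in total variation from uniform, and tensorize it over $k^{\Theta(k)}$ disjoint blocks to amplify the separation to a constant, so that a $\TV$-$\tfrac14$ learner solves the parity testing problem. Your write-up simply makes explicit some details (the Hellinger/Bhattacharyya tensorization bound, the bookkeeping on the blowup in $n$, and the identification step) that the paper's two-sentence argument leaves implicit.
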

\begin{proof}
The construction in Theorem~\ref{thm:rbm-hardness-1} shows that there exists a family of RBMs given by embedding sparse parity with noise with the desired property, except that the total variation distance is only guaranteed to be $2^{-O(d_H \log(d_H))}$. By building a larger RBM consisting of $2^{d_H \log(d_H)}$ disjoint copies of the original RBM (note that the resulting increase in $n$ is a multiplicative factor independent of the original $n$), we can boost the total variation distance to be arbitrarily close to $1$.
\end{proof}

In order to give lower bounds with respect to $\lambda_2$ for fixed $\eta$, we need a significantly more involved argument. 
We first recall an approximate construction of parity (with low levels of noise) from \cite{martens-rbm-representation}:
\begin{theorem}[Theorem 7 of \cite{martens-rbm-representation}]\label{thm:martens}
There exists an RBM network with $n^2 + 1$ hidden units and weights $poly(n,\log(1/\epsilon))$ such that the marginal distribution $P$ on the visible units satisfies $P(x) \propto e^{f(x)}$ for some $f$ satisfying
\[ \sup_{x \in \{\pm 1\}^n} \left|f(x)/C - x_1 \cdots x_n\right| \le \epsilon \]
where $C > 0$ satisfies $C = poly(\log(n),\log(1/\epsilon))$. 
\end{theorem}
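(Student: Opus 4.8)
Write $n_2$ for the number of hidden units. The plan is to work with the log-marginal: summing out the hidden layer of an RBM with weight columns $W_1,\dots,W_{n_2}$ and biases $b^{(1)},b^{(2)}$ gives $P(x)\propto \exp(f(x))$ with
\[ f(x) = \langle b^{(1)},x\rangle + \sum_{j}\log\cosh\big(\langle W_j,x\rangle + b^{(2)}_j\big) + n_2\log 2, \]
so it suffices to realize a scaled copy of the monomial $x_1\cdots x_n$, up to an additive constant and $\eps$ error in $\ell_\infty$ on $\{\pm1\}^n$, as a \emph{positive}-coefficient combination of ridge functions $\log\cosh(\langle W_j,x\rangle+b^{(2)}_j)$ (plus, if needed, a linear term). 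An additive constant is harmless since $P$ depends on $f$ only up to a constant; the extra ``$+1$'' hidden unit (zero weights, tuned bias) is available to recenter $f/C$ exactly onto $\pm1$.

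The key step is a Fourier cancellation over sign patterns. For $\eps\in\{\pm1\}^n$ and a scale $a>0$, expand $\log\cosh(a\langle\eps,x\rangle)$ in the Fourier basis: since it is an even, permutation-symmetric function of the coordinatewise products $(\eps_i x_i)_i$, it equals $\sum_{S}d_{|S|}(a)\,\eps_S\,x_S$ where $\eps_S=\prod_{i\in S}\eps_i$, $d_m(a)=0$ for odd $m$, and $d_n(a)\ne 0$ (as $\log\cosh$ is not a polynomial). Hence for any multiset $T$ of sign patterns all lying in the even-weight set $E=\{\eps:\prod_i\eps_i=1\}$, using $\eps_\emptyset=\eps_{[n]}=1$ on $T$,
\[ \sum_{\eps\in T}\log\cosh(a\langle\eps,x\rangle) = |T|\,\big(d_0(a)+d_n(a)\,x_1\cdots x_n\big) + \sum_{\emptyset\ne S\ne[n]} d_{|S|}(a)\Big(\textstyle\sum_{\eps\in T}\eps_S\Big)x_S . \]
Taking $T$ to be (the support of) a $\delta$-biased distribution over $E\cong\mathbb F_2^{n-1}$ forces $|\sum_{\eps\in T}\eps_S|\le \delta|T|$ for every nontrivial character, so the error term has $\ell_\infty$ norm at most $\delta|T|\,L_n(a)$, where $L_n(a)=\sum_S|d_{|S|}(a)|$ is the spectral norm of $x\mapsto\log\cosh(a\sum_i x_i)$. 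Setting $C:=|T|\,d_n(a)$ and $\delta\lesssim \eps\,|d_n(a)|/L_n(a)$ yields $\|f/C-x_1\cdots x_n\|_\infty\le\eps$ after subtracting the constant $d_0(a)/d_n(a)$; the RBM has $|T|+1$ hidden units, weights $\Theta(a)$, and gain $C=|T|\,d_n(a)$.

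The main obstacle is the quantitative bookkeeping: one must choose $a$ and an \emph{explicit} $\delta$-biased set $T$ so that simultaneously $|T|=O(n^2)$, the weights $\Theta(a)$ are $\mathrm{poly}(n,\log(1/\eps))$, and $C=|T|\,d_n(a)$ is only $\mathrm{poly}(\log n,\log(1/\eps))$ --- which requires good control of the top Fourier coefficient $d_n(a)$ and the spectral norm $L_n(a)$ of $\log\cosh(a\sum_i x_i)$ as functions of $a$, together with a small-bias set of size $\mathrm{poly}(n,1/\delta)$. A second route, which routes the $\eps$-dependence through the weight magnitudes instead of through $|T|$, is to first write parity as a signed sum of $O(n)$ symmetric threshold functions of $\sum_i x_i$ (the identity of \cite{hajnal1993threshold} used in the proof of Theorem~\ref{thm:net-hardness}) and realize each threshold with $O(n)$ hidden units by a telescoping construction over the coordinates in which the lower-order cross terms cancel in pairs. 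Either way, the crux is arranging cancellation of \emph{all} non-top Fourier coefficients while respecting positivity of the $\log\cosh$ coefficients (RBM hidden units can only add, never subtract, such terms); the detailed construction meeting the stated parameters is that of \cite{martens-rbm-representation}.
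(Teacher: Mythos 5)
The paper does not prove this statement: it is imported verbatim as Theorem~7 of \cite{martens-rbm-representation} and used as a black box inside the SQ lower bound (Theorem~\ref{thm:lowerboundSQ}), so there is no in-paper argument to compare against and your attempt has to stand on its own as a from-scratch proof. Your setup is right --- summing out the hidden layer gives $f(x)=\langle b^{(1)},x\rangle+\sum_j\log\cosh(\langle W_j,x\rangle+b^{(2)}_j)+\mathrm{const}$ --- and the exact-cancellation observation is genuinely nice: taking \emph{all} of $E=\{\eps:\prod_i\eps_i=1\}$ gives $\sum_{\eps\in E}\log\cosh(a\langle\eps,x\rangle)=2^{n-1}(d_0(a)+d_n(a)\,x_1\cdots x_n)$ exactly, because $\eps\mapsto\eps_S$ is a nontrivial character of the index-two subgroup $E$ for every $S\notin\{\emptyset,[n]\}$.

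The gap is that the sparsification step you defer as ``quantitative bookkeeping'' provably cannot meet the stated parameters. Each $\log\cosh$ term enters the free energy with coefficient $+1$, so your multiset $T$ is a nonnegatively weighted $\delta$-biased set over $E\cong\mathbb{F}_2^{n-1}$, and such sets require $|T|=\Omega(n/\delta^2)$ (this is the standard lower bound for small-bias sets, not an artifact of explicit constructions). Your normalized error is $\delta\,L_n(a)/|d_n(a)|\ge\delta$ since $L_n(a)\ge|d_n(a)|$, so with $|T|=n^2+1$ hidden units the best achievable error is $\widetilde\Omega(1/\sqrt{n})$ --- whereas the theorem demands arbitrary $\epsilon$ with the number of hidden units \emph{independent} of $\epsilon$, routing the $\epsilon$-dependence only into the weight magnitudes and into $C$. (There is also a smaller issue: for odd $n$ the unbiased ridge functions $\log\cosh(a\langle\eps,x\rangle)$ are even in $x$, so $d_n(a)=0$ and the top coefficient you need vanishes.) Your ``second route'' via the signed threshold decomposition of parity is much closer to what a working construction must do, but the signs in that identity collide head-on with the positivity of the $\log\cosh$ contributions; circumventing that positivity constraint is exactly the nontrivial content of the cited construction, and the proposal explicitly leaves it unresolved. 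As it stands, the argument does not establish the theorem.
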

This construction is for a dense parity, but obviously we can make the parity as sparse as we want by adding additional visible units not connected to anything else. More significantly, since the above theorem only constructs an $\epsilon$-approximate instance of parity with noise $\eta = O(1/2 - 1/poly(n,1/\epsilon))$, when $n$ or $1/\epsilon$ is large it does not seem that the resulting distribution is computationally hard to distinguish from the uniform distribution, since Gaussian elimination over $\mathbb{F}_2$ has some chance of succeeding to find the parity. Since we need $\epsilon$ to be small for the model to be indistinguishable from sparse parity with noise, this appears to be a barrier to deriving a hardness result from the above Theorem. 
Instead, we will prove that our result cannot be significantly improved for SQ (Statistical Query) algorithms (for a reference, see \cite{blum1994weakly}).
In the Statistical Query model algorithms do not have access to data, but instead have access to an SQ oracle:
\begin{defn}
An oracle for the statistical query model over distribution $\mathcal{D}$ over $X,Y$  takes input $(g,\tau)$ where $g$ is a function $g : \{\pm 1\}^n \times \{\pm 1\} \to [-1,1]$ and $\tau$ is a tolerance, and gives output $v$ with
\[ |\E_{X,Y \sim D}[g(X,Y)] - v| \le \tau. \]
\end{defn}
Standard arguments, i.e. implementing the needed regressions using standard gradient-based methods for convex optimization shows that our algorithm for learning RBMs can be implemented in the statistical query model (in this case, the separation of $X$ and $Y$ in the definition above is somewhat artificial but we will take $Y$ to be a particular visible unit in the RBM). We will show that statistical query algorithms cannot do better than subexponential dependence on $\lambda_2$. 

The following theorem statements a lower bound for learning concepts of large SQ-dimension in the Statistical Query model. The definition of SQ-dimension can be found in \cite{blum1994weakly}, but for our purposes the only needed fact is that the class of $k$-parities over the uniform distribution $\{\pm 1\}^n$ has SQ-dimension ${n \choose k}$ \citep{blum1994weakly}. 
\begin{theorem}[\cite{blum1994weakly}]\label{thm:sqdim}
Let $\mathcal{F}$ be a class of functions over $\{\pm 1\}^n$ and $D$ a distribution such that $\text{SQ-DIM}(\mathcal{F},D) \ge d \ge 16$. Then if all queries are made with tolerance at least $1/d^{1/3}$, then at least $d^{1/3}/2$ queries are required to learn $\mathcal{F}$ with error less than $1/2 - 1/d^3$ in the statistical query model.
\end{theorem}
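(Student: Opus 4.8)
The plan is to reconstruct the Fourier-analytic argument of \cite{blum1994weakly}: design an adversarial statistical-query oracle that responds \emph{identically} for a large sub-family of mutually near-orthogonal target functions, so that an algorithm asking few coarse queries cannot tell them apart, and then observe that no single bounded hypothesis can be non-trivially correlated with that many near-orthogonal functions simultaneously.

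First I would unpack the hypothesis. By definition of SQ-dimension there are $f_1,\dots,f_d\in\mathcal{F}$ with $|\langle f_i,f_j\rangle_D|\le 1/d$ for $i\ne j$, where $\langle a,b\rangle_D:=\E_{X\sim D}[a(X)b(X)]$ and $\|f_i\|_D\le 1$. Next I would reduce an arbitrary query to its correlational part: writing a query function $g:\{\pm1\}^n\times\{\pm1\}\to[-1,1]$ as $g(x,y)=g_0(x)+y\,g_1(x)$ with $g_0(x)=\tfrac12(g(x,1)+g(x,-1))$ and $g_1(x)=\tfrac12(g(x,1)-g(x,-1))$, both in $[-1,1]$, the correct response for target $f$ equals $\E_D[g_0]+\langle f,g_1\rangle_D$, so its only dependence on $f$ is through $\langle f,g_1\rangle_D$. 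The adversary answers every query with $\E_D[g_0]$ — i.e.\ it pretends $\langle f,g_1\rangle_D=0$ — which is within tolerance for target $f_i$ unless $|\langle f_i,g_1\rangle_D|>\tau$.

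The crux is the counting lemma: for any fixed $g_1$ with $\|g_1\|_D\le 1$, at most $O(d^{2/3})$ of the $f_i$ can satisfy $|\langle f_i,g_1\rangle_D|>\tau$ once $\tau\ge d^{-1/3}$. I would prove this by the standard second-moment trick: flip signs so that $\langle\tilde f_i,g_1\rangle_D\ge 0$ on the offending set $S$, apply Cauchy--Schwarz to $\langle\sum_{i\in S}\tilde f_i,\,g_1\rangle_D$, and bound $\|\sum_{i\in S}\tilde f_i\|_D^2\le|S|+|S|^2/d$ via near-orthogonality (equivalently, the Gram matrix is $I+E$ with $\|E\|_{\mathrm{op}}\le 1$); this forces $|S|(\tau^2-1/d)<1$, hence $|S|=O(d^{2/3})$ for $d\ge 16$. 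Consequently, running the learner against the adversary produces a fixed transcript and a fixed output $h^\ast$, and the set $T$ of indices $i$ for which \emph{every} answer was within tolerance for target $f_i$ has $|T|\ge d-q\cdot O(d^{2/3})$, which remains $\Omega(d)$ whenever $q<d^{1/3}/2$.

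To conclude, observe that for each $i\in T$ the adversary is a legitimate SQ oracle for the target $f_i$, so a correct learner must output this same $h^\ast$ with error below $1/2-1/d^3$, i.e.\ $\langle h^\ast,f_i\rangle_D$ bounded below, against every $i\in T$. But $h^\ast$ is a single function with $\|h^\ast\|_D\le 1$, so $\sum_{i\in T}\langle h^\ast,f_i\rangle_D^2\le\|h^\ast\|_D^2\,\|G_T\|_{\mathrm{op}}\le 2$, which forces some $i^\ast\in T$ on which $h^\ast$ is nearly uncorrelated with $f_{i^\ast}$, contradicting the required accuracy on that surviving target; hence $q\ge d^{1/3}/2$ queries are necessary. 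I expect the counting lemma, together with lining up the constants so that $|T|$ stays large below $d^{1/3}/2$ queries while the residual advantage a single hypothesis can retain over all of $T$ falls below the stated level — the delicate parameter balancing carried out in \cite{blum1994weakly} — to be the only subtle points; everything else is routine, and since the statement is quoted verbatim from \cite{blum1994weakly} one may also simply invoke their proof.
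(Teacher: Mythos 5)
The paper does not prove this statement at all: it is imported verbatim as a black box from \cite{blum1994weakly}, so there is no internal proof to compare against, and your closing remark that one may ``simply invoke their proof'' is in fact exactly what the paper does. Your reconstruction of the adversary argument is the standard one, and the query-elimination half is sound: the decomposition $g=g_0+yg_1$, the answer $\E_D[g_0]$, and the counting bound $|S|(\tau^2-1/d)<1$ giving at most $O(d^{2/3})$ eliminated candidates per query for $\tau\ge d^{-1/3}$ and $d\ge 16$, hence a surviving set $T$ with $|T|=\Omega(d)$ after fewer than $d^{1/3}/2$ queries, are all correct.

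The genuine gap is in your final step. From $\sum_{i\in T}\langle h^\ast,f_i\rangle_D^2\le 2$ and $|T|=\Omega(d)$ you can only conclude that some $i^\ast\in T$ has $|\langle h^\ast,f_{i^\ast}\rangle_D|\le\sqrt{2/|T|}=O(d^{-1/2})$, i.e.\ that $h^\ast$ has error at least $\tfrac12-O(d^{-1/2})$ on $f_{i^\ast}$. But the theorem asks you to rule out error less than $\tfrac12-1/d^3$, which requires exhibiting a survivor with correlation at most $2/d^3$ --- and $2/d^3\ll d^{-1/2}$, so the inequality points the wrong way and no contradiction is obtained. (Concretely: a single hypothesis can easily have correlation $\Theta(1/d)$ with every one of $d$ pairwise-orthogonal targets, e.g.\ the indicator of the all-ones point against all nonempty parities, and this already beats the $2/d^3$ threshold while being fully consistent with your second-moment bound.) So as written your argument proves the lower bound only for the error threshold $\tfrac12-\Omega(d^{-1/2})$, not $\tfrac12-1/d^3$; the ``delicate parameter balancing'' you defer to \cite{blum1994weakly} is precisely the part that is missing, and it cannot be supplied by the $L^2$ estimate you use. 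Either strengthen the concluding step with whatever additional bookkeeping the cited proof uses, weaken the error parameter in the statement you actually establish, or cite the result outright rather than presenting the reconstruction as a proof.
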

\begin{theorem} \label{thm:lowerboundSQ}
Let $S$ be an unknown subset of $[n]$ of size $k$ and containing $n$ and $\mathcal{D}$ is the distribution of the RBM produced by Theorem~\ref{thm:martens} on $S$ where the other $n - |S|$ visible units are isolated and without external field. Let $\mathcal{F}$ be the class of parities on $[n - 1]$. As before, $\lambda_2$ refers to the maximum $\ell_1$-norm into any hidden unit and we choose parameters so that $\lambda_2 = poly(n)$ and $\|w\|_1 = poly(n)$.
There exists $\epsilon > 0$ so that no SQ algorithm with tolerance $n^{-\lambda_2^{\epsilon}}$ and access to $n^{\lambda_2^{\epsilon}}$ queries can learn $\mathcal{F}$ with error less than $1/4$.
\end{theorem}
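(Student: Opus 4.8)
The plan is to reduce from the hardness of SQ-learning parities (Theorem~\ref{thm:sqdim}), using Theorem~\ref{thm:martens} to realize an (approximate, noisy) sparse parity as the visible marginal of an RBM whose complexity parameters $\lambda_2,\|w\|_1$ are polynomial in $n$. Note that one cannot argue as in Theorems~\ref{thm:net-hardness} and~\ref{thm:rbm-hardness-1} by quoting Assumption~\ref{assumption:spwn}: Theorem~\ref{thm:martens} produces only an \emph{approximate} parity, at an effective noise rate $1/2-\tfrac12\tanh(C)$ with $C$ large, which is so close to $1/2$ that the corresponding sparse-parity-with-noise instance is plausibly not computationally hard (Gaussian elimination over $\mathbb F_2$ may succeed). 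The point is precisely that SQ algorithms cannot do this, so the whole argument must stay inside the SQ model and lean on the super-polynomial SQ-dimension of parities.

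Concretely, I would fix $k=\lceil n/2\rceil$ and instantiate Theorem~\ref{thm:martens} on the $k$ coordinates of $S$ with approximation parameter $\epsilon_{\mathrm M}=2^{-n^2}$, making the remaining $n-k$ visible units isolated with no field. This gives $\mathcal D_S(x)\propto e^{f(x)}$ with $|f(x)/C-\chi_S(x)|\le\epsilon_{\mathrm M}$ and $C=\mathrm{poly}(\log n,n)$, realized by an RBM with $k^2+1$ hidden units and weights $\mathrm{poly}(n)$, so $\lambda_2\le k\cdot\mathrm{poly}(n)=\mathrm{poly}(n)$ and $\|w\|_1\le k^2+1=\mathrm{poly}(n)$. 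Writing $T:=S\setminus\{n\}\subseteq[n-1]$ and letting $Q_T(x)\propto e^{C\chi_S(x)}=e^{Cx_n\chi_T(x_{[n-1]})}$ be the idealized model, a direct computation shows that under $Q_T$ the bits $x_{[n-1]}$ are uniform and $x_n=\chi_T(x_{[n-1]})$ with probability $\tfrac12+\eta_0$ where $\eta_0:=\tfrac12\tanh C$ --- i.e.\ $Q_T$ is \emph{exactly} the $(k-1)$-sparse-parity-with-noise distribution for the set $T$, the coordinates outside $S$ being independent uniform bits as in $\mathcal D_S$. Since $|f-C\chi_S|\le C\epsilon_{\mathrm M}$ pointwise, the likelihood ratio $\mathcal D_S/Q_T$ lies in $[e^{-2C\epsilon_{\mathrm M}},e^{2C\epsilon_{\mathrm M}}]$, so $\rho:=\TV(\mathcal D_S,Q_T)\le e^{2C\epsilon_{\mathrm M}}-1=2^{-\Omega(n^2)}$.

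With $\mathcal F$ the class of parities on $[n-1]$ (so $\text{SQ-DIM}(\mathcal F,U)\ge d:=\binom{n-1}{k-1}=2^{\Theta(n)}$), suppose for contradiction that an SQ algorithm $\mathcal A$ with oracle access to $\mathcal D_S$ --- target $\chi_T$, ``label'' $x_n$ --- makes $q\le n^{\lambda_2^{\epsilon}}$ queries at tolerance $\tau\ge n^{-\lambda_2^{\epsilon}}$ and outputs $h$ with $\Pr_{\mathcal D_S}[h(x_{[n-1]})\ne x_n]<1/4$; choosing $\epsilon$ a small enough constant (it suffices $\epsilon<1/c$ with $\lambda_2\le n^c$) forces $\tau=2^{-o(n)}$, $q=2^{o(n)}$, hence $\rho\le\tau$, $\tau\ge 1/d^{1/3}$, $q\le d^{1/3}/2$. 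The reduction has two steps. First, answer each query $g$ with the \emph{exact} value $\E_{Q_T}[g]$; since $|\E_{Q_T}[g]-\E_{\mathcal D_S}[g]|\le 2\rho\le\tau$, these are valid tolerance-$\tau$ answers, so $\mathcal A$ still returns $h$ with $\Pr_{Q_T}[h\ne x_n]<1/4+\rho$. Second, decomposing $g(x_{[n-1]},x_n)=g_0(x_{[n-1]})+x_n g_1(x_{[n-1]})$ yields $\E_{Q_T}[g]=\E_U[g_0]+2\eta_0\widehat{g_1}(T)$, so a \emph{noiseless} parity-SQ oracle for $\chi_T$ (returning $\widehat{g_1}(T)$ to tolerance $\tau$) simulates the $Q_T$-oracle to tolerance $2\eta_0\tau\le\tau$; and using $\Pr_{Q_T}[h\ne x_n]=1/2-\eta_0(1-2\Pr_U[h\ne\chi_T])$ together with $\eta_0\le 1/2$, the bound $\Pr_{Q_T}[h\ne x_n]<1/4+\rho<1/2-1/d^3$ gives $\Pr_U[h\ne\chi_T]<1/2-1/d^3$. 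Thus $\mathcal A$ yields an SQ learner for $\mathcal F$ over $U$ using $q<d^{1/3}/2$ queries, each of tolerance $\ge 1/d^{1/3}$, with error $<1/2-1/d^3$, contradicting Theorem~\ref{thm:sqdim}.

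The main obstacle is exactly this SQ detour forced by Theorem~\ref{thm:martens}: since the embedded parity is approximate and its noise rate is exponentially close to $1/2$, standard SPWN hardness is unavailable, and instead one must (i) push $\epsilon_{\mathrm M}$ exponentially small so that $\rho$ falls well below the SQ tolerance $\tau$ --- harmless, because $\epsilon_{\mathrm M}$ enters $\lambda_2,\|w\|_1,C$ only through $\log(1/\epsilon_{\mathrm M})$ --- and (ii) route the query budget $q$ and tolerance $\tau$ through both the total-variation simulation and the ``split off the label'' reduction to correlational queries without eroding them past the $d^{1/3}$ threshold of Theorem~\ref{thm:sqdim}. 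Keeping $\lambda_2=\mathrm{poly}(n)$, $\|w\|_1=\mathrm{poly}(n)$, $\rho\le\tau$, and $d=2^{\Theta(n)}$ simultaneously true is the only place genuine bookkeeping care is needed.
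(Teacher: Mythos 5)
Your proposal is correct and takes essentially the same route as the paper: the paper's own proof is a three-line sketch (take $\epsilon = e^{-n}$ in Theorem~\ref{thm:martens}, observe the resulting RBM is within exponentially small TV of an exact sparse-parity-with-noise distribution, simulate the SQ oracle, and invoke Theorem~\ref{thm:sqdim}), and your write-up simply supplies the simulation and tolerance bookkeeping that the paper omits. One small slip in your motivational preamble: with $C$ large the effective noise rate $1/2 - \frac{1}{2}\tanh(C)$ is close to $0$ (equivalently the bias is close to $1/2$), which is precisely why Gaussian elimination threatens to succeed and why Assumption~\ref{assumption:spwn} is unavailable --- your conclusion is right but the phrase ``so close to $1/2$'' has the direction backwards.
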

\begin{proof}
In Theorem~\ref{thm:martens} we take $\epsilon = \exp(-n)$ which gives $\lambda_2 = poly(n)$. The resulting RBM is then within TV distance $\exp(-n)$ of the distribution of a parity over the uniform distribution with a small amount of label noise, so an SQ algorithm for the RBM setting implies an SQ algorithm for learning parity, and the result follows from the lower bound of Theorem~\ref{thm:sqdim}.
\end{proof}
\section{Learning a Feedforward Network by Learning RBMs}\label{sec:feedforward-from-rbm}
In this section we reverse the connection between RBMs and Feedforward networks by using RBMs with certain structural assumptions as a useful \emph{distributional assumption} for learning feedforward network. More formally, we assume our data is generated by the following Supervised RBM.
\begin{defn} \label{def:suprbm}
A \emph{Supervised Restricted Boltzmann Machine} is any joint distribution over random variables $X$ valued in $\{\pm 1\}^{n_1}$, $H$ valued in $\{ \pm 1\}^{n_2}$  and label $Y \in \{\pm 1\}$ of the form
\[ \Pr[X = x,H = h,Y = y] \propto \exp\left(\langle x, Wh \rangle + \langle h, w\rangle y + \langle b^{(1)}, x \rangle + \langle b^{(2)}, h \rangle + b^{(3)}y\right) \]
where the \emph{weight matrix} $W$ is an arbitrary $n_V \times n_H$ matrix and \emph{external fields}/\emph{biases} $b^{(1)} \in \mathbb{R}^{n_1}$, $b^{(2)} \in \mathbb{R}^{n_2}$ and $b^{(3)}$ are arbitrary, and 
$X$ is referred to as the vector of \emph{visible unit} activations and $H$ the vector of \emph{hidden unit} activations.
\end{defn}
We make the following additional assumptions on the parameters of the model.
\begin{assumption}[Minimum Ferromagnetic Interaction] \label{ass:srbm1} For all $i \in [n_1] ,j \in [n_2]$ either $W_{ij} = 0$ or $W_{ij} \ge \alpha$.
\end{assumption}
We do not make any assumption on the weight $w$ to the label. Therefore the model overall is not ferromagnetic.
\begin{assumption}[Sparsity]\label{ass:srbm2} For all $i \in [n_1]$, $\sum_{j = 1}^{n_2} W_{ij} + |b^{(1)}_i| \leq \lambda$ and for either $y = -1$ or $y = 1$, for all $j \in [n_2]$ $\sum_{i = 1}^{n_1} W_{ij} + |b^{(2)}_j + yw_j| \leq \lambda$.
\end{assumption}
Here the sparsity assumption implies that under the conditioning of the label to either value, the sparsity parameter is bounded. This conditional sparsity can be exploited by an algorithm for learning the conditional distribution whereas a direct regression algorithm may be unable to gain from the same.

\begin{remark}
Observe that the generative model of $X$ itself is not sparse since $Y$ is connected to all hidden nodes however conditioned on knowing the label $Y$, the model is now sparse. This assumption is more reasonable than assuming sparsity directly on the model of $X$ which may not hold.
\end{remark}

\begin{assumption}[Balanced Label]\label{ass:srbm3} For $y \in \{\pm 1\}$, $\Pr[Y = y] \ge \beta$.
\end{assumption}
The above assumption essentially rules out trivial constant learners. Using data, it is easy to check if this assumption is satisfied or not.

As before, we can compute the conditional mean function of the label as follows:
\[ \E[Y | X = x] = \tanh\left(b^{(3)} + \sum_{j} \tanh^{-1}\left(\tanh(w_j) \nu_{j}\right)\right) \]
where $\nu_{j} := \tanh\left(b^{(2)}_{j} + \sum_i \tanh^{-1}\left(\tanh(W_{ij}) X_i\right)\right) = \tanh\left(b^{(2)}_{j} + \sum_i W_{ij} X_i\right).$ This represents a 2-layer neural network and in the limit of infinite hidden nodes, it can represent all 2-layer $\tanh$ networks (see Lemma \ref{lem:express}). 
\begin{assumption}[Boundedness] \label{ass:srbm4}
When $\E[Y|X = x]$ is re-expressed as $\tanh(f^*(x) + b^*)$ for some function $f^*$ with no constant term and $b^* \in \mathbb{R}$. $|b^*| \le B$ for some $B> 0$.
\end{assumption}
The above assumption intuitively says that the effect on $Y$ that does not depend on $X$ is bounded. $B$ can be bounded in terms of the network parameters.

Also observe that conditioned on a fixed label,
\[
\Pr[X= x, H=h | Y= y] \propto \exp\left(\langle x, Wh \rangle + \langle b^{(1)}, x \rangle + \langle b^{(2)} + w y, h \rangle \right) 
\]
which is a sparse, ferromagnetic RBM with arbitrary external field. Thus, we capture a neural network problem with a conditional RBM distributional assumption on the input. This distributional assumption seems more natural than the Gaussian input distribution which is extensively used in prior work. Also, this assumption allows us to leverage prior known algorithms for structure learning of ferromagnetic RBMs to learn the prediction function.

\subsection{Preliminaries: Structure Learning of RBMs with Ferromagnetic Interactions}
Consider a RBM with the following additional assumptions:

\begin{assumption}[Minimum Ferromagnetic Interaction] For all $i \in [n_1] ,j \in [n_2]$ either $W_{ij} = 0$ or $W_{ij} \ge \alpha$.
\end{assumption}

\begin{assumption}[Sparsity] For all $i \in [n_1]$, $\sum_{j = 1}^{n_2} W_{ij} + |b^{(1)}_i| \leq \lambda$ and for all $j \in [n_2]$, $\sum_{i = 1}^{n_1} W_{ij} + |b^{(2)}_j| \leq \lambda$.
\end{assumption}

Under these assumptions, \cite{goel2019learning} has shown that a simple greedy algorithm based on covariance maximization suffices to learn the structure of the RBM. Under the further assumption of non-negative external fields, \cite{bresler2019learning} previously showed a similar greedy maximization algorithm with better dependence on the sparsity parameter $\lambda$.

The crucial structural property that \cite{goel2019learning} use is their algorithm is the following strengthening of the FKG inequality,
\begin{lemma}[Lemma 2 of \cite{goel2019learning}]\label{lem:cond}
For any observed nodes $u, v$ and set $S \subseteq [n_1] \backslash \{u,v\}$,
\[
\mathsf{Cov}(u, v| X_S = x_S):= \E[X_uX_v| X_S = x_S] - \E[X_u|X_S = x_S]~\E[X_v | X_S =x_S] \ge \alpha^2\exp(-12\lambda).
\]
\end{lemma}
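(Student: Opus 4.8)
The plan is to reduce to the unconditioned case, rewrite the visible--visible covariance as a covariance of two monotone functions of the hidden layer, collapse that covariance onto the single shared hidden unit, and finish with a one--variable calculus estimate together with a correlation (FKG/Griffiths) inequality. First I would reduce to $S=\emptyset$: conditioning the RBM on $X_S=x_S$ yields again an RBM on the remaining visible units and the full hidden layer $H$, in which each $b^{(2)}_j$ is replaced by $b^{(2)}_j+\sum_{i\in S}W_{ij}x_i$; ferromagneticity is preserved since $W$ is unchanged, and the sparsity assumption survives because $|b^{(2)}_j+\sum_{i\in S}W_{ij}x_i|+\sum_{i\notin S}W_{ij}\le |b^{(2)}_j|+\sum_i W_{ij}\le\lambda$. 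Hence it suffices to prove $\mathsf{Cov}(X_u,X_v)\ge\alpha^2\exp(-12\lambda)$ in a $\lambda$-sparse ferromagnetic RBM, under the hypothesis (which is guaranteed in the way the lemma is applied, and without which the bound is false in general) that $u$ and $v$ share a hidden neighbor $j^*$ with $W_{uj^*}\ge\alpha$ and $W_{vj^*}\ge\alpha$.

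Next, by the law of total covariance together with the RBM property that the visible units are mutually independent given $H$ (so that $\mathsf{Cov}(X_u,X_v\mid H)=0$), we get $\mathsf{Cov}(X_u,X_v)=\mathsf{Cov}(m_u(H),m_v(H))$ where $m_u(h):=\tanh(b^{(1)}_u+\sum_j W_{uj}h_j)$ and similarly $m_v$. Conditioning further on $H_{\sim j^*}$, the hidden units other than $j^*$, and applying the law of total covariance again gives
\[
\mathsf{Cov}(m_u(H),m_v(H)) = \E\left[\mathsf{Cov}(m_u(H),m_v(H)\mid H_{\sim j^*})\right] + \mathsf{Cov}\left(\E[m_u\mid H_{\sim j^*}],\E[m_v\mid H_{\sim j^*}]\right).
\]
The second term is $\ge 0$ by positive association (Griffiths' second inequality) for the ferromagnetic hidden-layer marginal: $\E[m_u\mid H_{\sim j^*}]$ is a coordinatewise nondecreasing function of $H_{\sim j^*}$ because $m_u$ is nondecreasing in every $h_j$ and the conditional law of $H_{j^*}$ given $H_{\sim j^*}$ is stochastically nondecreasing in $H_{\sim j^*}$; likewise for $v$.

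It remains to lower-bound $\mathsf{Cov}(m_u,m_v\mid H_{\sim j^*}=g)$ uniformly over configurations $g$. Writing $h^\pm$ for the configuration equal to $g$ off $j^*$ and equal to $\pm 1$ at $j^*$, and $p:=\Pr(H_{j^*}=1\mid H_{\sim j^*}=g)$, both $m_u$ and $m_v$ are affine in the single $\pm1$ variable $H_{j^*}$, so
\[
\mathsf{Cov}(m_u,m_v\mid H_{\sim j^*}=g) = \left(m_u(h^+)-m_u(h^-)\right)\left(m_v(h^+)-m_v(h^-)\right)\, p(1-p).
\]
By sparsity the argument of the $\tanh$ in $m_u$ stays in $[-\lambda,\lambda]$, where $\tanh'=\sech^2\ge e^{-2\lambda}$, so by the mean value theorem $m_u(h^+)-m_u(h^-)\ge 2W_{uj^*}e^{-2\lambda}\ge 2\alpha e^{-2\lambda}$, and similarly for $v$. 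The log-odds of $H_{j^*}$ given $H_{\sim j^*}=g$ equal $2\psi$ with $\psi=b^{(2)}_{j^*}+\tfrac12\sum_i[\log\cosh(\theta_i^+)-\log\cosh(\theta_i^-)]$, where $\theta_i^\pm=b^{(1)}_i\pm W_{ij^*}+\sum_{k\ne j^*}W_{ik}g_k$; since $|\log\cosh(\theta_i^+)-\log\cosh(\theta_i^-)|\le 2W_{ij^*}$ (as $|\tanh|\le1$), we have $|\psi|\le |b^{(2)}_{j^*}|+\sum_i W_{ij^*}\le\lambda$, hence $p(1-p)=1/(4\cosh^2\psi)\ge e^{-2\lambda}/4$. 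Multiplying, $\mathsf{Cov}(m_u,m_v\mid H_{\sim j^*}=g)\ge (2\alpha e^{-2\lambda})^2\cdot e^{-2\lambda}/4=\alpha^2 e^{-6\lambda}\ge\alpha^2\exp(-12\lambda)$, and taking expectation over $H_{\sim j^*}$ completes the proof.

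The step I expect to be the main obstacle — and the one that genuinely uses ferromagneticity — is the correlation-inequality step: verifying that the hidden-layer marginal of a nonnegative-weight RBM satisfies the FKG lattice condition, equivalently that the conditional law of $H_{j^*}$ is stochastically monotone in $H_{\sim j^*}$. This reduces to checking that $\langle b^{(2)},h\rangle+\sum_i\log\cosh(b^{(1)}_i+\sum_j W_{ij}h_j)$ is supermodular on $\{\pm1\}^{n_2}$, which holds because $\log\cosh$ is even and convex and a convex even function of a linear form with nonnegative coefficients is supermodular on the discrete cube; this is precisely the ``strengthening of FKG'' alluded to in the text and should be stated carefully. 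The remaining ingredients — the two applications of the law of total covariance and the elementary $\tanh$/$\log\cosh$ estimates — are routine.
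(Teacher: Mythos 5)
Your proof is correct, and it is worth noting at the outset that this paper does not actually prove the lemma --- it imports it verbatim from \citet{goel2019learning} --- so the only thing to compare against is that cited argument, which your reconstruction matches in spirit: reduce to $S=\emptyset$ by absorbing $x_S$ into the hidden biases, pass to $\Cov(m_u(H),m_v(H))$ via the conditional independence of visibles given $H$, isolate the shared hidden unit $j^*$ with the law of total covariance, discard the second term by positive association of the ferromagnetic hidden marginal, and lower-bound the first term by the one-variable identity $(\Delta m_u)(\Delta m_v)\,p(1-p)$ together with $\tanh'\ge e^{-2\lambda}$ on $[-\lambda,\lambda]$ and $p(1-p)\ge e^{-2\lambda}/4$. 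All the individual steps check out: the sparsity bound survives the conditioning since $|b^{(2)}_j+\sum_{i\in S}W_{ij}x_i|+\sum_{i\notin S}W_{ij}\le\lambda$, the hidden marginal $\propto\exp(\langle b^{(2)},h\rangle+\sum_i\log\cosh(b^{(1)}_i+\sum_jW_{ij}h_j))$ is log-supermodular (convexity of $\log\cosh$ alone suffices here; evenness is not needed), and your final constant $\alpha^2e^{-6\lambda}$ is in fact sharper than the stated $\alpha^2e^{-12\lambda}$. You are also right to flag that the statement as transcribed in this paper is literally false without the hypothesis that $u$ and $v$ share a hidden neighbor $j^*$ with $W_{uj^*},W_{vj^*}\ge\alpha$; that hypothesis is present in the source and is implicitly assumed everywhere the lemma is invoked here (it is only ever applied to two-hop neighbors), so identifying and restoring it was the right call rather than a gap in your argument.
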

Subsequently they define {\em average conditional covariance} $\Cov^\avg(u,v|S) = \E_{x_S}[\cov(u,v|X_S = x_S)]$ which straightforwardly is lower bounded by an application of the above lemma. Their final algorithm essentially greedily maximizes this average conditional covariance to build the neighborhood.

   

\begin{theorem}[Theorem 2 of \cite{goel2019learning}]\label{thm:main}
Consider $M$ samples $\mathcal{S}$ drawn from a RBM with arbitrary external field satisfying the given assumptions. For $\tau = \frac{\alpha^2}{2}\exp(-12 \lambda)$ and $\delta = \exp(-2\lambda)/2$, with probability $1 - \zeta$, $\textsc{LearnRBMNbhd}(u, \tau, \mathcal{S})$ outputs {\em exactly} the two-hop neighborhood of observed variable $u$ for
\begin{align*}
M \ge \Omega\left(\left(\log(1/\zeta) + T^*\log(n)\right)\frac{2^{2T^*}}{\tau^2 \delta^{2T^*}}\right)\text{ and } T^* = \frac{8}{\tau^2}.
\end{align*}
Moreover, the algorithm runs in time $O(T^* M n)$.
\end{theorem}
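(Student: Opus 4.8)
The plan is to follow the standard template for greedy covariance-based structure learning (as in \citep{bresler2019learning,goel2019learning,KlivansM}): split the analysis into a purely deterministic part that bounds how many vertices the greedy phase can add and shows it cannot stop too early, and a uniform concentration part showing that the empirical average conditional covariances $\widehat{\cov}_{\mathcal{S}}^{\avg}$ are close to their population values. Everything is then run conditionally on the ``good event'' that $|\widehat{\cov}_{\mathcal{S}}^{\avg}(u,v\mid S) - \cov^{\avg}(u,v\mid S)| \le \tau/2$ simultaneously for all $v$ and all $S$ with $|S| \le T^*$.

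First I would set up the deterministic core via the potential $\Phi(S) := \E[\Var(X_u\mid X_S)] \in [0,1]$. Using the law of total variance together with $X_v^2 = 1$ and Cauchy--Schwarz, one gets $\Phi(S) - \Phi(S\cup\{v\}) = \E_{X_S}[\Var(\E[X_u\mid X_S,X_v]\mid X_S)] \ge \E_{X_S}[\cov(u,v\mid X_S)^2] \ge \cov^{\avg}(u,v\mid S)^2$ by Jensen. Since the greedy phase only adds $v$ when $\widehat{\cov}_{\mathcal{S}}^{\avg}(u,v\mid S)\ge\tau$, on the good event every added vertex has true average conditional covariance $\ge\tau/2$, hence drops $\Phi$ by $\ge\tau^2/4$; since $\Phi\in[0,1]$ this bounds the number of additions by $4/\tau^2\le T^*$, which is precisely what lets one restrict the concentration event to $|S|\le T^*$. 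For correctness I would argue: while $S\not\supseteq\mathcal{N}(u)$, choosing $v\in\mathcal{N}(u)\setminus S$ and invoking Lemma~\ref{lem:cond} gives $\cov^{\avg}(u,v\mid S)\ge\alpha^2 e^{-12\lambda}=2\tau$, so on the good event the empirical maximizer is $\ge 3\tau/2>\tau$ and the greedy phase does not halt; once $S\supseteq\mathcal{N}(u)$ the Markov property forces $\cov^{\avg}(u,v\mid S)=0$ for all $v\notin S\cup\{u\}$, so it halts with $S\supseteq\mathcal{N}(u)$. The pruning step is handled identically: a genuine neighbor $v$ has $\cov^{\avg}(u,v\mid S\setminus\{v\})\ge 2\tau$ by Lemma~\ref{lem:cond} and survives, while a non-neighbor $v$ has $S\setminus\{v\}\supseteq\mathcal{N}(u)$ hence conditional covariance $0$ and is removed, so the algorithm returns exactly $\mathcal{N}(u)$.

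Next I would prove the concentration bound. The sparsity assumption makes the RBM $(\lambda,\lambda)$-bounded, so spin freedom (Lemma~\ref{lem:spin-freedom}) supplies a lower bound $\Pr(X_S=x_S)\ge\delta^{|S|}$ on every conditioning bucket, with $\delta=e^{-2\lambda}/2$. A Chernoff bound then puts $\gtrsim\delta^{|S|}M$ samples into each of the $2^{|S|}$ buckets, and within a bucket a Hoeffding bound controls the empirical conditional means (and hence $\widehat{\cov}_{\mathcal{S}}^{\avg}(u,v\mid S)$) to within $O(\tau)$ with failure probability $\exp(-\Omega(\tau^2\delta^{2|S|}M))$. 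Taking a union bound over the $\le n$ choices of $v$, the $\le n^{T^*}$ sets $S$ with $|S|\le T^*$, and their $\le 2^{T^*}$ buckets makes the total failure probability at most $\zeta$ as soon as $M\gtrsim(\log(1/\zeta)+T^*\log n)\,2^{2T^*}/(\tau^2\delta^{2T^*})$, matching the stated bound; the runtime $O(T^*Mn)$ is immediate since the greedy phase is at most $T^*$ rounds, each scanning $n$ candidates and reading $M$ samples in $O(M)$ time, and pruning is cheaper.

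The step requiring the most care is the interface between these two parts rather than either part individually: the concentration estimate is affordable only for sets of size $\le T^*$, yet one needs it in order to run the deterministic termination argument that produces the size bound in the first place. This apparent circularity is broken by the usual bootstrap --- condition on the good event over all $|S|\le T^*$, then show deterministically that the greedy phase never queries a larger set before $\Phi$ certifies termination --- but one has to check this ordering honestly. By contrast, the total-variance inequality for $\Phi$ and the appeals to Lemma~\ref{lem:cond} and Lemma~\ref{lem:spin-freedom} are routine.
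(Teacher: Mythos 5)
This statement is not proved in the paper at all: it is imported verbatim as Theorem~2 of \citep{goel2019learning}, and the present paper only invokes it as a black box (e.g.\ in the proof of Theorem~\ref{thm:srbm1}). Your reconstruction follows essentially the same greedy-covariance template used in that cited work (and in \citep{bresler2019learning}): the law-of-total-variance potential $\E[\Var(X_u\mid X_S)]$ to bound the number of greedy additions by $O(1/\tau^2)$, Lemma~\ref{lem:cond} to certify that unrecovered two-hop neighbors keep the average conditional covariance above $2\tau$, the $\Pr(X_S=x_S)\ge\delta^{|S|}$ bucket lower bound for concentration, and the bootstrap that breaks the circularity between the size bound and the good event. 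I see no genuine gap; the only points needing polish in a full write-up are (i) reading Lemma~\ref{lem:cond} as applying to two-hop neighbors $u,v$ (as intended, not literally ``any'' $u,v$), and (ii) noting that the $\widehat{\cov}^{\avg}$ estimator requires concentration of the empirical bucket weights as well as of the per-bucket conditional means, both of which are routine.
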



\subsection{Prediction from Distribution Learning}
Here we will present our algorithm for learning the supervised RBM followed by a proof of its correctness. Instead of learning the label function directly, we will instead first learn the underlying generative model of $X$ conditioned on a particular value of the label and use this knowledge to predict $Y$. 

\begin{theorem} \label{thm:srbm_main}
Given a supervised RBM satisfying Assumption \ref{ass:srbm1}, \ref{ass:srbm2}, \ref{ass:srbm3} and \ref{ass:srbm4}, there exists an algorithm with sample complexity $m = n^2\exp(\lambda)^{\exp(O(\lambda))}(1/\alpha)^{O(1)} (1/\beta)^{O(1)} \log(n/\delta)/\epsilon^2$ and runtime $poly(m)$ returns hypothesis $h$ such that,
\[
\E[\ell(h(X), Y] - \E[\ell(h^*(X), Y] \le \epsilon
\]
where $\ell$ is the logistic loss and $h^*$ is the minimizer of the logistic loss. 
\end{theorem}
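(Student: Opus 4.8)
The plan is to implement the three-step strategy from the overview: (1) learn the two-hop (Markov) neighborhood $\mathcal{N}(i)$ of every visible unit \emph{conditionally on the label}; (2) estimate the local conditional laws of $X_i$ given $X_{\mathcal{N}(i)}$ and $Y$; and (3) recover $\E[Y \mid X]$ by a Bayes-rule computation that avoids partition functions, finishing with a one-parameter logistic regression. For Step~1, fix $Y = y$: the law $\Pr(X=x,H=h \mid Y=y) \propto \exp(\langle x, Wh\rangle + \langle b^{(1)}, x\rangle + \langle b^{(2)} + wy, h\rangle)$ is a ferromagnetic RBM (since $W \ge 0$ by Assumption~\ref{ass:srbm1}) with arbitrary external field, and by Assumption~\ref{ass:srbm2} at least one choice of $y$ — say, WLOG, $y = 1$, running the procedure for both values in general — makes it satisfy the hypotheses of Theorem~\ref{thm:main}. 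Discarding all but the $\gtrsim \beta m / 2$ samples with $Y = 1$ (which exist with high probability by a Chernoff bound using Assumption~\ref{ass:srbm3}) and running Algorithm~\ref{algo:rbmferrosup}, Theorem~\ref{thm:main} — via Lemma~\ref{lem:cond}, whose conditional covariance lower bound needs only $\alpha > 0$ and no extra nondegeneracy — recovers $\widehat{\mathcal{N}}(i) = \mathcal{N}(i)$ for every $i$, where $d_2 := \max_i |\mathcal{N}(i)|$ and the number of retained samples needed is of the form $(\text{function of }\lambda, \alpha) \cdot \log(n_1/\delta)$. The key structural point is that the two-hop neighborhood graph depends only on $\supp(W)$, so it is \emph{the same} for the conditional models under $Y = 1$ and $Y = -1$; thus $\widehat{\mathcal{N}}(i)$ is a valid (graph) Markov blanket for $X_i$ in both conditional models, even though only one of them is assumed sparse.

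\emph{Step 2.} For each $i$ and each $y \in \{\pm 1\}$, let $\hat h_{i,y}$ be the empirical conditional expectation $\widehat{\E}[X_i \mid X_{\widehat{\mathcal{N}}(i)}, Y = y]$, truncated to $[-\tanh\lambda, \tanh\lambda]$. Crucially, the \emph{visible-side} sparsity bound $\sum_j W_{ij} + |b^{(1)}_i| \le \lambda$ in Assumption~\ref{ass:srbm2} holds \emph{for both} values of $y$, so (i) $|\E[X_i \mid X_{\sim i}, Y = y]| \le \tanh\lambda < 1$ and (ii) by (the proof of) Lemma~\ref{lem:spin-freedom} the conditional law $P_y$ has $\delta_{P_y}(d) \ge (1 - \tanh\lambda)^d$; hence Lemma~\ref{lem:finite-regression} applies to \emph{both} conditional models even though the $Y = -1$ model need not itself be sparse, and controls $\E_{P_y}[(\tanh^{-1}(\hat h_{i,y}) - \tanh^{-1}(\E_{P_y}[X_i \mid X_{\sim i}]))^2]$ at a $1/m_y$ rate with $m_y \gtrsim \beta m$. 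Feeding the $\hat h_{i,y}$ into Algorithm~\textsc{DistributionFromPredictors} and arguing exactly as in Lemma~\ref{lem:dist-from-predictors} and Lemma~\ref{lemma:dist-recovery-known} with $P$ replaced by $P_y$ yields, for a target accuracy $\epsilon'$, coefficients $\hat w_{S,y}$ with $\sum_{|S| \ge 1} |\hat w_{S,y} - w_{S,y}| \le \epsilon'$ once $m \gtrsim n_1^2 \cdot (\text{function of }\lambda,\alpha,\beta) \cdot \log(n_1/\delta)/\epsilon'^2$, where $w_{S,y}$ is the degree-$\ge 1$ part of the MRF potential $\log P_y$; set $\hat g_y(x) := \sum_{|S| \ge 1} \hat w_{S,y} x_S$.

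\emph{Step 3.} Marginalizing out $H$ gives $\log P_y(x) = \langle b^{(1)}, x\rangle + \sum_j \log(2\cosh(b^{(2)}_j + y w_j + (W^\top x)_j)) - \log Z_y$, so Bayes' rule yields $\E[Y \mid X = x] = \tanh(\tfrac12(g_1(x) - g_2(x)) + c)$ with $g_y$ the MRF potential of $\log P_y$ and $c = \tfrac12 \log \tfrac{\Pr[Y=1] Z_2}{\Pr[Y=-1] Z_1}$; by Fact~\ref{fact:logistic-loss} the logistic-loss minimizer is $h^*(x) = \tanh^{-1}\E[Y \mid X = x] = \tfrac12(\tilde g_1(x) - \tilde g_2(x)) + b^*$, where $\tilde g_y$ is the degree-$\ge 1$ part of $g_y$ and $|b^*| \le B$ by Assumption~\ref{ass:srbm4}. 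Since Step~2 gives $\E_X|\hat g_y(X) - \tilde g_y(X)| \le \sum_{|S| \ge 1}|\hat w_{S,y} - w_{S,y}| \le \epsilon'$, we fit the remaining constant: over the full sample set $\hat b := \argmin_{|b| \le B} \widehat{\E}[\ell(\tfrac12(\hat g_1(X) - \hat g_2(X)) + b, Y)]$ and output $h(X) := \tfrac12(\hat g_1(X) - \hat g_2(X)) + \hat b$. The predictors $\tfrac12(\hat g_1 - \hat g_2) + b$ range over a bounded interval of width $M := \sum_y \sum_S |\hat w_{S,y}| + 2B = n_1 \cdot (\text{function of }\lambda,\alpha)$, and $\ell$ is $2$-Lipschitz (Fact~\ref{fact:logistic-loss}), so uniform convergence over this one-parameter class bounds the empirical-vs-population gap by $O(M \sqrt{\log(1/\delta)/m})$. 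Combining this with $\E[\ell(\tfrac12(\hat g_1(X) - \hat g_2(X)) + b^*, Y)] \le \E[\ell(h^*(X), Y)] + 2\E_X|\tfrac12(\hat g_1(X) - \hat g_2(X)) + b^* - h^*(X)| \le \E[\ell(h^*(X), Y)] + 2\epsilon'$ (using $2$-Lipschitzness and $b^* \in [-B,B]$) and the standard ERM inequality $\E[\ell(h(X), Y)] \le \E[\ell(\tfrac12(\hat g_1(X) - \hat g_2(X)) + b^*, Y)] + 2 \cdot O(M\sqrt{\log(1/\delta)/m})$ gives $\E[\ell(h(X), Y)] - \E[\ell(h^*(X), Y)] \le \epsilon$ once $\epsilon'$ and $m$ are chosen as in the theorem; a union bound over the $O(1)$ failure events of Steps~1--3 completes the proof, and tracking all parameter dependencies yields the claimed sample complexity.

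\emph{Main obstacle.} The delicate content is concentrated in Steps~1--2: one must argue that the Markov structure needed for \emph{both} conditional models can be extracted from the single \emph{sparse} conditional model $P_1$ (using that the two-hop graph depends only on $\supp W$), and that Lemma~\ref{lem:finite-regression} still controls the local conditional-law estimates under $P_{-1}$ despite that model possibly being non-sparse (which works only because the visible-side sparsity in Assumption~\ref{ass:srbm2} is $y$-independent). One then has to carefully propagate the $\SKL$/Fourier-$\ell_1$ distribution-learning error through the Bayes-rule reduction into excess logistic loss for predicting $Y$, keeping track of the contiguity factors $\delta_{P_y}(d_2)^{-1}$, the $n_1^2$ factor from summing per-node errors, and the $1/\beta$ losses from conditioning on the label, so that everything assembles into the stated bound $m = n_1^2 \exp(\lambda)^{\exp(O(\lambda))} (1/\alpha)^{O(1)} (1/\beta)^{O(1)} \log(n_1/\delta)/\epsilon^2$ with runtime $\mathrm{poly}(m)$.
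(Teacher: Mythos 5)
Your three-step plan is the same as the paper's (conditional structure learning, local conditional-law estimation via \textsc{DistributionFromStructure}/Lemma~\ref{lemma:dist-recovery-known}, then Bayes rule plus a one-parameter logistic fit for the missing constant), and your Steps~2 and~3 match the paper's argument essentially line for line --- including the genuinely important observation that only the \emph{visible-side} part of Assumption~\ref{ass:srbm2} (which is $y$-independent) is needed for Lemma~\ref{lem:spin-freedom} and Lemma~\ref{lem:finite-regression}, so the non-sparse conditional model can still be estimated locally once its Markov blankets are known.

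The gap is in Step~1. Assumption~\ref{ass:srbm2} only guarantees the hidden-side $\ell_1$ bound for \emph{one} (unknown) value of $y$, and your plan is to condition on that value, discard the other samples, and invoke Theorem~\ref{thm:main} on the resulting ferromagnetic RBM. But the algorithm does not know which label value is the sparse one, and ``running the procedure for both values'' does not resolve this: the run on the non-sparse conditional model carries no guarantee from Theorem~\ref{thm:main} (Lemma~\ref{lem:cond}'s lower bound $\alpha^2 e^{-12\lambda}$ needs the $\ell_1$ bounds, so the pruning step could delete true neighbors, or the output could contain spurious ones), and you give no test for deciding which of the two outputs to trust, nor an argument that taking a union or intersection is safe. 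The paper avoids this entirely by defining the \emph{label-averaged} conditional covariance $\cov^{\avg}(u,v\mid S,Y)=\E_{x_S,y}[\cov(u,v\mid X_S=x_S,Y=y)]$ and running a single greedy algorithm on it: the sparse conditioning contributes the lower bound $\beta\alpha^2 e^{-12\lambda}$ for true two-hop neighbors, the other conditioning contributes a \emph{nonnegative} amount by the FKG inequality (both conditional models are ferromagnetic), and for non-neighbors both contributions vanish given the true neighborhood since, as you correctly note, the two-hop graph is determined by $\supp(W)$ and hence shared by both conditional models. Your version could be repaired (e.g., by adopting the averaging trick, or by a verification step certifying which candidate neighborhood is a valid Markov blanket under both conditionals), but as written the structure-recovery step is not proven correct. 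A secondary, non-fatal point: in Step~3 your uniform-convergence bound scales with the full range $M=\sum_{y,S}|\hat w_{S,y}|+2B$ of the predictor, whereas the paper observes that only the scalar $b$ varies over the hypothesis class, so the Rademacher complexity is $O(B/\sqrt{m})$; your looser bound still fits inside the claimed sample complexity but is wasteful.
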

\begin{remark}\label{rmk:ferromagnetic-advantage}
For an example where this algorithm is better than if we have no distributional assumptions, observe that we can construct a ferromagnetic RBM where $\E[Y | X]$ is a sparse parity function by adapting in a straightforward way the reduction used in the proof of the part of Theorem~\ref{thm:net-hardness} with bounded $\lambda$ (the use of $\tanh$ as opposed to $f_{\beta}$ in that construction is not fundamental, or we can use a finite version of Lemma~\ref{lem:express}), since the hidden units in that proof all have nonnegative weights. It's clear why Algorithm~\textsc{LearnSupervisedRBMNBhd} is better than an algorithm which doesn't know the input distribution: under the true input distribution, the visible units involved in the parity are correlated so the algorithm can find them, which makes learning the sparse parity easy.
\end{remark}


Our main algorithm can be broken down into three main steps: 1) Use greedy maximization (similar to Algorithm 1 of \cite{goel2019learning}) to first learn the two-hop neighborhood $\mathcal{N}(i)$ of each observed variable $i$ w.r.t. the hidden layer conditioned on the label, 2) For each observed variable $X_i$, learn the distribution for  $X|Y = y$ for $y = \pm 1$, and 3) Use the estimated distribution to compute $\E[Y|X]$. 

\paragraph{Structure Learning}
For notation simplicity, we will overload notation and represent $\cov^\avg(u,v|S, Y) = \E_{x_S, y}[\cov(u,v|X_S = x_S, Y=y)]$ where $\cov(u,v|X_S = x_S, Y=y) = \E[X_uX_v| X_S = x_S, Y = y] - \E[X_u|X_S = x_S, Y = y]~\E[X_v | X_S =x_S, Y = y]$. Then for structure learning, our algorithm essentially follows Algorithm 1 of \cite{goel2019learning} with the slight modification of conditioning w.r.t. $Y$.

\begin{theorem}\label{thm:srbm1}
Consider $m$ samples $\mathcal{S}$ drawn from a supervised RBM satisfying Assumption \ref{ass:srbm1}, \ref{ass:srbm2} and \ref{ass:srbm3}. For $\tau = \frac{\beta\alpha^2}{2}\exp(-12 \lambda)$ and $\delta = \exp(-2\lambda)/2$, with probability $1 - \zeta$, \\$\textsc{LearnSupervisedRBMNbhd}(u, \tau, \mathcal{S})$ outputs {\em exactly} the two-hop neighbors of observed variable $u$ w.r.t. the hidden layer, with
\begin{align*}
m \ge \Omega\left(\left(\log(1/\zeta) + T^*\log(n)\right)\frac{2^{2T^*}}{\tau^2\beta \delta^{2T^*}}\right)\text{ and } T^* = \frac{8}{\tau^2}.
\end{align*}
Moreover, the algorithm runs in time $O(T^* M n)$.
\end{theorem}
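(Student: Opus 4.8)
\textbf{Proof proposal for Theorem~\ref{thm:srbm1}.} The plan is to reduce to the fully-observed ferromagnetic RBM structure-learning guarantee of \cite{goel2019learning} (our Theorem~\ref{thm:main}) by conditioning on the label. Fix a visible node $u$ and recall that for each $y \in \{\pm 1\}$ the conditional law $\Pr[X = x, H = h \mid Y = y] \propto \exp(\langle x, Wh\rangle + \langle b^{(1)}, x\rangle + \langle b^{(2)} + wy, h\rangle)$ is an RBM with non-negative interaction matrix $W$ (Assumption~\ref{ass:srbm1}) and arbitrary external fields on both layers; in particular it is ferromagnetic, so by the FKG inequality $\cov(X_a, X_b \mid X_S = x_S, Y = y) \ge 0$ for every $a,b$, every $S$, and every $x_S$ (the same inequality survives marginalizing out $H$ and further conditioning on visible spins). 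By Assumption~\ref{ass:srbm2} there is a value $y^\ast \in \{\pm 1\}$ for which this conditional model additionally satisfies the $\lambda$-sparsity hypotheses of Theorem~\ref{thm:main}, so Lemma~\ref{lem:cond} applies to it: for any pair $a,b$ which are two-hop neighbors with respect to the hidden layer — a property that does not depend on $y$, since it is a statement about the support of $W$ — and any $S \subseteq [n_1]\setminus\{a,b\}$ and any $x_S$, we have $\cov(X_a, X_b \mid X_S = x_S, Y = y^\ast) \ge \alpha^2 \exp(-12\lambda)$.

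Next I would pass to the label-averaged quantity the algorithm actually estimates. Writing $\cov^\avg(u,v \mid S, Y) = \sum_y \Pr[Y = y]\, \E_{x_S \mid Y = y}[\cov(u,v \mid X_S = x_S, Y = y)]$, non-negativity of every summand (FKG above) lets me keep only the $y = y^\ast$ term, and Assumption~\ref{ass:srbm3} gives $\Pr[Y = y^\ast] \ge \beta$; hence whenever $v$ is a two-hop neighbor of $u$ with $v\notin S$ we get $\cov^\avg(u,v \mid S, Y) \ge \beta \alpha^2 \exp(-12\lambda) = 2\tau$. This single step is the only place where the mismatch between ``$W$ is ferromagnetic'' and ``sparsity holds only after conditioning on one label value'' is used, and it is exactly what ferromagneticity buys: averaging over $y$ can never cancel the signal coming from the good value $y^\ast$. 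Conversely, once $S$ contains the full two-hop neighborhood $\mathcal{N}(u)$, the Markov property applied inside each conditional model (which has the same visible-layer Markov blankets regardless of $y$, since $Y$ is just a fixed external field on $H$) gives $\cov(u,v \mid X_S = x_S, Y = y) = 0$ for all $v$, $y$, and $x_S$, so the greedy loop of Algorithm~\ref{algo:rbmferrosup} halts; and the pruning step removes precisely the non-neighbors, since at termination $S\setminus\{v\}\supseteq\mathcal N(u)$ for a non-neighbor $v$ (covariance $0$, pruned) while $v\in\mathcal N(u)$ still has $\cov^\avg(u,v\mid S\setminus\{v\},Y)\ge 2\tau$ (not pruned). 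Thus the output is exactly $\mathcal N(u)$ provided all empirical covariances are within $\tau/2$ of their true values.

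With these two structural facts in hand, the combinatorial core is a transcription of \cite{bresler2019learning,goel2019learning}: the potential/conditional-variance argument shows each accepted greedy step strictly decreases the (bounded) potential by an amount controlled by $\tau$ and by the minimum conditional spin probability, which inside any conditional model is at least $\delta = \exp(-2\lambda)/2$ by the computation of Lemma~\ref{lem:spin-freedom}; this caps the number of iterations at $T^\ast = O(1/\tau^2)$ and guarantees all visited sets have size $\le T^\ast$. It then remains to bound the estimation error: $\widehat{\cov}^\avg_{\mathcal S}(u,v\mid S,Y)$ must be within $\tau/2$ of $\cov^\avg(u,v\mid S,Y)$ uniformly over the $\mathrm{poly}(T^\ast)\,n^{O(T^\ast)}$ relevant queries, and since $\Pr[X_S = x_S, Y = y] \ge \beta\,\delta^{|S|}$ for $|S|\le T^\ast$, a Chernoff bound per cell plus a union bound yields $m = \Omega\big((\log(1/\zeta) + T^\ast \log n)\, 2^{2T^\ast}/(\tau^2\beta\,\delta^{2T^\ast})\big)$, the extra factor $1/\beta$ relative to Theorem~\ref{thm:main} coming from the need to see enough samples carrying the rarer label. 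The main obstacle is the first two paragraphs — making the conditioning-on-a-single-label reduction interact correctly with the covariance averaged over \emph{both} labels; once the $2\tau$ lower bound and the vanishing-on-$\mathcal N(u)$ fact are established, the rest is a routine $\beta$-weighted adaptation of the known ferromagnetic-RBM analysis.
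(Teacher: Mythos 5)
Your proposal is correct and follows essentially the same route as the paper's proof: both reduce to the ferromagnetic structure-learning guarantee of Theorem~\ref{thm:main} by establishing exactly the two facts that (i) the label-averaged conditional covariance is at least $\beta\alpha^2\exp(-12\lambda)=2\tau$, using FKG non-negativity of every summand plus Lemma~\ref{lem:cond} applied to the one label value $y^\ast$ for which Assumption~\ref{ass:srbm2} gives sparsity, together with $\Pr[Y=y^\ast]\ge\beta$, and (ii) $\Pr[X_S=x_S,Y=y]\ge\beta\delta^{|S|}$ via the chain rule and the per-spin lower bound. Your additional exposition of the greedy termination, pruning, and concentration steps is just the content the paper delegates wholesale to Theorem~\ref{thm:main}.
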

\begin{proof}
In order to apply Theorem \ref{thm:main} to our setting, the only two properties we need to show are 1) given the conditioning of $Y$, the average conditional covariance bound still holds, that is, $\cov^\avg(u, v| S \cup \{0\})$ is lower bounded for all $S \subseteq [n_2] \backslash \{u,v\}$ for $v$ in the two-hop neighborhood of $u$, 2) $\Pr[X_S = x_S, Y= y]$ for all $x_S$ and $y$. We have,
\begin{align*}
\cov^\avg(u, v| S, Y) &= \sum_{y \in \pm 1} \sum_{x_S \in \{\pm 1\}^{|S|}} \Pr[X_S = x_S, Y = y] \cov(u, v| X_S = x_S, Y= y)
\end{align*}
By Assumption \ref{ass:srbm2}, we know that either for $y = 1$ or $y = -1$ (say $y = 1$ WLOG), the resulting RBM is sparse therefore we can apply Lemma \ref{lem:cond} to the ones conditioned on $y = 1$. Also, we know that $\cov(u,v| X_S = x_S, Y= y) \ge 0$ for all $x_S$ and $y$ due to FKG inequality for ferromagnetic RBMs. This implies that,
\begin{align*}
\cov^\avg(u, v| S, Y) &\ge \sum_{x_S \in \{\pm 1\}^{|S|}} \Pr[X_S = x_S, Y = 1] \cov(u, v| X_S = x_S, Y= 1)\\
&\ge \sum_{x_S \in \{\pm 1\}^{|S|}} \Pr[X_S = x_S, Y = 1] \alpha^2 \exp(-12 \lambda)\\
&\ge \Pr[Y = 1] \alpha^2 \exp(-12 \lambda) \ge \beta \alpha^2 \exp(-12 \lambda).
\end{align*}
For the second part, let us order the elements of $S$ of size $k$ as $s_1, \ldots, s_{k}$, then we have
\begin{align*}
    \Pr[X_S = x_S, Y= y] &= \Pr[Y= y] \times \Pr[X_{s_1} = x_{s_1}| Y = y]\times \Pr[X_{s_2} = x_{s_2}| X_{s_1} = x_{s_1}, Y = y] \times\ldots \\
    &\quad \times\Pr[X_{s_{k}} = x_{s_{k}}| X_{s_1} = x_{s_1}, \ldots, X_{s_{k - 1}} = x_{s_{k - 1}}, Y = y]
\end{align*}
Since $l_1$-norm to the observed nodes is bounded by $\lambda$, by Bresler's property (see \cite{Bresler}) we have $\Pr[X_{s_r} = x_{s_r}|X_{s_1} = x_{s_1}, \ldots, X_{s_{r}} = x_{s_{r}}, Y = y] \ge \delta$. This implies that $\Pr[X_S = x_S, Y= y] \ge \beta \delta^{|S|}$ for all values of $x_S$ and $y$. Now by applying Theorem \ref{thm:main} with the correct parameters, we get the required result.
\end{proof}

\paragraph{Distribution Learning} Given the neighborhood of each observed node, we run Algorithm \textsc{DistributionFromStructure} and subsequently use Lemma \ref{lemma:dist-recovery-known} to guarantee that we obtin the weights of the unnormalized MRFs for distributions $X|Y = y$ for $y \in \{\pm 1\}$ up to epsilon accuracy. More formally,
\begin{lemma}
Let the maximum two-hop degree of any visible node is at most $d_2$ and $\|b^{(1)}\|_{\infty} \le B$. For $\delta > 0$ and $m = \Omega\left(n^2 \left(\frac{2}{(1 - \tanh(\lambda))}\right)^{d_2 + 1}\log(n/\delta)/\epsilon^2\right)$ we have that with probability at least $1 - \delta$, Algorithm~\textsc{DistributionFromStructure} given $m$ samples and $\mathcal{\widehat{N}}(i) = \mathcal{N}(i)$ for every $i$ returns unnormalized MRFs of $X|Y = y$ for $y\in \{\pm 1\}$ with coefficients $\hat{f}^{(y)}_S$  that are close to the coefficients of the true unnormalized MRFs $f^{(y)}_S$, that is, 
\[\sum_S |\hat{f}^{(y)}_S - f^{(y)}_S| \le \epsilon. \]
\end{lemma}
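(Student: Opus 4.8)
The statement is, in effect, Lemma~\ref{lemma:dist-recovery-known} applied separately to the two conditional laws $P_y := \mathcal{L}(X \mid Y = y)$, $y \in \{\pm 1\}$, after a harmless reparameterization of the target accuracy. Recall from the discussion preceding this lemma that $\Pr[X = x, H = h \mid Y = y] \propto \exp(\langle x, Wh\rangle + \langle b^{(1)}, x\rangle + \langle b^{(2)} + yw, h\rangle)$, so $P_y$ is exactly the visible marginal of a ferromagnetic RBM whose hidden-unit external field is shifted by $yw$ but whose weight matrix $W$ and visible field $b^{(1)}$ are unchanged. Consequently $\log P_y(x) = \sum_S f^{(y)}_S x_S$ is a Markov random field whose two-hop neighborhoods coincide with those of the conditioned RBM (they are determined by shared hidden units, independently of $y$); by hypothesis these have size at most $d_2$. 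Moreover, using $W_{ij} \ge 0$ from Assumption~\ref{ass:srbm1} and $|\tanh(t)| \le |t|$, we get $\sum_j |\tanh(W_{ij})| + |b^{(1)}_i| \le \sum_j W_{ij} + |b^{(1)}_i| \le \lambda$ for \emph{both} values of $y$, since the left side does not involve $b^{(2)}$ or $w$. These are the only structural inputs the argument will need.

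First I would split the $m$ samples by label: by Assumption~\ref{ass:srbm3} and a Chernoff bound, with probability at least $1 - \delta/4$ the subsample $\mathcal{S}_y := \{X^{(t)} : Y^{(t)} = y\}$ has size at least $\beta m / 2$ and consists of i.i.d.\ draws from $P_y$ (this is where a factor $1/\beta$, suppressed in the present statement under Assumption~\ref{ass:srbm3}, enters the true sample complexity). Then I would run Algorithm~\textsc{DistributionFromStructure} on $\mathcal{S}_y$ with the neighborhoods $\widehat{\mathcal{N}}(i) = \mathcal{N}(i)$, producing empirical conditional means $f_i(X) = \widehat{\E}[X_i \mid X_{\mathcal{N}(i)}, Y = y]$ and hence coefficients $\hat f^{(y)}_S$. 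To bound the error I would reuse the analysis behind Lemma~\ref{lemma:dist-recovery-known}: Lemma~\ref{lem:spin-freedom} gives $\delta_{P_y}(d) \ge (1 - \tanh(\lambda))^{d}$ using only the bound $\lambda_1 \le \lambda$ above; the law of total expectation gives $|\E[X_i \mid X_{\sim i}, Y = y]| \le \tanh(\lambda) < 1$, so Lemma~\ref{lem:finite-regression} controls $\E[(\tanh^{-1}(f_i) - \tanh^{-1}(\E[X_i \mid X_{\sim i}, Y=y]))^2]$ at rate $1/|\mathcal{S}_y|$; and Lemma~\ref{lem:dist-from-predictors} converts this into $\sum_S |\hat f^{(y)}_S - f^{(y)}_S| \le \SKL(\hat P_y, P_y) \le \sum_S|\hat f^{(y)}_S - f^{(y)}_S|$-style control of the coefficient error. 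Running this with target accuracy $\sqrt{\epsilon}$ (i.e.\ applying the bound of Lemma~\ref{lemma:dist-recovery-known} with $\epsilon$ replaced by $\sqrt{\epsilon}$) needs $|\mathcal{S}_y| = \Omega\!\big(n^2 (2/(1-\tanh(\lambda)))^{d_2+1}\log(n/\delta)/\epsilon^2\big)$, which holds since $|\mathcal{S}_y| \ge \beta m/2$, and yields $\sum_S |\hat f^{(y)}_S - f^{(y)}_S| \le \epsilon$. A union bound over $y \in \{\pm 1\}$, the two Chernoff events, and the two invocations of the distribution-learning guarantee gives the claim with probability at least $1 - \delta$.

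The one point that requires care — and the main (minor) obstacle — is that Assumption~\ref{ass:srbm2} only guarantees that the conditioned RBM is $(\lambda,\lambda)$-bounded for \emph{one} of the two label values, so Lemma~\ref{lemma:dist-recovery-known} cannot be quoted verbatim for the other value of $y$. The resolution is the observation above: the distribution-from-structure guarantee depends only on the $\ell_1$-norm $\lambda_1$ into the \emph{visible} units and on the two-hop degree $d_2$, never on the hidden-unit column norm $\lambda_2$, and both $\lambda_1 \le \lambda$ and the $d_2$ bound hold for either label value; the hidden-layer sparsity $\lambda_2$ was needed only in the structure-recovery step (Theorem~\ref{thm:srbm1}), which is assumed already done here. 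Everything else is routine reparameterization and bookkeeping of the $\beta$ factor.
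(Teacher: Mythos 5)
Your proposal is correct and follows exactly the route the paper intends: the paper offers no written proof for this lemma beyond the remark that one should run \textsc{DistributionFromStructure} and invoke Lemma~\ref{lemma:dist-recovery-known} on the conditional laws $X \mid Y = y$, which is precisely what you do (including the $\epsilon \mapsto \sqrt{\epsilon}$ reparameterization that converts the $\epsilon^4$/$\epsilon^2$ scaling of that lemma into the $\epsilon^2$/$\epsilon$ scaling here). Your observation that Assumption~\ref{ass:srbm2} only certifies the hidden-unit column bound for one label value, but that the distribution-learning step only ever uses the visible-unit bound $\lambda_1 \le \lambda$ (via Lemma~\ref{lem:spin-freedom} and Lemma~\ref{lem:finite-regression}) and the two-hop degree, is a genuinely useful clarification that the paper leaves implicit, as is the explicit bookkeeping of the $1/\beta$ factor from splitting the sample by label.
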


\paragraph{Constructing the Predictor} 
Observe that the joint distribution of $X$ and $Y$ can be represented as,
\[
\Pr[X = x, Y= y] \propto \exp\left(\sum_S f^{(1)}_S x_S \mathbbm{1}[y = 1] + \sum_S f^{(-1)}_S x_S \mathbbm{1}[y = -1] + b^* y\right)
\]
for some $b^*$ and coefficients of the true unnormalized MRFs $f^{(y)}_S$ corresponding to conditioning of $Y= y$. This gives us,
\[
\E[Y|X = x] = \tanh\left(\sum_S \frac{(f^{(1)}_S - f^{(-1)}_S)}{2} x_S + b\right) \approx_\eps \tanh\left(\sum_S \frac{(\hat{f}^{(1)}_S - \hat{f}^{(-1)}_S)}{2} x_S + b\right)
\]
Since we have estimates of $f^{(y)}_S$, to learn the predictor for $Y$ we only need to find $b^*$ which we can find by minimizing $\ell$ snce it is convex. Let  $h_b = \sum_S \frac{(f^{(1)}_S - f^{(-1)}_S)}{2} x_S + b$ and $\hat{h}_b = \sum_S \frac{(\hat{f}^{(1)}_S - \hat{f}^{(-1)}_S)}{2} x_S + b$. We minimize $\hat{E}[\ell(h_b(X), Y)]$ over $b$ and suppose the minimizer is $\hat{b}$. By Fact \ref{fact:logistic-loss}.3,  $\ell(\hat{h}_b(X), Y) \le \ell(h_b(X), Y) + 4\epsilon$. By Fact \ref{fact:logistic-loss}.4, $h_{b^*}$ is the minimizer of the logistic loss.  Then we have,
\[
\hat{\E}[\ell(h_{b}(X), Y)] \le \hat{\E}[\ell(\hat{h}_{b^*}(X), Y)] + 4 \epsilon \le \hat{\E}[\ell(h_{b^*}(X), Y)] + 8 \epsilon.
\]
Last we need a generalization bound that holds for our hypothesis class. For this we bound the Rademacher complexity (see \cite{shalev2014understanding} for more background) of the class of functions $\ell \circ \mathcal{H}$ where $\mathcal{H}:= \{h_b||b| \le B\}$.
\begin{align*}
    \mathcal{R}_m(\ell \circ \mathcal{H}) &\le 2 \mathcal{R}_m(\mathcal{H}) \\
    &= \E_\sigma \left[\sum_{b | |b| \le B} \frac{1}{m} \sum_{i=1}^m \sigma_i h_b(x^{(i)})\right]\\
    &= \E_\sigma \left[\sum_{b | |b| \le B} \frac{1}{m} \sum_{i=1}^m \sigma_i \sum_S (f^{(1)}_S - f^{(-1)}_S) x_S + 2b\right]\\
    &= 2\E_\sigma \left[\sum_{b | |b| \le B} \frac{1}{m} \sum_{i=1}^m \sigma_ib\right]\\
    & = 2 B \E_\sigma \left[ \frac{1}{m}\left| \sum_{i=1}^m \sigma_i \right|\right]\\
    &\le \frac{2B}{\sqrt{m}}.
\end{align*}
Here the first inequality follows from the contraction lemma (see \cite{ledoux2013probability}) and the last from standard properties of Radmeacher variables. Now applying Theorem 26.5 from \cite{shalev2014understanding} we get 
\[
|\E[\ell(h_{b}(X), Y)] - \E[\ell(\hat{h}_{b}(X), Y)]| \le \frac{2B}{\sqrt{m}} + c \sqrt{\frac{\log(1/\delta)}{\sqrt{m}}}
\]
where $c$ is the maximum value of logistic loss by any hypothesis in the class. Observe that by Fact \ref{fact:logistic-loss}.4, logistic loss at $h_{b^*}$ is bounded by a constant. Hence by Lipschitzness, we know that loss anywhere will be bounded by $O(\max(1, B))$. Therefore choosing $m \ge \Omega(B^2 \log(1/\delta)/\epsilon^2)$ suffices to get within $\epsilon$. Combining this with before we get that the loss is within $O(\eps)$ of the best loss.

\paragraph{Proof of Theorem \ref{thm:srbm_main}} First, the algorithm runs \textsc{LearnSupervisedRBMMbhd} for each node to learn the structure of the induced RBM exactly with the given samples \[m_1 = \exp(\lambda)^{\exp(O(\lambda))}(1/\alpha)^{O(1)} (1/\beta)^{O(1)} \log(n/\delta).\] With the structure, we run \textsc{DistributionFromStructure} to learn both the induced RBMs for each conditioning of the label using $m_2 \ge \Omega\left(n^2 \left(\frac{2}{(1 - \tanh(\lambda))}\right)^{d_2 + 1}\log(n/\delta)/\epsilon^2\right)$ samples where $d_2$ is the max 2-hop neighborhood size. Note that the dependence on $\lambda$ is greater in $m_1$ than $m_2$. Subsequently, given the unnormalized mrfs, we run a simple optimization to find the bias term of the predictor using $m_3 \ge \Omega(B^2 \log(1/\delta)/\epsilon^2)$ samples. Combining the learnt mrf and the bias term, we get our hypothesis.

\begin{remark}\label{rmk:same-but-no-ferromagnetic}
If the model is not ferromagnetic, it is also possible and we expect it may be advantageous in some models to still use a similar indirect approach based on Bayes rule for learning a predictor of $Y$, but using the result of Theorem~\ref{thm:node-prediction} instead of the greedy structure recovery method used in this section. The disadvantage of this approach is of course that its runtime for achieving structure recovery is slower.
\end{remark}

\section{Additional Experimental Data}\label{apdx:reference-images}
Figure~\ref{fig:mnist_reference} contains samples generated from the model trained on MNIST images.
\begin{figure}
    \centering
    \includegraphics{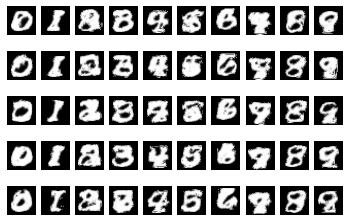}
    \caption{Five i.i.d. samples for each MNIST class, drawn from the trained model by Gibbs sampling.}
    \label{fig:mnist}
\end{figure}
For reference, we also include samples from the true MNIST and FashionMNIST training sets in the same format as Figure~\ref{fig:mnist} and Figure~\ref{fig:fashionmnist}.
\begin{figure}
    \centering
    \includegraphics{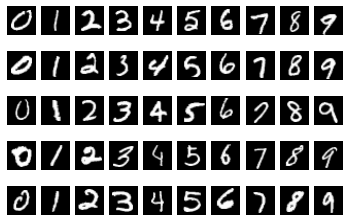}
    \caption{Reference MNIST images chosen randomly from training set.}
    \label{fig:mnist_reference}
\end{figure}
\begin{figure}
    \centering
    \includegraphics{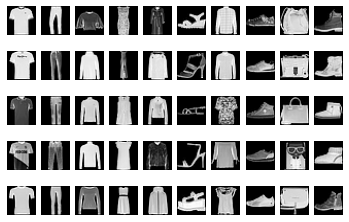}
    \caption{Reference FashionMNIST samples from training set.}
    \label{fig:fashionmnist_reference}
\end{figure}
\end{document}